\documentclass[sigconf,balance=false]{acmart}
\usepackage{popets}
\setcopyright{popets}
\copyrightyear{YYYY}
\acmYear{YYYY}
\acmVolume{YYYY}
\acmNumber{X}
\acmDOI{XXXXXXX.XXXXXXX}
\acmISBN{}
\acmConference{Proceedings on Privacy Enhancing Technologies}
\settopmatter{printacmref=false,printccs=false,printfolios=true}

\usepackage{mathtools}
\usepackage{amsthm}
\usepackage{bm}
\usepackage[noabbrev,capitalise]{cleveref} %
\usepackage{thm-restate}
\usepackage{subfiles} %
\usepackage[inline]{enumitem}
\usepackage{nicematrix}

\AtEndPreamble{%
    \theoremstyle{acmplain}
    \newtheorem{remark}[theorem]{Remark}
    \newtheorem{assumption}[theorem]{Assumption}
}

\usepackage{comment}

\crefname{assumption}{Assumption}{Assumptions}
\Crefname{ALC@unique}{Line}{Lines}

\newenvironment{proofsketch}{\proof}{\endproof}

\usepackage{nicefrac}
\usepackage{pifont}%
\newcommand{\cmark}{\ding{51}}%
\newcommand{\xmark}{\ding{55}}%

\DeclareMathOperator*{\argmin}{argmin} %

\DeclareMathOperator{\Var}{\mathbb{V}}

\newcommand{\expect}[1]{\mathop{{}\mathbb{E}}\left[{#1}\right]}
\newcommand{\condexpect}[2]{\mathbb{E}_{#1}\left[{#2}\right]}

\providecommand{\iprod}[2]{\ensuremath{\left\langle #1,\,#2  \right\rangle}}

\newcommand{\norminf}[1]{\left\lVert{#1}\right\rVert_{\infty}}
\newcommand{\norm}[1]{\left\lVert{#1}\right\rVert}

\newcommand{\indexvar}[3]{{#3}^{\ifthenelse{\equal{#1}{}}{}{\left({#1}\right)}}_{#2}}

\newcommand{\loss}[1]{F_{#1}}
\newcommand{\modelp}[2]{x'_{#1}}

\newcommand{\gradient}[2]{\indexvar{#2}{#1}{g}}
\newcommand{\model}[2]{\indexvar{#2}{#1}{x}}

\newcommand{\avgmodel}[1]{\indexvar{#1}{}{\bar{x}}}

\newcommand{\localloss}[1]{\indexvar{}{#1}{f}}

\newcommand{\datapoint}[2]{\indexvar{#2}{#1}{\xi}}
\newcommand{\datapointp}[2]{\indexvar{#2}{#1}{\dot{\xi}}}

\def\R{\mathbb{R}}

\newcommand{\gossmatrix}[2]{\indexvar{#2}{#1}{W}}
\newcommand{\nodedegree}[1]{d_{#1}}
\newcommand{\nodeset}[0]{\mathcal{V}}
\newcommand{\neighbors}[2]{\indexvar{#2}{#1}{\Gamma}}

\newcommand{\sampledloss}[0]{f}

\providecommand{\1}{\bm{1}}
\newcommand{\transpose}[1]{\prescript{\intercal}{}{#1}}
\newcommand{\datadistrib}[1]{\mathcal{D}_{#1}}
\newcommand{\inputspace}[0]{\Omega}
\newcommand{\modelspace}[0]{\mathbb{R}^\modelsize}
\newcommand{\optimum}[0]{x^*}
\newcommand{\gradientnoise}[1]{\omega_{#1}^2}
\newcommand{\gradientnoiseavg}[0]{\bar{\omega}^2}
\newcommand{\gradientnorm}[1]{\vartheta^2_{#1}}
\newcommand{\gradientnormavg}[0]{\bar{\vartheta}^2}
\newcommand{\lr}[0]{\gamma}
\newcommand{\consdistance}[1]{\Xi_{#1}}
\newcommand{\modelmatrix}[2]{\indexvar{#2}{#1}{X}}

\newcommand{\set}[1]{\{#1\}}
\newcommand{\pview}[3]{\mathcal{O}_{#1}(\indexvar{#3}{}{\mathcal{A}}(\mathcal{#2}))}
\newcommand{\renyi}[2]{D_\alpha\left(#1\|#2\right)}

\newcommand{\privacyzsum}[3]{\indexvar{#3}{}{\epsilon}\left(#1,#2\right)}
\newcommand{\privacyzsumsingleround}[3]{\indexvar{#3}{}{g}(#1,#2)}

\newcommand{\adjacentdatasetgradientbound}[0]{\Delta}

\newcommand{\gossmessage}[3]{\indexvar{#3}{#1\to #2}{m}}
\newcommand{\sys}{\textsc{Zip-DL}\xspace}
\newcommand{\ynoise}[3]{\indexvar{#3}{#1\to #2}{Y}}
\newcommand{\ynoisestd}[1]{\varsigma_{#1}}
\newcommand{\privacyparameter}[1]{\ynoisestd{#1}}
\newcommand{\ynoisevar}[1]{\ynoisestd{#1}^2}
\newcommand{\zsumnoise}[3]{\indexvar{#3}{#1\to #2}{Z}}

\newcommand{\zsumstdmulti}[3]{\indexvar{#3}{#1\to #2}{\sigma}}
\newcommand{\zsumvarmulti}[3]{(\zsumstdmulti{#1}{#2}{#3})^2}

\newcommand{\modelsize}[0]{d}%
\usepackage[abbreviations]{foreign}
\usepackage{booktabs}

\newcommand{\covarmatrix}[2]{\indexvar{#2}{#1}{\Sigma}}
\newcommand{\modelsvirtual}[2]{\indexvar{#2}{#1}{\hat{X}}}
\newcommand{\modelspvirtual}[2]{\indexvar{#2}{#1}{\hat{\dot{X}}}}
\newcommand{\nodesetvirtual}[0]{\hat{\nodeset}}
\newcommand{\gossmatrixvirtual}[2]{\indexvar{#2}{#1}{\hat{\gossmatrix{}{}}}}
\newcommand{\mixingmatrix}[0]{\hat{M}}
\newcommand{\zsumnoisevirtual}[2]{\indexvar{#2}{#1}{\hat{Z}}}
\newcommand{\ynoisevirtual}[2]{\indexvar{#2}{#1}{\hat{Y}}}

\newcommand{\neighborsvirtual}[2]{\indexvar{#2}{#1}{\hat{\neighbors{}{}}}}
\newcommand{\zsumlinearcorrelation}[2]{\indexvar{#2}{#1}{\hat{C}}}

\newcommand{\unnoisedmodelvirtual}[2]{\indexvar{#2}{#1}{\hat{\chi}}}
\newcommand{\unnoisedmodelpvirtual}[2]{\indexvar{#2}{#1}{\hat{\dot{\chi}}}}

\newcommand{\zsumnoisetemporal}[2]{\indexvar{#2}{#1}{\tilde{Z}}}
\newcommand{\ynoisetemporal}[2]{\indexvar{#2}{#1}{\tilde{Y}}}
\newcommand{\gossmatrixtemporal}[2]{\indexvar{#2}{#1}{\tilde{W}}}
\newcommand{\zsumlinearcorrelationtemporal}[2]{\indexvar{#2}{#1}{\tilde{C}}}

\colorlet{colorsys}{red}
\colorlet{colormuffliato}{blue}
\colorlet{colornonoise}{violet}

\usepackage{comment}

\definecolor{ForestGreen}{rgb}{0.13, 0.55, 0.13}

\usepackage{etoolbox}
\usepackage{marginnote}

\usepackage{tikz}
\usepackage{pgfplots}
\usepackage{xstring}
\usepackage[eulergreek]{sansmath}
\usepackage{siunitx}
\usepackage{mdframed}

\pgfplotsset{compat=newest}
\usepgfplotslibrary{external,units,colorbrewer,groupplots,fillbetween}
\tikzexternalize
\tikzsetexternalprefix{figures/}
\tikzset{external/mode=list and make}
\usetikzlibrary{patterns,fit,shapes.arrows,shapes.misc,positioning}

\usepackage{etoolbox}
\usepackage{xparse}
\usetikzlibrary{external}

\makeatletter
\begingroup\endlinechar=-1\relax
\everyeof{\noexpand}%
\edef\x{\endgroup\def\noexpand\homepath{%
        \@@input|"kpsewhich --var-value=HOME" }}\x
\makeatother

\newcommand{\inputplot}[2]{%
\includegraphics{main-figure#2.pdf}
}

\newcommand{\newgroupwidth}[2]%
{\expandafter\xdef\csname groupwidth#1\endcsname{#2}}

\newcounter{groupwidth}
\newsavebox{\groupwidthbox}
\makeatletter
{\edef\groupnumber{#1}%
	\stepcounter{groupwidth}%
	\@ifundefined{groupwidth\thegroupwidth}{\pgfmathsetlengthmacro{\mywidth}{\linewidth/\groupnumber}}%
	{\expandafter\let\expandafter\mywidth\csname groupwidth\thegroupwidth\endcsname}%
	\begin{lrbox}{\groupwidthbox}%
		\tikzset{/pgfplots/width={\mywidth}}%
		\ignorespaces}%
	{\end{lrbox}%
	\usebox\groupwidthbox
	\pgfmathsetlengthmacro{\mywidth}{\mywidth + (\linewidth - \wd\groupwidthbox)/\groupnumber}
	\immediate\write\@auxout{\string\newgroupwidth{\thegroupwidth}{\mywidth}}}
\makeatother
\usepackage{./acronym}
\acrodef{DL}{decentralized learning}
\acrodef{ML}{machine learning}
\acrodef{D-PSGD}{decentralized parallel stochastic gradient descent}
\acrodef{FL}{federated learning}
\acrodef{SGD}{stochastic gradient descent}
\acrodef{IID}{independent and identically distributed}
\acrodef{non-IID}{non independent-and-identically-distributed}
\acrodef{RMSE}{root mean square error}
\acrodef{PNDP}{Pairwise-Network Differential Privacy}
\acrodef{DP}{Differential Privacy}
\acrodef{AUC}{Area Under the Curve}
\acrodef{MIA}{Membership-Inference Attack}
\acrodef{MIAs}{Membership-Inference Attacks}
\acrodef{PNDP}{Pairwise Network Differential Privacy}
\acrodef{GN}{Group Normalization}
\acrodef{TPR}{True Positive Rate}
\acrodef{FPR}{False Positive Rate}
\acrodef{FCN}{Fully Connected Network} %

\newcommand{\cifar}{CIFAR-10\xspace}
\newcommand{\movielens}{MovieLens\xspace}
\newcommand{\sgd}{{\xspace}\ac{SGD}\xspace}
\newcommand{\dpsgd}{{\xspace}\ac{D-PSGD}\xspace}

\newcommand{\niid}{\ac{non-IID}\xspace}

\newcommand{\gn}{{\xspace}\ac{GN}\xspace} \graphicspath{ {figures/} }

\usepackage{subcaption} %

\newboolean{showcomments}
\newboolean{showrevision}
\newboolean{showsimplifiedproofs}
\newboolean{singlecolumnproof}

\setboolean{showcomments}{true}
\setboolean{showrevision}{true}
\setboolean{showsimplifiedproofs}{false} 	%

\setboolean{singlecolumnproof}{true} 	%

\ifthenelse{\boolean{showcomments}}
{%
    \newcommand{\annote}[3]{{\color{#3}%
                \colorbox{#3}{\bfseries\sffamily\tiny\textcolor{white}{#2}}
                \color{#3}
                $\blacktriangleright${\em #1}$\blacktriangleleft$}%
    }%
}{%
    \newcommand{\annote}[3]{}%
}%

\newcommand{\simplifiedproof}[2]{
    \ifthenelse{\boolean{showsimplifiedproofs}}{{#1}}{{#2}}
}

\newcommand{\singlecolumnproof}[2]{
    \ifthenelse{\boolean{singlecolumnproof}}{{#1}}{{#2}}
}

\newcommand{\todo}[1]{\annote{#1}{TODO}{blue}}

\newrobustcmd{\tagbox}[2]{\colorbox{#1}{\bfseries\sffamily\footnotesize\textcolor{white}{#2}}}
\newrobustcmd{\annoteMargin}[3]{{\hspace{-.5mm}\marginnote{\tagbox{#3}{#2}}
            \color{#3}{#1}%
        }}

\newrobustcmd{\annoteMarginRevision}[3]{{\hspace{-.5mm}\marginnote{\tagbox{#3}{#2}}
        \color{#3}{#1}%
    }}

\ifthenelse{\boolean{showcomments}}{}{%
    \renewcommand{\annoteMargin}[3]{#1}
}%
\ifthenelse{\boolean{showrevision}}{}{%
    \renewcommand{\annoteMarginRevision}[3]{#1}
}%

\usepackage{algorithm}
\usepackage{algpseudocode}

\begin{document}

\title{Low-Cost Privacy-Preserving Decentralized Learning}

\author{Sayan Biswas}
\affiliation{%
	\institution{EPFL, Switzerland}
	\city{}
	\state{}
	\country{}
}

\author{Davide Frey}
\affiliation{%
	\institution{Univ Rennes, Inria, CNRS, IRISA, France}
	\city{}
	\state{}
	\country{}
}

\author{Romaric Gaudel}
\affiliation{%
	\institution{Univ Rennes, Inria, CNRS, IRISA, France}
	\city{}
	\state{}
	\country{}
}

\author{Anne-Marie Kermarrec}
\affiliation{%
	\institution{EPFL, Switzerland}
	\city{}
	\state{}
	\country{}
}

\author{Dimitri Lerévérend}
\affiliation{%
	\institution{Univ Rennes, Inria, CNRS, IRISA, France}
	\city{}
	\state{}
	\country{}
}
\authornote{Corresponding author: \texttt{first.last}@inria.fr}

\author{Rafael Pires}
\affiliation{%
	\institution{EPFL, Switzerland}
	\city{}
	\state{}
	\country{}
}

\author{Rishi Sharma}
\affiliation{%
	\institution{EPFL, Switzerland}
	\city{}
	\state{}
	\country{}
}

\author{François Taïani}
\affiliation{%
	\institution{Univ Rennes, Inria, CNRS, IRISA, France}
	\city{}
	\state{}
	\country{}
}
\renewcommand{\shortauthors}{Biswas et al.}

\begin{abstract}
  \Ac{DL} is an emerging paradigm of collaborative
  machine learning that enables nodes in a network to train models
  collectively without sharing their raw data or relying on a central
  server. This paper introduces \sys, a privacy-aware \ac{DL} algorithm
  that leverages correlated noise to achieve robust privacy against
  local adversaries while ensuring efficient convergence at low
  communication costs.  By progressively neutralizing the noise added
  during distributed averaging, \sys combines strong privacy
  guarantees with high model accuracy.  Its design requires only one
  communication round per gradient descent iteration, significantly
  reducing communication overhead compared to competitors. 
  We establish theoretical bounds on both convergence speed and privacy guarantees. Moreover,
  extensive experiments demonstrating \sys's practical applicability
  make it outperform state-of-the-art methods in the accuracy
  vs. vulnerability trade-off. Specifically, \sys{}
\begin{enumerate*}[label=\emph{(\roman*)}] 
	\item reduces membership-inference attack success rates by up to 35\% compared to baseline \ac{DL}, 
	\item decreases attack efficacy by up to 13\% compared to competitors offering similar utility, and 
	\item achieves up to 59\% higher accuracy to completely nullify a basic attack scenario, compared to a state-of-the-art privacy-preserving approach under the same threat model. 
\end{enumerate*}
These results position \sys as a practical and efficient solution for privacy-preserving decentralized learning in real-world applications.

\end{abstract}

\keywords{decentralized learning, differential privacy, correlated noises}

\maketitle

\acresetall
\section{Introduction}

\Ac{DL} allows a collection of devices %
to train a global model collaboratively without sharing raw training data.
This approach has drawn increasing attention from both academia~\cite{beltran2022decentralized} and industry, showcasing its potential across various sectors, including healthcare~\cite{lu2020decentralized, tian2023robust} and autonomous vehicles~\cite{chen2021bdfl}.
In \ac{DL}, each device (henceforth \emph{node})
\begin{enumerate*}[label=\emph{(\roman*)}]
  \item trains a local model using its own data;
  \item exchanges this model with those of its neighbors according to the underlying communication topology; and
  \item averages its current local model with the models received from neighbors.
\end{enumerate*}
This iterative process repeats until convergence is reached~\cite{ormandiGossipLearningLinear2013,lian2018asynchronous}.
Although training data never leaves participating nodes in \ac{DL}, the models that nodes exchange still leak information.
Exploiting these leaks, an honest-but-curious attacker can mount privacy attacks against participants to reveal sensitive attributes of their data.
For instance, an attacker can mount a \ac{MIA}~\cite{shokriMembershipInferenceAttacks2017,carliniMembershipInferenceAttacks2022}
that can reveal whether a particular sample belongs to the training set of a node.

\begin{table*}
  \caption{
    Position of our work compared to previous approaches. %
  }\label{tab:comparison}
  \begin{center}
    \begin{tabular}{| c | c | c | c | c |}
      \toprule
      \textsc{Approach}                            &
      \textsc{Masking (RSS-NB)}                    &
      \textsc{RSS-LB}                              &
      \textsc{Muffliato}                           &
      \textbf{\sys}
      \\
                                                   &
      \cite{gadePrivateOptimizationNetworks2018}   &
      \cite{gadePrivateOptimizationNetworks2018}   &
      \cite{cyffersMuffliatoPeertoPeerPrivacy2022} &
      \textbf{(ours)}
      \\
      \midrule
      Formal privacy guarantees                    &
      {\color{ForestGreen} \cmark}                 &
      {\color{red} \xmark}                         &
      {\color{ForestGreen} \cmark}                 &
      {\color{ForestGreen} \cmark}
      \\
      No P2P coordination                          &
      {\color{red} \xmark}                         &
      {\color{ForestGreen} \cmark}                 &
      {\color{ForestGreen} \cmark}                 &
      {\color{ForestGreen} \cmark}
      \\
      One averaging round                          & %
      {\color{ForestGreen} \cmark}                 & %
      {\color{ForestGreen} \cmark}                 & %
      {\color{red} \xmark}                         & %
      {\color{ForestGreen} \cmark}%
      \\
      Communication cost                           & %
      {\color{orange} Moderate}                    & %
      {\color{ForestGreen} Low}                    & %
      {\color{red} High}                           & %
      {\color{ForestGreen} Low}   %
      \\
      Impact on Convergence Rate                      & %
      {\color{ForestGreen} None}                   & %
      {\color{red} High}                           & %
      {\color{red} High}                           & %
      {\color{ForestGreen} Low}%
      \\
      \bottomrule
    \end{tabular}
  \end{center}
\end{table*}

\emph{\ac{DP}}~\cite{dwork:2006:calibrating} is a widely-used measure of formal privacy guarantees that has been applied to the design of privacy-preserving \ac{DL}~\cite{sabaterAccurateScalableVerifiable2022}.
\ac{DP} strategically adds noise to data so that the inclusion or exclusion of a data point becomes much harder to detect.
However, \ac{DP} typically assumes a worst-case threat model in which an attacker can access all messages transiting on the network.
As a result, although it provides robust privacy guarantees, \ac{DP} tends to require high noise levels that disrupt the learning process and severely impair the system's utility.

Following existing literature~\cite{cyffersMuffliatoPeertoPeerPrivacy2022,gadePrivateOptimizationNetworks2018}, we assume a representative threat model in which local honest-but-curious attackers can only observe the messages they receive. An attack is furthermore considered successful only if the obtained information can be linked to its contributing participant. This model covers a wide range of scenarios in which network communication is protected. Still, nodes participating in the distributed learning process can exploit their partial knowledge of the system to breach the privacy of other participants.
To specifically address this threat model, Muffliato~\cite{cyffersMuffliatoPeertoPeerPrivacy2022} introduces \emph{\ac{PNDP}}.
In contrast to \ac{DP} that captures a global privacy measure, \ac{PNDP} tracks privacy loss at a finer level, between pairs of nodes. 
As a result, \ac{PNDP} lends itself to lower noise levels, faster convergence, and better accuracy. 
Unfortunately, its use so far requires multiple rounds of averaging~\cite{cyffersMuffliatoPeertoPeerPrivacy2022}, leading to high network costs.

This paper explores the use of correlated noise to achieve \ac{PNDP} without significant network costs.
Correlated noise---a natural evolution of noise-based privacy methods---protects individual node inputs while minimizing the impact on model accuracy.
Although systems using correlated noise show promising convergence~\cite{gadePrivateOptimizationNetworks2018}, their privacy implications remain underexplored.
Several approaches using correlated noise have been formulated~\cite{sabaterAccurateScalableVerifiable2022,imtiazCorrelatedNoiseAssistedDecentralized2021,allouahPrivacyPowerCorrelated2024},
but most of them either rely on a trusted aggregator to cancel out the noises~\cite{imtiazCorrelatedNoiseAssistedDecentralized2021,sabaterAccurateScalableVerifiable2022}
or on pairwise coordination between nodes, which comes at a cost either in communication or in utility~\cite{allouahPrivacyPowerCorrelated2024}.

We introduce \sys (\emph{Zero-summing Interference for Privacy-preserving Decentralized Learning}),
a privacy-preserving algorithm that leverages correlated noise in a single communication round while offering formal privacy guarantees.
To the best of our knowledge, \sys (see \Cref{tab:comparison}) is the only approach
\begin{enumerate*}[label=\emph{(\roman*)}]
  \item with formal guarantees that
  \item requires no prior pairwise coordination between nodes, and
  \item only requires a single averaging round per gradient step.
\end{enumerate*}
In addition to \sys, we make the following contributions:
\begin{itemize}
  \item We prove that our approach converges while relying on a single communication round per gradient step.
    This powerful property results from the fact that the sum of the noise added to the communication round is zero.
    Moreover, our analysis shows that the impact of the noise on the convergence rate is negligible compared to state-of-the-art methods.
  \item We provide a formal privacy guarantee of our approach in terms of \acf{PNDP}.
\item We conduct an extensive evaluation study comparing \sys with both a state-of-the-art baseline and standard DL under threshold-based membership inference attacks (\ac{MIA}) on both the \cifar and \movielens datasets.
    Our results show that \sys provides the best trade-off between accuracy and privacy while maintaining low communication overhead. 
    In particular, \sys reduces the success rate of \ac{MIA} by up to 26 percentage points while only entailing a loss of 11 percentage points in test accuracy against baseline \ac{DL}. 
    \sys also improves test accuracy by up to 59\% w.r.t. to the state-of-the-art privacy-preserving baseline of Muffliato when configured to completely nullify a baseline threshold-based attack scenario.
  
\end{itemize}

The paper is organized as follows:
\Cref{sec:model} provides the necessary background and threat model.
\Cref{sec:zsum algorithm and properties} presents the design of \sys and its core properties.
\Cref{sec:convergence rate,sec:PNDP} present the theoretical guarantees of our privacy-preserving algorithm, in terms of convergence rate and privacy respectively.
We present the results of our experimental study in~\Cref{sec:evaluation} before surveying related work in~\Cref{sec:related works} and concluding in \Cref{sec:conclusion}.

\section{Preliminaries}\label{sec:model}
We start by describing the background and threat model considered in our work.
Sections~\ref{subsec:decentralized_opti} and~\ref{Gossip:avg}
introduce respectively general notations and the gossip-based averaging
algorithm used by most \ac{DL} algorithms.  \Cref{subsec:Privacy considerations}
describes some privacy attacks on \ac{DL} and some existing
countermeasures.  Finally, \cref{subsec:threat model} describes our
threat model.

\subsection{Decentralized learning}\label{subsec:decentralized_opti}

We consider a set of $n$ nodes $\nodeset = [\![1,n]\!]$, each owning a model of $\modelsize$ parameters, whose aim is to solve a \ac{DL} problem without sharing raw training data.
While each node, $a \in \nodeset$, stores a local data distribution $\datadistrib{a}$, the goal is to determine the model parameters $\optimum\in \modelspace$ that optimize the learning problem over the local datasets of all participating nodes.
This is done by minimizing an average loss function:
\begin{equation}\label{eqn:DL_obj}
\argmin_{\model{}{} \in \modelspace} \left[\sampledloss(\model{}{}) =\frac{1}{n} \sum_{a = 1}^n \underbrace{\condexpect{\datapoint{}{} \sim \datadistrib{a}}{\loss{a}(\model{}{}; \datapoint{}{})}}_{\localloss{a}(\model{}{})}\right]\hspace{-0.6ex},
\end{equation}
where $f_a(x)$ represents the local objective function associated with node $a$, 
and $F_a(\model{}{}; \datapoint{}{})$ quantifies the prediction loss associated with the model parameters, $\model{}{}$, for the sample, $\datapoint{}{}$, 
potentially encompassing non-convex characteristics.

To solve \cref{eqn:DL_obj}, we proceed in $T$ successive iterations, with each node, $a$, keeping its own local model, $\model{a}{t}$, for each iteration, $t\in[\![0,T]\!]$. The goal is to make the averaged model, $\avgmodel{t} := \frac{1}{n}\sum_{a=1}^{n}\model{a}{t}$, converge to $\optimum$.

The learning process involves collaborative interactions between nodes, which are connected by an underlying communication topology.
At each iteration $t$, each node first trains its model on its local data and then aims to average it with the models of other nodes.
During the averaging step, each node restricts its communication to its neighbors in the communication topology using gossip averaging (Section~\ref{Gossip:avg}).
Yet, sharing only model parameters may still leak sensitive information, thus hurting privacy.

\subsection{Gossip averaging}%
\label{Gossip:avg}

Many \ac{DL} algorithms rely on gossip averaging to estimate and share the average model $\avgmodel{t} := \frac{1}{n}\sum_{a=1}^{n}\model{a}{t}$ at each iteration $t$~\cite{devosEpidemicLearningBoosting2023,barsRefinedConvergenceTopology2023}.
A gossip-averaging operation can consist of multiple successive
rounds. In each averaging round, $s$, the nodes communicate according
to a \emph{gossip matrix}, $\gossmatrix{}{t,s}$, in the following
manner. Each node, $a$, sends a message,
$\gossmessage{a}{v}{t,s} \in \modelspace$, containing the model parameters to each neighbor, $v$. 
Upon receiving it, node $v$ weighs the received model using $\gossmatrix{a,v}{t,s}$ to perform averaging.
In the simplest setting, $\gossmessage{a}{v}{t,s}$ corresponds to the current local estimate of
$\avgmodel{t}$, this estimate is updated to
$\sum_{v\in \nodeset}\gossmatrix{a,v}{t,s}\gossmessage{v}{a}{t,s}$,
and it converges to $\avgmodel{t}$ as $s$ tends to infinity.  We make
the following assumption on $\gossmatrix{}{t,s}$:
\begin{assumption}\label{ass:mixing_matrices}
  All gossip matrices are \emph{symmetric},
  $\transpose{\gossmatrix{}{t,s}} = \gossmatrix{}{t,s}$ and \emph{stochastic},
  $\forall a\in \nodeset, \sum_{v\in \nodeset}\gossmatrix{a,v}{t,s} = 1$.
\end{assumption}

While the symmetry assumption is not always necessary~\cite{devosEpidemicLearningBoosting2023,barsRefinedConvergenceTopology2023}, it is a common assumption for complexity proofs that enables tighter bounds~\cite{koloskovaUnifiedTheoryDecentralized2020,cyffersMuffliatoPeertoPeerPrivacy2022}. 
In our case, it enables convergence and privacy analysis.

We also denote by $\neighbors{a}{t,s}$ the set of neighbors to which node $a$ sends its model, and by $\nodedegree{a}^{(t,s)}$ the corresponding degree of $a$. Formally, we have 
$   
\neighbors{a}{t,s} := \set{v\in \nodeset\mid\gossmatrix{v,a}{t,s} \neq 0}
$.
Note that, due to \cref{ass:mixing_matrices} the networks are symmetric: $v\in\neighbors{a}{t,s} \iff a\in \neighbors{v}{t,s}$.
Moreover, we consider $\neighbors{a}{t,s}$ to be a \emph{closed neighborhood} unless otherwise specified, i.e. $a\in\neighbors{a}{t,s}$. A node may thus send virtual messages to itself. This property will be pivotal to our approach and is common for efficient averaging schemes~\cite{xiaoDistributedAverageConsensus2006}.

Finally, several averaging approaches add a mask~\cite{bonawitz2017practical} or noise~\cite{cyffersMuffliatoPeertoPeerPrivacy2022} to the messages to protect the privacy of the nodes' data.
In this paper, we focus on noise-based approaches as they require less coordination between nodes and are more resilient to collusion between attackers.

\begin{remark}
  In \ac{DL}, the averaging step does not need to reach exactly the same model at each node.
  Therefore, the rounds can be stopped before full convergence. 
  In \sys, even one round is sufficient ($s=1$).
  Thus, in the rest of the paper, we will omit $s$ in equations related to the averaging process.
\end{remark}

\subsection{Privacy in Decentralized Learning}%
\label{subsec:Privacy considerations}

\paragraph{\textbf{Privacy attacks}}
Numerous privacy attacks target  Machine-Learning systems~\cite{zhuDeepLeakageGradients2019,geipingInvertingGradientsHow2020,yinSeeGradientsImage2021,shokriMembershipInferenceAttacks2017,carliniMembershipInferenceAttacks2022}. 
Most of these focus on attacking individual models or gradients, leaving attacks that exploit multiple models, such as those shared in \ac{DL}, relatively underexplored.
While some studies address this gap~\cite{shokriMembershipInferenceAttacks2017,mriniPrivacyAttacksDecentralized2024}, they often rely on strong assumptions about the attacker's capabilities or incur significant computational costs, especially in decentralized scenarios.
To address these limitations, we adopt in this work two levels of \ac{MIA} to analyze our approach: a \emph{threshold}-based attack~\cite{shokriMembershipInferenceAttacks2017,carliniMembershipInferenceAttacks2022}, and a \emph{classifier}-based \ac{MIA}~\cite{zariEfficientPassiveMembership2021a}, which offers a computationally efficient and practical approach to evaluating privacy vulnerabilities in decentralized learning.
Given some input sample $x$, \ac{MIA}s aims to decide whether $x$ belongs to a node's training set or not. Intuitively, if an attacker can accurately infer this information, it may be able
to reconstruct part of a node's training dataset. In the following, we consider an attacker co-located with one of the participating nodes (the attacking node) that observes the models it receives in order to perform an \ac{MIA} targeting the training set of some other distant node (the victim node). (See \Cref{subsec:threat model} just below for more detail.)

\paragraph{\textbf{Differential Privacy}} Introduced in databases, \acf{DP} provides a widely used framework for protecting models from such attacks~\cite{dwork:2006:calibrating,mironovRenyiDifferentialPrivacy2017,shokriMembershipInferenceAttacks2017,dworkExposedSurveyAttacks2017,cyffersMuffliatoPeertoPeerPrivacy2022}. 
In a decentralized scenario, \ac{DP} can be instantiated in different ways. The most well-studied variants are \emph{local differential privacy} (LDP)~\cite{kasiviswanathanWhatCanWe2011} and \emph{central differential privacy}. The former assumes a local model without the existence of any trusted entity (\eg{} a server) to curate the noise that, in turn, provides LDP guarantees. However, this approach is usually detrimental for the utility. The latter only provides guarantees on a final, averaged model and relies on a trusted central server for adding the noise. It has been shown that the optimal tradeoff in both cases differs by a factor of $n$, the number of nodes~\cite{duchiMinimaxOptimalProcedures2018}.
To bridge this gap, relaxations of the strict scenario of LDP have been proposed~\cite{cyffersMuffliatoPeertoPeerPrivacy2022,allouahPrivacyPowerCorrelated2024}. These relaxations include \ac{PNDP}~\cite{cyffersMuffliatoPeertoPeerPrivacy2022}, which we consider in this work and detail in \Cref{sec:PNDP}.

\subsection{Threat model}%
\label{subsec:threat model}

We aim to protect the privacy of users data against \emph{honest-but-curious} participating nodes during training. 
This scenario is in line with related work~\cite{geipingInvertingGradientsHow2020,cyffersMuffliatoPeertoPeerPrivacy2022,biswasNoiselessPrivacyPreservingDL2025} in the domain of privacy-preserving \ac{DL}, where the attacker can observe information about a victim node during training but does not deviate from the algorithm. 
We consider the attacker to be a node (or a set of nodes) participating in the training algorithm, but this can be extended to the case where an attacker is eavesdropping on a node's communication.
The attacker's goal is to infer about the victim's data, which we quantify in terms of PNDP (see~\cref{sec:PNDP} for a formal definition). This notion of privacy in the context of \ac{DL} is driven by the observation that privacy loss is not equal between all nodes in a distributed algorithm: close neighbors in the communication topology will receive more information from a node than nodes that are further away.

With \ac{PNDP} in mind, our approach perturbs the models exchanged between nodes during training to prevent some honest-but-curious attacker from inferring precise information on a victim's node training distribution.
In this setting, the attacker (which is co-located with one of the nodes) only has a partial view of the messages exchanged within the network, and its attacking power depends on its distance from the victim node within the communication graph.
This is captured formally in \Cref{def:privacy_view} in \Cref{subsec:privacy assumptions and definitions}.
In this setting, the attacker may never be in a position to reconstruct the final average model with precision, as its own local model might be biased by its position in the graph, \niid{} data, and noise injected during the training process.
The limited information that nodes can access in this setup and the key influence of their position on the (local) model they obtain
is sufficient for most applications as the resulting local models remain valuable, even if they differ from node to node.
Note, however, that if the goal of the \ac{DL} algorithm is to produce a unified global model to use by some downstream application, one may apply central-\ac{DP} or local-\ac{DP} to global model before release to obtain \ac{DP} guarantees also for this downstream application.
Such a threat model is widespread in the literature that focuses on privacy-preserving \ac{DL} such as~\cite{cyffersMuffliatoPeertoPeerPrivacy2022,biswasNoiselessPrivacyPreservingDL2025}.

To empirically evaluate the performance of \sys compared to its baselines, we conduct experiments with two paradigms of \ac{MIA} that consider an attacker with different levels of knowledge of the victim's training set. The goal is to use a victim's message to infer whether a particular training sample was used to train the victim's model in order to demonstrate how the formal privacy guarantees provided by \sys complement with the empirically mounted \ac{MIA} to capture the essence of real-world applications of our approach.
More details are given in~\cref{sec: experimental setup}.

\begin{algorithm}[t]
	\caption{\sys-averaging for a node $a$ at time $t$.}\label{alg:ZeroSum}
	\textbf{Input}: local model $\model{a}{}$, stepsize $\lr$, privacy parameter $\privacyparameter{a}$.
	\\
	\textbf{Output}: Localized model average with correlated noise.%

	\begin{algorithmic}[1]
		\State Get the gossip weights $\gossmatrix{a}{t}$,%
		\qquad $\nodedegree{a}^{(t)}\gets |\neighbors{a}{t}|$%
		\State Draw $\ynoise{a}{v}{t}\sim \mathcal{N}(0,\lr^2\ynoisevar{a})$ for $v\in \neighbors{a}{t}$\label{line:Noise_draw}
		\State $\zsumnoise{a}{v}{t} = \ynoise{a}{v}{t} -  \frac{1}{\nodedegree{a}^{(t)} \gossmatrix{a,v}{t}}\sum_{j\in\neighbors{a}{t}}\gossmatrix{a,j}{t} \ynoise{a}{j}{t}$\label{line:Noise_normalization}
		\ForAll{$v\in \neighbors{a}{t}$}
		\State Send  $\model{a}{t} + \zsumnoise{a}{v}{t}$ to $v$\label{line:send:model:a:zs:a}
		\State Receive $\model{v}{t} + \zsumnoise{v}{a}{t}$ from $v$
		\EndFor\medskip
		\State \label{line:ZeroSum_averaging} \textbf{return} $ \sum_{v\in \neighbors{a}{t}}\gossmatrix{a,v}{t} (\model{v}{t}  + \zsumnoise{v}{a}{t})$
	\end{algorithmic}
\end{algorithm}

\section{\sys: Locally-Correlated Noise}%
\label{sec:zsum algorithm and properties}

\subsection{\sys in a nutshell}

Gossip averaging typically requires multiple averaging rounds to provide a good
estimate of the average of nodes' individual
inputs~\cite{DBLP:journals/tocs/JelasityMB05}. Unfortunately, since
averaging is required at each learning iteration, these rounds add up
to a substantial network cost.

We drastically reduce this overhead by performing a single averaging round per
learning iteration.
Without noise, the cumulative effect of one-round averaging between each
gradient-descent step is enough to ensure
convergence~\cite{DBLP:conf/aistats/ZantedeschiBT20,barsRefinedConvergenceTopology2023,devosEpidemicLearningBoosting2023}.

\sys adds noise to this process to provide \ac{PNDP} guarantees. As one-round averaging is limited to a node's neighbors, the residual noise in partially averaged models remains high, which may disrupt learning and affect utility.
We mitigate this effect by \emph{correlating} the injected noise such that it
sums to zero over each node's \emph{closed neighborhood}.
The correlation is local and eschews any coordination between neighbors.

In the following, we first detail the one-round localized averaging that lies
at the core of \sys (\cref{alg:ZeroSum}), before moving on to the resulting
decentralized SGD learning algorithm (\cref{alg:ZeroSum-sgd}). We then state
some fundamental properties of \sys's global average model in
\cref{sec:core:prop}.

\subsection{Detailed description of \sys}
\label{sec:detail:desc:zsum}

\sys's model-averaging procedure is described in \cref{alg:ZeroSum}.
It relies on a stochastic communication topology~\cite{devosEpidemicLearningBoosting2023} captured by the gossip matrix $\gossmatrix{}{t}$, where $t$ denotes the current learning iteration
(\cref{Gossip:avg}).
Node $a$ first determines its neighborhood $\neighbors{a}{t}$ and the weights $\gossmatrix{a}{t}$ that its neighbors apply.
Then, to protect its local data, a node $a$ adds noise $\zsumnoise{a}{v}{t}$ to its model $\model{a}{t}$ before sending it to each of its neighbors, $v\in\neighbors{a}{t}$.
By construction, the added noise sums to zero (Lines 2-3 of~\cref{alg:ZeroSum}) so as not to affect the computation of  the global average.
A node adapts how it protects its data by picking its own privacy parameter $\privacyparameter{a}$, which itself drives the variance $\lr^2\ynoisevar{a}$ of the injected noises.

To generate zero-summing noises in~\cref{alg:ZeroSum}, a node $a$ first generates an \emph{initial noise} $\ynoise{a}{v}{t}$ (Line 2) for each of its neighbors $v$. Those noises are then correlated to create \emph{pairwise noises} $\zsumnoise{a}{v}{t}$ (Line 3) that will directly be added to the model sent to each neighbor. Those pairwise noises are the ones observed by an attacker.

In contrast to \cite{gadePrivateOptimizationNetworks2018}, \cref{alg:ZeroSum}
uses a \emph{closed neighborhood} that includes the local node $a$ (\ie $a\in
	\neighbors{a}{}$). Hence, even if $a$ is surrounded by attackers after an
eclipse attack~\cite{singhEclipseAttacksOverlay2006}, $a$'s model remains
protected to some extent as the noises of the models sent to
$\neighbors{a}{}\setminus\{a\}$ do not cancel out.

\sys's main algorithm (\cref{alg:ZeroSum-sgd}) is a \ac{DL} algorithm. At each iteration $t$, each node $a$ first performs a local gradient step on its local model $\model{a}{t}$ to produce an intermediate model $\model{a}{t+ \nicefrac{1}{2}}$ (Lines 2-3). The local model for the next iteration, $\model{a}{t+1}$, is then obtained by applying \sys's averaging procedure (\cref{alg:ZeroSum}) to this model $\model{a}{t+ \nicefrac{1}{2}}$.

\begin{algorithm}[t]
	\caption{\sys for a node $a$.}\label{alg:ZeroSum-sgd}
	\textbf{Input} $\model{a}{0}$ the initial model, $T$ the number of iterations.

	\begin{algorithmic}[1]
		\For{$t$ = 0 to $T-1$}
		\State Draw $\datapoint{a}{t}\sim \datadistrib{a}$, compute $\gradient{a}{t} := \nabla\loss{a}(\model{a}{t},\datapoint{a}{t})  $
		\State $\model{a}{t+ \nicefrac{1}{2}} = \model{a}{t} - \lr \gradient{a}{t}$
		\State $\model{a}{t+1} = \sys\text{-averaging}(\model{a}{t+\nicefrac{1}{2}}, \lr,  \ynoisestd{a})$
		\EndFor
	\end{algorithmic}
\end{algorithm}

\subsection{\sys's core properties}
\label{sec:core:prop}
The following results pave the way for the formal analysis of \sys in \cref{sec:convergence rate}.
If there is no influence of the time factor, we remove the $(t)$ superindex to alleviate the notation (e.g.\ when a lemma holds for all $t\in[\![0,T]\!]$).
Proofs that are not provided in this section can be found in~\cref{sec:zsum small properties proofs}.

First, we state a property that summarizes the effect of the noise generated by
a node on the network:

\begin{restatable}{lemma}{noiseCancellation}
	\label{lem:noise cancellation}
	\emph{Noise cancellation on the global model.} For every node $a\in\nodeset=[\![1,n]\!]$, it holds that
	\begin{align*}
		\sum_{v = 1}^n \gossmatrix{a,v}{} \zsumnoise{a}{v}{} 
        = 0
        = \sum_{v = 1}^n \gossmatrix{v,a}{} \zsumnoise{a}{v}{}.
	\end{align*}
\end{restatable}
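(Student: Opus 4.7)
The plan is to prove this by direct substitution of the definition of $\zsumnoise{a}{v}{}$ from \cref{line:Noise_normalization} of \cref{alg:ZeroSum}, and then to exploit the symmetry of the gossip matrix (\cref{ass:mixing_matrices}) to obtain the second equality for free. The two sums in the statement can be restricted to $v \in \neighbors{a}{}$, since $\gossmatrix{a,v}{} = 0$ for $v \notin \neighbors{a}{}$, and the noises $\zsumnoise{a}{v}{}$ are only generated for $v \in \neighbors{a}{}$.

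First, I would substitute the definition
\[
\zsumnoise{a}{v}{} \;=\; \ynoise{a}{v}{} \;-\; \frac{1}{\nodedegree{a}\, \gossmatrix{a,v}{}} \sum_{j \in \neighbors{a}{}} \gossmatrix{a,j}{}\, \ynoise{a}{j}{}
\]
into $\sum_{v \in \neighbors{a}{}} \gossmatrix{a,v}{}\, \zsumnoise{a}{v}{}$. The first term gives $\sum_{v \in \neighbors{a}{}} \gossmatrix{a,v}{}\, \ynoise{a}{v}{}$. In the second term, the factor $\gossmatrix{a,v}{}$ cancels with the $1/\gossmatrix{a,v}{}$ inside the bracket, leaving a summand that does not depend on $v$; summing over the $\nodedegree{a}$ elements of $\neighbors{a}{}$ cancels the factor $1/\nodedegree{a}$, so this second term equals $\sum_{j \in \neighbors{a}{}} \gossmatrix{a,j}{}\, \ynoise{a}{j}{}$. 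The two contributions are identical (up to renaming $j$ to $v$) and cancel, yielding the first equality.

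For the second equality, I would simply invoke the symmetry of $\gossmatrix{}{}$ from \cref{ass:mixing_matrices}, which gives $\gossmatrix{v,a}{} = \gossmatrix{a,v}{}$ for every $v$, so that $\sum_v \gossmatrix{v,a}{} \zsumnoise{a}{v}{} = \sum_v \gossmatrix{a,v}{} \zsumnoise{a}{v}{} = 0$.

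There is no real obstacle here; the only subtlety is ensuring that the denominator $\nodedegree{a}\, \gossmatrix{a,v}{}$ in the definition of $\zsumnoise{a}{v}{}$ is nonzero for $v \in \neighbors{a}{}$, which is guaranteed by the very definition $\neighbors{a}{} = \{v \in \nodeset \mid \gossmatrix{v,a}{} \neq 0\}$ given in \cref{Gossip:avg}. This ensures every algebraic manipulation above is well defined.
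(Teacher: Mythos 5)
Your proposal is correct and follows essentially the same route as the paper's own proof: unroll the definition of $\zsumnoise{a}{v}{}$, observe that the weighted sum over $v\in\neighbors{a}{}$ telescopes to zero, and use the symmetry of the gossip matrix (\cref{ass:mixing_matrices}) for the second equality. Your additional remarks on restricting the sum to $\neighbors{a}{}$ and on the nonvanishing denominator $\nodedegree{a}\gossmatrix{a,v}{}$ are sound and merely make explicit what the paper leaves implicit.
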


This lemma states that a node does not add noise to the overall network, and
leads to the following crucial corollary.

\begin{restatable}{corollary}{conservationAvgModel}\label{lem:conservation average model}
	\emph{Impact on the global average model}. For every epoch $t\in [\![0,T]\!]$, we have:
	\begin{align*}
		\avgmodel{t+1} = \avgmodel{t+\nicefrac{1}{2}}.
	\end{align*}
\end{restatable}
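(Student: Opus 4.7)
The plan is to unfold the definition of $\avgmodel{t+1}$ using Line~8 of \cref{alg:ZeroSum} and then split the resulting double sum into a ``model'' part and a ``noise'' part, handling each separately. Concretely, from the algorithm the next iterate is $\model{a}{t+1} = \sum_{v\in\neighbors{a}{t}} \gossmatrix{a,v}{t}\bigl(\model{v}{t+\nicefrac{1}{2}} + \zsumnoise{v}{a}{t}\bigr)$; averaging over $a$ gives
\begin{equation*}
  \avgmodel{t+1} \;=\; \underbrace{\frac{1}{n}\sum_{a=1}^{n}\sum_{v=1}^{n}\gossmatrix{a,v}{t}\,\model{v}{t+\nicefrac{1}{2}}}_{(\text{I})} \;+\; \underbrace{\frac{1}{n}\sum_{a=1}^{n}\sum_{v=1}^{n}\gossmatrix{a,v}{t}\,\zsumnoise{v}{a}{t}}_{(\text{II})},
\end{equation*}
where entries with $v\notin\neighbors{a}{t}$ contribute $0$ by definition of $\gossmatrix{}{t}$.

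For term (I), I would swap the order of summation and invoke \cref{ass:mixing_matrices}: since $\gossmatrix{}{t}$ is symmetric and stochastic, $\sum_{a=1}^n \gossmatrix{a,v}{t} = \sum_{a=1}^n \gossmatrix{v,a}{t} = 1$ for every $v$, so (I) collapses to $\frac{1}{n}\sum_{v=1}^n \model{v}{t+\nicefrac{1}{2}} = \avgmodel{t+\nicefrac{1}{2}}$. For term (II), I would relabel the dummy indices $(a \leftrightarrow v)$ to rewrite it as $\frac{1}{n}\sum_{a=1}^n\bigl(\sum_{v=1}^n \gossmatrix{v,a}{t}\,\zsumnoise{a}{v}{t}\bigr)$, and then apply the second equality of \cref{lem:noise cancellation} to conclude that the inner sum vanishes for every fixed $a$, making (II) equal to $0$.

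Combining the two computations immediately gives $\avgmodel{t+1} = \avgmodel{t+\nicefrac{1}{2}}$, as desired. The only subtle point is ensuring the right form of the zero-sum identity is used: \cref{alg:ZeroSum} pairs the weight $\gossmatrix{a,v}{t}$ with the noise $\zsumnoise{v}{a}{t}$ (weights from the receiver, noise from the sender), so after relabeling we need the $\gossmatrix{v,a}{t}\zsumnoise{a}{v}{t}$ version of \cref{lem:noise cancellation} rather than the $\gossmatrix{a,v}{t}\zsumnoise{a}{v}{t}$ version; the symmetry assumption on $\gossmatrix{}{t}$ makes this cosmetic, but it is the step where one must be careful about indexing. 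No other obstacles are anticipated, and the proof is essentially a bookkeeping exercise once the decomposition into (I) and (II) is written down.
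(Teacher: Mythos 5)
Your proposal is correct and follows essentially the same route as the paper's proof: decompose $\avgmodel{t+1}$ into the model term and the noise term, collapse the model term to $\avgmodel{t+\nicefrac{1}{2}}$ via the stochasticity and symmetry of $\gossmatrix{}{t}$, and kill the noise term by swapping/relabeling indices and applying \cref{lem:noise cancellation}. Your remark about needing the $\gossmatrix{v,a}{t}\zsumnoise{a}{v}{t}$ form of the lemma is exactly the indexing care the paper's argument relies on implicitly.
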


While simple, this corollary is pivotal in our convergence analysis of
$\avgmodel{t}$. Without this property, the bound on the expectation of
$\norm{\avgmodel{t+1} - \optimum}^2$ suffers from an extra term because of the noise.

Finally, \cref{lem:corrected zsum variance} describes the behavior of the pairwise noise
generated by \sys: it follows a Gaussian distribution, which is standard for
deriving formal privacy guarantees.

\begin{restatable}{lemma}{correctedVarianceZsum}\label{lem:corrected zsum variance}
	\emph{Noise characterization for~\cref{alg:ZeroSum}.} Consider that for node $a$, for all $v\in\neighbors{a}{t}$,
	$\ynoise{a}{v}{t}\sim \mathcal{N}\left(0, \lr^2\ynoisevar{a}\right)$, for a fixed topology $\gossmatrix{}{t}$.
	Then, using the definition of~\cref{alg:ZeroSum}, we have:
	\begin{align*}
		\forall a,v\in[\![1,n]\!], \zsumnoise{a}{v}{t} \sim \mathcal{N}\left(0, \zsumvarmulti{a}{v}{t}\right)
	\end{align*}
	with
	\begin{align*}
		\zsumvarmulti{a}{v}{t} = \left(\frac{(\nodedegree{a} - 1) ^2}{\nodedegree{a}^2} +\frac{\sum_{j\in\neighbors{a}{t}, j\neq v}(\gossmatrix{a,j}{t})^2 }{(\nodedegree{a}\gossmatrix{a,v}{t})^2}\right)\lr^2\ynoisevar{a}.
	\end{align*}
\end{restatable}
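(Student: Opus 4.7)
The plan is to exploit the fact that $\zsumnoise{a}{v}{t}$ is, by the definition on Line 3 of \cref{alg:ZeroSum}, a deterministic linear combination of the independent Gaussian random variables $\{\ynoise{a}{j}{t}\}_{j \in \neighbors{a}{t}}$. Since a linear combination of independent Gaussians is itself Gaussian, it will suffice to (i) argue that $\zsumnoise{a}{v}{t}$ has zero mean, and (ii) compute its variance by summing the squared coefficients against the common per-variable variance $\lr^2 \ynoisevar{a}$. The zero-mean claim is immediate because every $\ynoise{a}{j}{t}$ is centered.

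To handle the variance, I would rewrite the expression for $\zsumnoise{a}{v}{t}$ by isolating the term $j=v$ from the sum, yielding the form
\begin{equation*}
\zsumnoise{a}{v}{t} = \Bigl(1 - \tfrac{1}{\nodedegree{a}}\Bigr)\ynoise{a}{v}{t} \;-\; \sum_{\substack{j \in \neighbors{a}{t} \\ j \neq v}} \tfrac{\gossmatrix{a,j}{t}}{\nodedegree{a}\gossmatrix{a,v}{t}}\,\ynoise{a}{j}{t},
\end{equation*}
where the coefficient of $\ynoise{a}{v}{t}$ arises from the cancellation $1 - \frac{\gossmatrix{a,v}{t}}{\nodedegree{a}\gossmatrix{a,v}{t}} = \frac{\nodedegree{a}-1}{\nodedegree{a}}$. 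By independence of the $\ynoise{a}{j}{t}$'s across $j$, the variance is the sum of the squared coefficients multiplied by $\lr^2\ynoisevar{a}$, which directly produces the two summands in $\zsumvarmulti{a}{v}{t}$: the first from the $j = v$ coefficient squared, and the second from summing $(\gossmatrix{a,j}{t})^2 / (\nodedegree{a}\gossmatrix{a,v}{t})^2$ over the remaining neighbors.

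The Gaussian conclusion then follows from the standard fact that affine combinations of independent Gaussians are Gaussian, and the zero-mean property combined with the computed variance identifies the distribution as $\mathcal{N}(0, \zsumvarmulti{a}{v}{t})$. There is no real obstacle here beyond careful bookkeeping of the coefficient in front of $\ynoise{a}{v}{t}$; the main pitfall is to remember that $v \in \neighbors{a}{t}$ (since \sys uses a closed neighborhood as emphasized in \cref{sec:detail:desc:zsum}), so that the $j=v$ term must indeed be extracted from the sum before squaring. Because the topology $\gossmatrix{}{t}$ is fixed and the $\ynoise{a}{j}{t}$'s are drawn independently (\cref{line:Noise_draw} of \cref{alg:ZeroSum}), no further conditioning or higher-order argument is needed.
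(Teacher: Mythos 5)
Your proposal is correct and matches the paper's own argument: the paper likewise isolates the $j=v$ term to obtain the coefficient $\frac{\nodedegree{a}-1}{\nodedegree{a}}$ on $\ynoise{a}{v}{t}$, then invokes independence of the $\ynoise{a}{j}{t}$ to sum squared coefficients times $\lr^2\ynoisevar{a}$, concluding Gaussianity with zero mean. Nothing is missing; your bookkeeping of the $j=v$ coefficient and the closed-neighborhood remark are exactly the points the paper's proof relies on.
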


Note that \cref{lem:corrected zsum variance} entails that the variance of the
noise added to sent messages is strongly linked to the communication topology.
This means that the chosen communication topology influences the privacy of our system, which further motivates our use of \ac{PNDP}~\cref{subsec:privacy assumptions and definitions}.

\begin{remark}%
	\label{rem:s-regular topology}
	When considering an $k$-regular topology or even a topology where only the incoming degree is fixed at $k$ for all the nodes with a uniform weight distribution~\cite{devosEpidemicLearningBoosting2023}, then for a node $a$, we have $\zsumvarmulti{a}{v}{t} = \frac{k-1}{k}\lr^2\ynoisevar{a}$.
    If we fix the same privacy parameter $\privacyparameter{a}$ for all nodes,
    the noises generated by individual nodes all follow the same distribution.
\end{remark}

\section{Convergence of \sys{}}\label{sec:convergence rate}

We now analyze the convergence rate of \sys{}.
The proof of the results stated in this section follows a similar structure to that of~\cite{koloskovaUnifiedTheoryDecentralized2020}.
Detailed versions of proofs related to this section can be found in~\cref{sec:convergence proofs}.

\Cref{subsec:assumptions convergence proof} describes the assumptions we used for the convergence proof. Then,~\cref{subsec:convergence rate}  details our bound in the setting described.

\subsection{Assumptions}%
\label{subsec:assumptions}%
\label{subsec:assumptions convergence proof}

To ensure convergence, we define some assumptions that are common in the literature and that mostly follow those of~\cite{koloskovaUnifiedTheoryDecentralized2020}.
First, we make assumptions about the smoothness and  convexity of the loss functions:

\begin{assumption}\label{ass:smoothness}
    \emph{(L-smoothness)}. The functions $\loss{i}: \mathbb{R}^\modelsize \times \inputspace \to \mathbb{R}$ are differentiable for each $i\in\nodeset$ and $\datapoint{}{}\in \text{supp}(\datadistrib{i})$, 
    and there exists a constant $L \geq 0$ such that for each $\model{}{},\modelp{}{} \in \modelspace$ and $\datapoint{}{} \in \text{supp}(\datadistrib{i})$:
    \begin{align}
        \label{eq:smoothness}
        \norm{\nabla\loss{i}(\modelp{}{},\datapoint{}{}) - \nabla\loss{i}(\model{}{},\datapoint{}{})} \leq L\norm{\model{}{}- \modelp{}{}}.
    \end{align}     
\end{assumption}

\begin{assumption}\label{ass:mu-convexity}
    \emph{($\mu$-convexity)} Each function $\localloss{i}$ is $\mu$-convex for a constant $\mu\geq0$. For all $\model{}{},\modelp{}{} \in \modelspace$:
    \begin{align}
        \nonumber
        \localloss{i}(\model{}{}) - \localloss{i}(\modelp{}{}) + \frac{\mu}{2}\norm{\model{}{}- \modelp{}{}}_2^2
        \leq \iprod{\nabla\localloss{i}(\model{}{})}{\model{}{}-\modelp{}{}}.
    \end{align}
\end{assumption}

We also assume the noise caused by \sgd is bounded.
This is particularly important since we consider a possible \niid data distribution:

\begin{assumption}\label{ass:bounded gradient noise}
    \emph{(Bounded noise at the optimum)} 
    Consider $\optimum$ such that $\optimum := \argmin \sampledloss(\model{}{})$ 
    and define
    \begin{align} 
        \label{eq:gradientnormdefinition}
        \gradientnorm{i} := \norm{\nabla\localloss{i}(\optimum)}^2,&    &\gradientnormavg := \frac{1}{n}\sum_{i=1}^{n}\gradientnorm{i}.
    \end{align}
    In addition, define
    \begin{align} 
        \label{eq:gradientnoisedefinition}
        \gradientnoise{i} := \condexpect{\datapoint{i}{}}{\norm{\nabla\loss{i}(\optimum,\datapoint{i}{}) - \nabla \localloss{i}(\optimum)}^2_2}
    \end{align}
    and $\gradientnoiseavg := \frac{1}{n}\sum_{i=1}^{n}\gradientnoise{i}$. Then $\gradientnormavg$ and $\gradientnoiseavg$ are bounded.
\end{assumption}

    Intuitively, $\gradientnormavg$ measures the noise level and $\gradientnoiseavg$ the diversity of the locally sampled functions $\localloss{i}$.
    It is important to note that $\gradientnoiseavg$ is strongly linked to the data distribution. 
    In particular, it will tend to be larger in a \niid setting.

Finally, we state the assumption on the mixing matrix:
\begin{assumption}\label{ass:expected consensus rate}
    \emph{(Expected consensus rate)} There exists $p\in [0,1] $ such that for all matrices $X\in\R^{\modelsize\times n}$ and all iteration $t\in[\![0,T]\!]$,
    if we define $\bar{\modelmatrix{}{}}:= \frac{1}{n}X\mathbf{1}_{n\times n}$ where $\mathbf{1}_{n\times n}\in\R^{n\times n}$ is the matrix composed of ones,
    we have
    \begin{align}
        \nonumber
        \condexpect{\gossmatrix{}{t}}{\norm{\gossmatrix{}{t}\modelmatrix{}{} - \bar{\modelmatrix{}{}}}^2_F}
        \leq (1-p)\norm{\modelmatrix{}{} - \bar{\modelmatrix{}{}}}_F^2.
    \end{align}    
\end{assumption}

This assumption is standard in the decentralized consensus literature, with $p$ a value linked to the spectrum of $\expect{\transpose{\gossmatrix{}{t}}\gossmatrix{}{t}}$~\cite{boydRandomizedGossipAlgorithms2006}.

\subsection{Convergence rates of \sys}%
\label{subsec:convergence rate}

    We now state the formal convergence of \sys in the strongly convex case:

\begin{restatable}{theorem}{zsumConvergenceRate}%
    \label{thm:convergence_rate}
    \emph{Convergence rate of \sys.}
    For any number of iterations $T$, there exists a constant stepsize $\lr$ s.t. for \cref{alg:ZeroSum-sgd}, $\frac{1}{2W_T}\sum_{t=0}^{T}w_t (\expect{\sampledloss(\avgmodel{t})} - \sampledloss^* ) + \frac{\mu}{2} r_{T+1}$ is bounded by: 
    \begin{align*}
        \mathcal{O}\left( \frac{\gradientnoiseavg}{n\mu T} + \frac{LA'}{\mu^2T^2}  + \frac{r_0L}{p}\exp\left[-\frac{\mu p(T+1)}{192\sqrt{3}L}\right]\right),
    \end{align*}
    where 
    $A' = \frac{16-4p}{2(16-7p)}(\gradientnoiseavg+ \frac{18}{p}\gradientnormavg) + \frac{\modelsize}{n}\frac{16-4p}{16-7p}\sum_{a,v=1}^{n}\nodedegree{a}\frac{{(\nodedegree{v} - 1)}^2}{\nodedegree{v}}\ynoisevar{v}$, 
    $\sampledloss^* = \sampledloss(\optimum)$, $r_{t} = \expect{\norm{\avgmodel{t} - \optimum}^2}$, 
    $w_t = {(1-\frac{\mu}{2}\lr)}^{-(t+1)}$ 
    and 
    $W_T = \frac{1}{T}\sum_{t=1}^{T} w_t$.
\end{restatable}

Or, if we prefer a formulation to reach a desired accuracy:

\begin{restatable}{corollary}{zsumConvergenceRateEpsilon}\label{cor:convergence rate epsilon}
    Setting all the constants to be the same as in \cref{thm:convergence_rate}, for any target accuracy $\rho>0$, there exists a constant stepsize $\lr$ such that~\cref{alg:ZeroSum-sgd} reaches the target accuracy after at most
    \begin{align*}
        \frac{3\kappa \gradientnoiseavg}{n\mu \rho }
        +\sqrt{\frac{3\kappa LA'}{\rho\mu^2}} 
        +\frac{192\sqrt{3}L}{\mu p}\ln{\left[\frac{3\kappa r_0L}{\rho p}\right]}
    \end{align*}
    training iterations, where $\kappa$ is the constant that arises when upper bound $\mathcal{O}\left(\frac{\gradientnoiseavg}{n\mu T} + \frac{LA'}{\mu^2T^2} + \frac{r_0L}{p}\exp\left[-\frac{\mu p(T+1)}{192\sqrt{3}L}\right]\right)$ is expanded out. 
\end{restatable}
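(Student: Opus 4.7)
The plan is to invert the convergence rate stated in \cref{thm:convergence_rate}. Write $\kappa$ for the absolute constant hidden by the $\mathcal{O}(\cdot)$ notation, so that \cref{thm:convergence_rate} guarantees
$$\Phi_T \;\leq\; \kappa\!\left(\frac{\gradientnoiseavg}{n\mu T} + \frac{LA'}{\mu^{2}T^{2}} + \frac{r_0 L}{p}\exp\!\left[-\frac{\mu p(T+1)}{192\sqrt{3}\,L}\right]\right),$$
where $\Phi_T := \frac{1}{2W_T}\sum_{t=0}^{T}w_t (\expect{\sampledloss(\avgmodel{t})} - \sampledloss^*) + \frac{\mu}{2} r_{T+1}$ is the quantity whose decay defines reaching ``accuracy $\rho$''. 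The first step is the standard multi-term inversion trick: a sufficient condition for $\Phi_T \leq \rho$ is that each of the three terms on the right-hand side is at most $\rho/3$. This decouples the three regimes (statistical noise, initial bias, and consensus mixing) and avoids any delicate balancing between them.

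Next, I would solve the three resulting inequalities for $T$ independently. The statistical term gives the linear condition $T \geq \frac{3\kappa \gradientnoiseavg}{n\mu \rho}$. The second term, scaling as $1/T^{2}$, yields the sub-linear condition $T \geq \sqrt{\frac{3\kappa L A'}{\mu^{2}\rho}}$. Finally, inverting the exponential term by taking logarithms gives $T+1 \geq \frac{192\sqrt{3}\,L}{\mu p}\ln\!\left[\frac{3\kappa r_0 L}{\rho p}\right]$, so it suffices to require $T \geq \frac{192\sqrt{3}\,L}{\mu p}\ln\!\left[\frac{3\kappa r_0 L}{\rho p}\right]$ (the $-1$ being absorbed into the upper bound).

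To conclude, summing these three lower bounds clearly makes all three inequalities hold simultaneously, which reproduces exactly the iteration count announced in the statement. The constant stepsize $\lr$ is then the one prescribed by the proof of \cref{thm:convergence_rate} instantiated at this $T$; since that stepsize is chosen as a function of $T$ and of the problem constants, and the admissibility constraints on $\lr$ (essentially $\lr = \mathcal{O}(1/L)$ and $\lr = \mathcal{O}(p/L)$) become \emph{easier} to satisfy as $T$ grows, no additional verification is needed beyond the bound we have just derived. The only mildly delicate ingredient is the third term: because it decays exponentially in $T$ rather than polynomially, its inversion forces a logarithmic dependence on $\rho$, $r_0$, $L$, and $p$, which is intrinsic and accounts for the shape of the last summand in the statement. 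Everything else is routine algebraic inversion of a three-part rate.
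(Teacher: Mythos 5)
Your proposal is correct and follows essentially the same route as the paper: split the three-term upper bound from \cref{thm:convergence_rate} into three conditions of the form ``each term $\leq \rho/3$'', invert each one for $T$ (linear, square-root, and logarithmic conditions respectively), and sum the resulting thresholds so that all three hold simultaneously. The paper's proof does exactly this, including absorbing the $-1$ from the exponential-term inversion into the final bound.
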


This bound is similar to the one of~\cite{koloskovaUnifiedTheoryDecentralized2020}. The first and last terms are the same, except for the constants in the logarithm, which do not influence overall convergence  since the logarithmic term is the slowest to grow. The second term however contains the additional complexity of our approach, in particular in the definition of $A'$. Our additional term is of the form $\sqrt{\frac{3\kappa L\modelsize(16-4p)}{2n(16-7p)\mu^2\rho}\sum_{a,v=1}^{n}\nodedegree{a}\frac{{(\nodedegree{v} - 1)}^2}{\nodedegree{v}}\ynoisevar{v}}$. This term is weighted by $\rho^{-\frac{1}{2}}$ and is not the one that grows fastest as $\rho$ goes to 0, proving  the limited impact of our approach  on convergence. We observe that this term contains a weighted average of the noise propagated by every node, showing the intuitive behavior of slowing down convergence if the noise $\ynoisevar{a}$ becomes too big. Interestingly, this term grows as the network size or density grows.
Indeed, the higher the degree, the more the noise injected at each iteration, and the larger the network, the longer it takes for the noise to propagate and cancel out.

We can also compare this bound to a recent noisy approach~\cite{allouahPrivacyPowerCorrelated2024}, even if their privacy setting is different from ours.
While they do not consider a strongly-convex scenario like us and assume a weaker assumption that is implied by a strongly-convex property, we observe that the noise variance appears on their leading term, in $\mathcal{O}(\frac{1}{T})$.
The analysis we performed here on an algorithm without noise cancellation would also have yielded similar results.
On the other hand,  our approach delegates the impact of the noise to the second leading term, yielding faster convergence rates.

Similarly, the bound presented in~\cite{cyffersMuffliatoPeertoPeerPrivacy2022} is also affected in its leading term by the factor $\nicefrac{\sigma^2}{T}$, where $\sigma$ denotes the \ac{DP} noise constant, as established in Theorem 10 of~\cite{cyffersMuffliatoPeertoPeerPrivacy2022}. Consequently, the same conclusion drawn in the previous paragraph can be applied in this context.

\paragraph{Relaxation of assumptions}
Following~\cite{koloskovaUnifiedTheoryDecentralized2020}, we conjecture that our proof can be generalized to the convex and the non-convex
scenarios, thus weakening \cref{ass:mu-convexity}.
In particular, the difficulty of adapting to a non-convex scenario mostly lies in the gradient descent analysis, which is only marginally modified by our approach.
We chose to keep to the strongly convex scenario because our direct baseline also made such an assumption~\cite{cyffersMuffliatoPeertoPeerPrivacy2022}.

Likewise, we conjecture it is possible to loosen~\cref{ass:expected consensus rate} by adopting the same approach as in~\cite{koloskovaUnifiedTheoryDecentralized2020}. However, we chose to stick to a more standard assumption, as it was not the main focus of this work.

\paragraph{Node dropout}
The formal analysis of convergence of \sys{} relies on the noises canceling out on average (\Cref{lem:noise cancellation}).
In practice, nodes in \ac{DL} may have intermittent availability, \ie they may join or leave the network at any time.
As a result, the injected noise in \sys{} may not always sum to zero.
However, the inherent stochasticity of the training process and the robustness of gradient-based optimization mitigate the impact of node dropouts in \sys{}.
We experimentally demonstrate the resilience of \sys{} to node dropouts in \Cref{subsec:node dropout} and discuss the possible adaptation of our convergence proof to such scenarios.

\begin{proofsketch} (\emph{\cref{thm:convergence_rate}}).
    \sloppy
    We mostly follow the proof of~\cite{koloskovaUnifiedTheoryDecentralized2020}. 
    The main challenge lies in adapting the set of lemmas to our noisy approach. 
    The mini-batch variance (\cref{prop:mini_batch variance}) is unchanged, as it only relies on hypotheses on the loss function, which are identical to ours.
    The descent lemma (\cref{lem:descent lemma}) is where~\cref{lem:conservation average model} comes into play, since canceling noises have no impact on the averaged model.
    Without noise cancellation, an additional term would have been added here, which would have propagated to the leading term of the convergence rate in $\frac{1}{T}$.
    
    Finally, the recursion for consensus distance (\cref{lem:rec consensus distance}) is modified because of the noise addition, which becomes an extra term.
    In addition to this extra term, our main recursion is slightly altered, with an additional factor to the recursive term. 
    While this additional factor prevents solving the main recursion directly, a manipulation leads to a term that can be solved, yielding the desired result.
\end{proofsketch}
This proof relies on three main lemmas detailed in~\cref{sec:convergence proofs}.
Two of them remained unchanged using \sys{}'s properties.
For the sake of completeness, we state the adapted lemma that presents an additional last term compared to state-of-the-art \ac{DL}~\cite{koloskovaUnifiedTheoryDecentralized2020}.
This term arises from noises shifting local models from the true average.

\begin{restatable}{lemma}{lemConsensusDistance}\label{lem:rec consensus distance}
    \emph{(Recursion for consensus distance)}
    Under~\cref{ass:smoothness,ass:mu-convexity,ass:bounded gradient noise,ass:expected consensus rate},
    if stepsizes $\lr\leq \frac{p}{96\sqrt{3} L}$, then for any $\beta>0$:
    \begin{align*}
        \consdistance{t} 
        \leq& \left(1+\beta\right)\left(1-\frac{7p}{16}\right)\consdistance{t-1} 
        + \lr^2(1+\beta)\left(\gradientnoiseavg+ \frac{18}{p}\gradientnormavg\right)
        \\&+  \left(1+\beta\right)\frac{36L}{p} \left(\localloss{}(\avgmodel{t-1}) - \localloss{}(\optimum)\right)
        \\&+ \lr^2 (1 + \beta^{-1})\frac{\modelsize}{n}\sum_{i=1}^{n} \nodedegree{i} \sum_{v=1}^{n} \left(\frac{{(\nodedegree{v} - 1)}^2}{\nodedegree{v}}\ynoisevar{v}\right),
    \end{align*}
    where $\consdistance{t} = \frac{1}{n}\sum_{i=1}^{n}\condexpect{t}{\norm{\model{i}{t} - \avgmodel{t}}^2}$ is the consensus distance
\end{restatable}

\section{Pairwise Network Differential Privacy}\label{sec:PNDP}
We now formalize the privacy guarantees of \sys
in terms of \emph{pairwise-network differential privacy (PNDP)}, a graph-based variant of \ac{DP} introduced in the work by Cyffers et al.~\cite{cyffersMuffliatoPeertoPeerPrivacy2022} to capture the unique threats to privacy introduced by the \ac{DL} framework.
This section establishes the formal \ac{PNDP} guarantees that \sys provides and dissects further its analytical properties.

More concretely, we first present the additional assumption and privacy definition used in the analysis (\cref{subsec:privacy assumptions and definitions}), before defining an equivalent formulation of our algorithm~(\cref{sec:matrix notations}). 
\cref{subsec:noises over time} will then exploit this formulation to express the evolution of the system, which is pivotal to our privacy analysis of \sys presented in \cref{subsec:privacy analysis zsum-dl}. We finally consider the simpler case (\cref{alg:ZeroSum} in \cref{subsec:privacy zsum-avg}), and link our results to those of~\cite{cyffersMuffliatoPeertoPeerPrivacy2022}.

\subsection{Assumptions \& definitions}%
\label{subsec:privacy assumptions and definitions}
When discussing PNDP, we use the same notations and definitions as~\cite{cyffersMuffliatoPeertoPeerPrivacy2022}.
Specifically, with $\mathcal{D}=\left(\mathcal{D}_a\right)_{a\in\nodeset}$ denoting a set of datasets across all the nodes, we call a pair of (entire) datasets $\mathcal{D}$ and $\mathcal{D}'$ \emph{adjacent}, denoted by $\mathcal{D}\sim_a\mathcal{D}'$,
if there is some node and only one node $a\in\nodeset$ for which $\mathcal{D}_a$ and $\mathcal{D}'_a$ differ.
Considering two \emph{adjacent} datasets is the first building block to express differential privacy properties.

We analyze how \sys{} guarantees \ac{PNDP} for an input dataset $\mathcal{D}$ (a given initial data distribution between the nodes).
To this purpose, we require two additional assumptions, in addition to those highlighted in~\cref{subsec:assumptions convergence proof}. 
First, we need the distance between the models trained on two adjacent datasets to be bounded, which aligns with Assumption 1 in~\cite{cyffersMuffliatoPeertoPeerPrivacy2022}.

\begin{assumption}%
    \label{ass:input_sensitivity}
    There exists some constant $\adjacentdatasetgradientbound>0$ such that for any adjacent datasets $\mathcal{D}\sim_a\mathcal{D}'$,
    we have
    \begin{align}
        \label{eq:input_sensitivity_gradient}
        \sup_{\model{}{}\in\modelspace} \sup_{\datapoint{}{},\datapointp{}{} \in \mathcal{D}\times \mathcal{D'}} 
        \norm{\nabla\loss{}(\model{}{},\datapoint{}{}) - \nabla\loss{}(\model{}{},\datapointp{}{})}^2
        \leq \adjacentdatasetgradientbound^2.
    \end{align}

\end{assumption}
This is a standard assumption when considering differentially private algorithms: we use a bound on the original perturbation and observe how this perturbation can be scaled by the algorithm.
This assumption is typically enforced through \emph{gradient clipping}~\cite{abadiDeepLearningDifferential2016}. Due to space constraints, an analysis of its impact on \sys{} is deferred to~\cref{subsec:clipping}.

For a pair of adjacent datasets, Muffliato~\cite{cyffersMuffliatoPeertoPeerPrivacy2022} introduces the notion of \emph{privacy view} on two such datasets:
\begin{definition}\label{def:privacy_view}%
    \cite{cyffersMuffliatoPeertoPeerPrivacy2022}
    The \emph{privacy view} of a node $v$ after $T$ steps for a dataset $\mathcal{D}$ is:
    \begin{align*}
        \pview{v}{D}{T} = \set{ \gossmessage{w}{v}{t} \mid t\in[\![1,T]\!], v\in\neighbors{w}{t}} \cup \set{\model{v}{}},
    \end{align*}
    with $\mathcal{A}^{(T)}$ a state-sharing algorithm iterated $T$ times such as~\cref{alg:ZeroSum} or \cref{alg:ZeroSum-sgd}, and  $\mathcal{A}^{(T)}(\mathcal{D})$ the set of all messages sent by neighboring nodes on the network during the execution of the algorithm.
\end{definition}
The privacy view represents a projection from the set of all the messages in an execution $\mathcal{A}^{(T)}$ to the set of messages that $v$ receives during the algorithm's execution. 

When considering this privacy view $\pview{v}{D}{T}$, we consider the scenario where node $v$ would be an \emph{honest-but-curious} attacker and tries to infer information from its observations --- the privacy view.
This view is then used to define \ac{PNDP}~\cite{cyffersMuffliatoPeertoPeerPrivacy2022}, by leveraging the definition of Rényi-\ac{DP}~\cite{mironovRenyiDifferentialPrivacy2017}.

\begin{definition}\label{def:pndp}\emph{(Pairwise Network Differential Privacy)}
    For $g:\nodeset^2\to\mathbb{R}^+$ and $\alpha>1$, a mechanism $\mathcal{A}^{(T)}$ satisfies $(\alpha,g)$-\emph{Pairwise Network Differential Privacy (PNDP)} if, for all pairs of distinct nodes $a,v\in\nodeset$ and adjacent datasets $\mathcal{D}\sim_a\mathcal{D'}$, we have
    \begin{align*}
        \renyi{\pview{v}{D}{T}}{\pview{v}{D'}{T}}\leq \privacyzsumsingleround{a}{v}{T},
    \end{align*}
    where $\renyi{P}{Q}$ is the Rényi divergence~\cite{gilRenyiDivergenceMeasures2013} between probability distributions $P$ and $Q$:
    \begin{align*}
        \renyi{X}{Y} = \frac{1}{\alpha-1}\ln\int{(\frac{\mu_X(z)}{\mu_Y(z)})}^\alpha\mu_Y(z)dz,
    \end{align*}
    with $\mu_X$ and $\mu_Y$ the densities of $X$ and $Y$.
\end{definition}
Therefore, $\privacyzsumsingleround{a}{v}{T}$ quantifies the \emph{privacy leaked} from $a$ to $v$, and our goal is to constrain it to a minimal value. This decentralized approach harnesses communication topology, in contrast to \ac{DP} or Renyi-\ac{DP}, thus fully exploiting the specificity of a decentralized context. 

The choice of this privacy guarantee is further motivated by the synergy between Rényi-\ac{DP} and Gaussian noise~\cite{mironovRenyiDifferentialPrivacy2017}, as the following lemma underlines:

\begin{lemma}\label{lem:gaussian rényi bound}%
    \cite{gilRenyiDivergenceMeasures2013} Suppose that $X\sim\mathcal{N}(\mu_X, \Sigma)$ and $Y\sim\mathcal{N}(\mu_Y, \Sigma)$. Then for all $\alpha>1$, we have:
    \begin{align}
        \label{eq:gaussian rényi bound}
        \renyi{X}{Y} = \frac{\alpha}{2} \transpose{(\mu_X-\mu_Y)}\Sigma^{-1}(\mu_X-\mu_Y).
    \end{align}
\end{lemma}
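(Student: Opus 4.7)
The plan is to prove this standard identity by direct computation from the definition of Rényi divergence, substituting the Gaussian densities and recognizing the resulting integrand as an (unnormalized) Gaussian in $z$. Write $\mu_X(z) = (2\pi)^{-d/2}|\Sigma|^{-1/2}\exp\bigl(-\tfrac{1}{2}(z-\mu_X)^{\intercal}\Sigma^{-1}(z-\mu_X)\bigr)$ and similarly for $\mu_Y(z)$. Since both densities share the same normalizing constant, the ratio $(\mu_X(z)/\mu_Y(z))^{\alpha}\mu_Y(z)$ equals $(2\pi)^{-d/2}|\Sigma|^{-1/2}$ times an exponential whose exponent is a linear combination of two quadratic forms in $z$, with no stray log-determinants to track.

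First, I would collect the exponents and obtain
\begin{align*}
E(z) \;=\; -\frac{\alpha}{2}(z-\mu_X)^{\intercal}\Sigma^{-1}(z-\mu_X) + \frac{\alpha-1}{2}(z-\mu_Y)^{\intercal}\Sigma^{-1}(z-\mu_Y).
\end{align*}
Expanding both quadratics and grouping the terms quadratic, linear, and constant in $z$, the quadratic-in-$z$ part is $-\tfrac{1}{2} z^{\intercal}\Sigma^{-1} z$ (the $\alpha$ and $\alpha-1$ cancel), so the integrand is proportional to a Gaussian with covariance $\Sigma$ centered at $m := \alpha\mu_X - (\alpha-1)\mu_Y$. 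Completing the square around $m$ produces a clean split between an integrable Gaussian (which integrates to $(2\pi)^{d/2}|\Sigma|^{1/2}$ and exactly cancels the prefactor) and a leftover constant term depending only on $\mu_X,\mu_Y,\Sigma$.

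The remaining constant, after careful bookkeeping, simplifies to $\tfrac{\alpha(\alpha-1)}{2}(\mu_X-\mu_Y)^{\intercal}\Sigma^{-1}(\mu_X-\mu_Y)$, using the identity $\alpha\mu_X^{\intercal}\Sigma^{-1}\mu_X - (\alpha-1)\mu_Y^{\intercal}\Sigma^{-1}\mu_Y - m^{\intercal}\Sigma^{-1}m = \alpha(\alpha-1)(\mu_X-\mu_Y)^{\intercal}\Sigma^{-1}(\mu_X-\mu_Y)$. Taking the log and multiplying by $\tfrac{1}{\alpha-1}$ as prescribed by the Rényi divergence definition then yields exactly $\tfrac{\alpha}{2}(\mu_X-\mu_Y)^{\intercal}\Sigma^{-1}(\mu_X-\mu_Y)$.

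The main obstacle is purely algebraic: the bookkeeping when completing the square, since one must confirm that the $z^{\intercal}\Sigma^{-1}z$ coefficients cancel for all $\alpha>1$ (so that the Gaussian integral converges for every admissible $\alpha$) and that the cross terms in $\mu_X,\mu_Y$ produced by expanding the quadratic $m^{\intercal}\Sigma^{-1}m$ combine precisely into $(\mu_X-\mu_Y)^{\intercal}\Sigma^{-1}(\mu_X-\mu_Y)$ with the right scalar factor. One mild care point is that $\Sigma$ must be positive definite for $\Sigma^{-1}$ to exist; the lemma as stated presupposes this, which matches the non-degenerate noise setting of \sys.
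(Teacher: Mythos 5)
The paper never proves this lemma: it is imported as a known result from \cite{gilRenyiDivergenceMeasures2013}, so there is no in-paper argument to compare against. Your direct computation is the standard derivation and it is essentially correct: with equal covariances the normalizing constants cancel, the quadratic-in-$z$ parts of the exponent collapse to $-\tfrac{1}{2}z^{\intercal}\Sigma^{-1}z$ for every $\alpha>1$ (so the integral converges for all admissible $\alpha$), the integrand is an unnormalized Gaussian centered at $m=\alpha\mu_X-(\alpha-1)\mu_Y$, and the constant left in the exponent after completing the square is $\tfrac{1}{2}m^{\intercal}\Sigma^{-1}m-\tfrac{\alpha}{2}\mu_X^{\intercal}\Sigma^{-1}\mu_X+\tfrac{\alpha-1}{2}\mu_Y^{\intercal}\Sigma^{-1}\mu_Y=\tfrac{\alpha(\alpha-1)}{2}(\mu_X-\mu_Y)^{\intercal}\Sigma^{-1}(\mu_X-\mu_Y)$, which after taking the logarithm and dividing by $\alpha-1$ gives exactly the stated equality. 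One bookkeeping slip worth fixing: the auxiliary identity you quote is sign-reversed --- as written, $\alpha\mu_X^{\intercal}\Sigma^{-1}\mu_X-(\alpha-1)\mu_Y^{\intercal}\Sigma^{-1}\mu_Y-m^{\intercal}\Sigma^{-1}m$ equals $-\alpha(\alpha-1)(\mu_X-\mu_Y)^{\intercal}\Sigma^{-1}(\mu_X-\mu_Y)$, not $+\alpha(\alpha-1)(\cdots)$; the constant your argument actually needs is $-\tfrac{1}{2}$ times that left-hand side, so your final value (and hence the lemma) is right, but the displayed identity should be flipped to be consistent. It is also worth noting explicitly that the clean form with no log-determinant term and no restriction on $\alpha$ is special to the equal-covariance case assumed here, which is precisely the setting in which the paper invokes the lemma.
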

This lemma is the key motivation to our use of Gaussian noises in our approach: we require an additivity property to generate cancelling noises, as well as differential privacy properties.
Thus, Gaussian noise is a natural candidate, as it fits both criterions.   

Rényi-divergence usually provides important properties when considering privacy concerns. Most notably, the \emph{composition theorem} and the preservation by \emph{post-processing}~\cite{mironovRenyiDifferentialPrivacy2017}. Of those two, the former allows for an easy way to derive the privacy guarantee of the composition of differentially private algorithms. When considering a process with multiple rounds, this makes it practical to compose privacy guarantees between rounds and significantly alleviates the analysis.

\begin{remark}
\label{rem:pndp and composition}
    Since we consider a projection of the set of all messages $\mathcal{A}^{(T)}(\mathcal{D})$ on the view of the attacker, we cannot naively apply composition theorems on $\pview{v}{D}{t}$
    to this approach directly. That is because here, the composition would rely on external information, that was not in the view of the attacker.
    To circumvent this, the original paper~\cite{cyffersMuffliatoPeertoPeerPrivacy2022} considers a full averaging algorithm, meaning composition can be performed by using the (common) final state of the averaging algorithm.
\end{remark}

However, we want a more usual view of~\ac{DL}, where we alternate between one round of averaging and one round of gradient descent.
To avoid using composition, we must be able to analyze the behavior of the noise through the gradient. 
To this end, we consider the following assumption:
\begin{assumption}%
    \label{ass:gradient gaussian perturbation}
    For all $i\in\nodeset$, for all data sample $\datapoint{i}{}$ and model $\model{}{}$, 
    if we consider a noise $Z\sim\mathcal{N}(0,\Sigma)$, then we have:
    \begin{align*}
        \nabla\loss{i}(\model{}{}+ Z, \datapoint{i}{})
        \sim \mathcal{N}(\nabla\loss{i}(\model{}{}, \datapoint{i}{}), L\Sigma).
    \end{align*}
\end{assumption}

In essence, \cref{ass:gradient gaussian perturbation} implies that the gradient of a model perturbed with Gaussian noise stays close to the unnoised (original) gradient while following a Gaussian distribution around this unnoised gradient.
The range of the standard deviation is bounded by the smoothness constant $L$ (\cref{ass:smoothness}), which comes from the remark that $\norm{  \nabla\loss{i}(\model{}{}+ Z, \datapoint{i}{}) - \nabla\loss{i}(\model{}{}, \datapoint{i}{})}^2 \leq L \norm{Z}^2 $.
This assumption will allow us to simplify privacy expressions without resorting to a composition theorem.
Most notably,~\cref{lem:unrolled distribution} links an execution of \sys to an execution of decentralized learning without any noise.
This link will be pivotal to the privacy proof.
We further evaluate~\cref{ass:gradient gaussian perturbation} in~\cref{subsec:gaussian assumption experimentation}.

\subsection{Equivalent system formulation}%
\label{sec:matrix notations}

Gossip matrices (\cref{Gossip:avg}) are a natural tool to analyze how information propagates in a communication graph over several communication rounds. Unfortunately, they cannot be directly applied to \cref{alg:ZeroSum-sgd}, as they assume that each node sends the same information to all its neighbors in a given round. This assumption does not hold for  \cref{alg:ZeroSum-sgd}, where the noise $\zsumnoise{a}{v}{}$ added by each node $a$ to its model during the \sys-averaging step (line~\ref{line:send:model:a:zs:a} of \cref{alg:ZeroSum}) is different for each of $a$'s neighbors.

We overcome this difficulty by considering an equivalent virtual communication graph of $n^2$ nodes that emulate the behavior of the $n$ nodes executing \cref{alg:ZeroSum-sgd}. In this construction, each original node $a\in\nodeset$ is replaced by $n$ virtual nodes $a_1, .., a_n\in \nodesetvirtual$ connected in a clique. Each virtual node $a_v$ is then connected to $v_a$ in the virtual communication graph if $a$ is connected to $v$.

This emulated network makes it possible to track the privacy loss incurred by our algorithm, whose behavior can be interpreted as a sequence of linear matrix operations on the states of the virtual nodes. Because each virtual node replicates the state of its real node, the system's state is encoded in a matrix of dimension $n^2 \times\modelsize$, while message exchanges and state updates are captured by matrices of size $n^2\times n^2$ (since the virtual communication topology contains $n^2$ nodes).

In the remainder of this section, we present in more detail the entities we use to analyze the privacy loss of \cref{alg:ZeroSum-sgd} using virtualization. 
Virtual entities are decorated with the symbol~$\hat{}$~: if $A$ describes an object in the original system, then $\hat{A}$ represents its counterpart in the virtual topology. 
We note $\nodesetvirtual = [\![1,n^2]\!]$ the set of virtual nodes, where the real node $i$ is represented by the virtual nodes ranging from $n(i-1)+1$ to $ni$.
$\modelsvirtual{}{t}$ represents the stacking of virtual models at time $t$, \ie,
\begin{align*}
    \modelsvirtual{}{t} = \transpose{\begin{pNiceArray}{cccccc}
        \transpose{\model{1}{t}},&
        \ldots,&
        \transpose{\model{1}{t}},&
        \transpose{\model{2}{t}},&
        \ldots,&
        \transpose{\model{n}{t}}
    \end{pNiceArray}},
\end{align*}
in which the local model $\model{a}{t}\in \mathbb{R}^{\modelsize}$ is duplicated  $n$ times across all the virtual nodes associated with node $a$. $\modelsvirtual{}{t}\in \mathbb{R}^{n^2\times\modelsize}$ in the general case, and so do the noises generated by all the nodes. 
For simplicity when defining those noises, we focus in the following on the case $\modelsize=1$ to introduce the notations, but the approach generalizes seamlessly to higher dimensions.

The noises generated in~\cref{alg:ZeroSum} are captured by two random vectors $\ynoisevirtual{}{t}$ and $\zsumnoisevirtual{}{t}$ of dimension $n^2$, defined component-wise by
\begin{align*}
    \ynoisevirtual{n(i-1)+j}{t}
    :=
    \ynoise{i}{j}{t}
    ,&&\forall i,j\in\nodeset,
    \\
    \zsumnoisevirtual{n(i-1)+j}{t}
    :=
    \zsumnoise{i}{j}{t}
    ,&&\forall i,j\in\nodeset.
\end{align*}
Due to the definition in~\cref{alg:ZeroSum}, $\zsumnoisevirtual{}{t}$ results from a linear combination of $\ynoisevirtual{}{t}$:
\begin{align}
    \label{eq:zsum noise decomposition matrix}
    \zsumnoisevirtual{}{t} = \zsumlinearcorrelation{}{t}\ynoisevirtual{}{t},
\end{align} where, $\zsumlinearcorrelation{}{t}$ is the block-diagonal matrix filled with $0$ values, except in the following positions when $a,v,j$ range over $\nodeset$:
\begin{align*}
    \zsumlinearcorrelation{n(a-1)+v,n(a-1)+j}{t} := 
    \begin{cases}
        \frac{\nodedegree{a} - 1}{\nodedegree{a}} & \text{if } j=v \land v\in\neighbors{a}{t},
        \\
        -\frac{\gossmatrix{a,j}{t}}{\nodedegree{a}\gossmatrix{a,v}{t}} & \text{if } j\neq{v} \land v\in\neighbors{a}{t},
        \\0 &\text{Otherwise}.
    \end{cases}
\end{align*}

The covariance matrix of $\ynoisevirtual{}{}$ is the diagonal matrix in which each node's variance $(\ynoisevar{a})$ is repeated $n$ times.

The covariance matrix of $\zsumnoisevirtual{}{}$ is $\covarmatrix{\zsumnoisevirtual{}{}}{t} = \zsumlinearcorrelation{}{t}\covarmatrix{\ynoisevirtual{}{}}{}\transpose{\zsumlinearcorrelation{}{t}}$ due to \cref{eq:zsum noise decomposition matrix}.

From a given gossip matrix $\gossmatrix{}{t}$, we construct $\gossmatrixvirtual{}{t}$ as the communication matrix where each virtual node only communicates with one fixed node.
We also introduce $\mixingmatrix$, which mixes information between the virtual nodes afterward. 
\begin{align}
    \nonumber
    \gossmatrixvirtual{\hat{i},\hat{j}}{t} &:= 
    \begin{cases}
        \gossmatrix{i,j}{t},    & \text{if } \hat{i} = n(i-1)+j, \hat{j} = n(j-1)+i,\\
        0,                      & \text{Otherwise}.
    \end{cases}
    \\\label{eq:mixingmatrix}
    \mixingmatrix &:= \begin{pNiceArray}{ccccc}
        \boldsymbol{1}_n&\boldsymbol{0}_n&\boldsymbol{0}_n&\dots&\boldsymbol{0}_n\\
        \boldsymbol{0}_n&\boldsymbol{1}_n&\boldsymbol{0}_n&\dots&\boldsymbol{0}_n\\
        \dots & \dots &\dots &\dots &\dots\\
        \boldsymbol{0}_n&\boldsymbol{0}_n&\boldsymbol{0}_n&\dots&\boldsymbol{1}_n\\
    \end{pNiceArray}\in\mathbb{R}^{n^2\times n^2},
\end{align}
where $\boldsymbol{1}_n=[1]_{i,j\in [1..n]}$ and $\boldsymbol{0}_n=[0]_{i,j\in [1..n]}$ represent the matrices of dimension $n\times n$ full of ones or zeros, respectively.
$\mixingmatrix$ creates a fully connected communication network between the virtual nodes of a given real node. In doing so it captures how each local node averages the individual models it receives through $\gossmatrixvirtual{}{t}$.

Using this matrix, we obtain the following virtual gossip round:
\begin{align}    
    \label{eq:matrix update rule} %
    \modelsvirtual{}{t+1} = \mixingmatrix\gossmatrixvirtual{}{t}(\modelsvirtual{}{t+\nicefrac{1}{2}} + \zsumnoisevirtual{}{t}).
\end{align}

The following lemma ensures that the update rule stays the same as \cref{line:ZeroSum_averaging} of~\cref{alg:ZeroSum}, 
proving we have constructed something equivalent to the non-virtual update rule: 
\begin{restatable}{lemma}{lemTemporalMatrixNotation}%
    \label{lem:matrix notation consistency}
    Consider $i\in \nodeset$ and $t\in\mathbb{N}$. Then we have:
    \begin{align*}
        \forall k\in\nodeset, \modelsvirtual{ni+k}{t} = \modelmatrix{i}{t}.
    \end{align*}
\end{restatable}

\subsection{Accounting for noises over time}\label{subsec:noises over time}

In order to track how privacy losses propagate from one SGD round to the next without using a composition theorem (see~\cref{rem:pndp and composition}), we further consider $T$ successive rounds of \cref{alg:ZeroSum-sgd}.
These $T$ rounds incur the generation of $Tn^2$ individual noise values at Line 2 %
of \cref{alg:ZeroSum}.
We track the correlation between these noises and the model parameters to which they are applied in the virtual system through covariance matrices of size $tn^2$, for $t\in [\![1,T]\!]$.%

To track those $n^2 \times t$ noises, we consider matrices that aggregate data through time for notation purposes. 
Those matrices will be denoted by a ~$\tilde{}$~ notation. 
Similarly to before, we consider $\ynoisetemporal{}{t}\in\mathbb{R}^{tn^2}$ a matrix stacking all the noises generated on the network.

Even if the noises at time $t+1$ are independent from the noises at time $t$, meaning the covariance matrix will be block-diagonal, we reach a simpler expression with time matrices.
Formally, we have:
\begin{align}
    \covarmatrix{\ynoisetemporal{}{t}}{} := 
    \begin{pNiceArray}{cccc}
        \covarmatrix{\hat{Y}}{} & \boldsymbol{0}_{n^2} & \dots & \boldsymbol{0}_{n^2} \\
        \boldsymbol{0}_{n^2} & \covarmatrix{\hat{Y}}{} & \dots & \boldsymbol{0}_{n^2} \\
        \dots & \dots & \dots & \dots \\
        \boldsymbol{0}_{n^2} & \boldsymbol{0}_{n^2} & \dots & \covarmatrix{\hat{Y}}{}
    \end{pNiceArray}
    \in \mathbb{R}^{tn^2 \times tn^2},
\end{align}
where $\covarmatrix{\hat{Y}}{}\in\mathbb{R}^{n\times n}$ corresponds to the covariance matrix of the uncorrelated noises. 
This is a diagonal matrix. 
In the special case where all nodes have the same privacy parameter $\privacyparameter{}$, then we have $\Sigma_{\ynoisetemporal{}{t}} = \ynoisevar{}I_{tn^2 \times tn^2}$.

Using this and the decomposition
$\zsumnoisevirtual{}{t} =\zsumlinearcorrelation{}{t}\ynoisevirtual{}{t}$ (\cref{eq:zsum noise decomposition matrix}), 
where $\ynoisevirtual{}{t}\sim\mathcal{N}(0,\Sigma_{\hat{Y}})$, 
we also create a decomposition $\zsumnoisetemporal{}{t} = \tilde{C}^{(t)}\ynoisetemporal{}{t}$, 
where $\zsumlinearcorrelationtemporal{}{t}$ a block diagonal matrix of all the $\zsumlinearcorrelation{}{t}$.

For ease of notation, when considering matrices that aggregate through time, we will consider a constant communication matrix $\gossmatrix{}{t} = \gossmatrix{}{}$. 
Our notations could be generalized at the expense of matrix product notations.
For the temporal gossip matrix, we define the following:
\begin{align}
    \label{eq:parrallel executions distribution}
    \gossmatrixtemporal{}{T}:=  \begin{pNiceArray}{c}
        {(1-\lr L)}^{T} \\
        \dots, \\ 
        (1-\lr L)
    \end{pNiceArray}
    \begin{pNiceArray}{ccc}
        {(\mixingmatrix\gossmatrixvirtual{}{})}^T, & 
        \dots, & 
        \mixingmatrix\gossmatrixvirtual{}{}
    \end{pNiceArray}.
\end{align}
In particular, we have $\gossmatrixtemporal{}{T}\in \mathbb{R}^{n^2\times Tn^2}$.
This matrix will appear in~\cref{thm:zsumSgdPrivacy} and can be used to compute the propagation of the noise through the system after $T$ steps. 

This notation finally allows us to leverage~\cref{ass:gradient gaussian perturbation}.
Using the equivalent formulation defined in~\cref{sec:matrix notations}, we now progress toward the privacy analysis. 
First, we derive the distribution of the model vectors: 
\begin{restatable}{lemma}{LemUnrolledDitribution}%
    \label{lem:unrolled distribution}
    Using~\cref{ass:gradient gaussian perturbation}, consider $\unnoisedmodelvirtual{}{T}$ a virtual execution without any noise, and every other source of randomness is the same. 
    Then, we have:
    \begin{align*}
        \modelsvirtual{}{T}
        &\sim 
        \mathcal{N}\left(
            \unnoisedmodelvirtual{}{T}
            ,L\gossmatrixvirtual{}{T}\zsumlinearcorrelationtemporal{}{t}\Sigma_{\ynoisetemporal{}{t}}\transpose{(\gossmatrixvirtual{}{T}\zsumlinearcorrelationtemporal{}{t})}
        \right).
    \end{align*}
\end{restatable}
This lemma draws a parallel between an execution of~\cref{alg:ZeroSum-sgd} and an unnoised execution and is at the core of our privacy analysis. \Cref{lem:unrolled distribution} offers a structure to bound the Rényi divergence between $\modelsvirtual{}{T}$ on two executions on adjacent datasets. Its proof is deferred to~\cref{sec:zsumsgd-pndp proof}. 

\subsection{\sys privacy analysis}
\label{subsec:privacy analysis zsum-dl}
We now focus on analyzing the formal privacy guarantees of \cref{alg:ZeroSum-sgd}.

\begin{restatable}[Privacy of \sys]{theorem}{zsumSgdPrivacy}%
\label{thm:zsumSgdPrivacy}
    $T$ iterations of \sys (\cref{alg:ZeroSum-sgd}) satisfies $\left(\alpha,\privacyzsum{a}{v}{T}\right)$-PNDP,
    where $\privacyzsum{a}{v}{T}$ is bounded for any two nodes $a,v\in\nodeset$ by:
    \begin{align*}
        \frac{2\alpha\lr^2\adjacentdatasetgradientbound^2}{L + 4\lr^2L^2}\sum_{t=0}^{T-1}\sum_{\substack{\hat{v}\in\hat{V}\\\hat{w}\in\neighborsvirtual{\hat{v}}{t}}}
        \frac{
            {(2 + 4\lr^2L)}^t -1
        }{
            {\left(
                {\left(\gossmatrixtemporal{}{}\zsumlinearcorrelationtemporal{}{}\right)}^{(t)}\tilde{\Sigma}_{\tilde{Y}^{(t)}}\transpose{\left(\gossmatrixtemporal{}{}\zsumlinearcorrelationtemporal{}{}\right)}^{(t)}
            \right)}_{\tilde{w},\tilde{w}}
        },
    \end{align*}
    where $\tilde{\Sigma}_{\tilde{Y}^{(t)}}$ is a diagonal matrix representing the noise variances of all noises generated by the algorithm up to time $T$,
    $\zsumlinearcorrelationtemporal{}{t}$ is a block-diagonal matrix representing the correlation factor at each iteration $t$, 
    and $\gossmatrixtemporal{}{t}$ is the accumulation of all the powers of the gossip matrix defined in~\cref{sec:matrix notations}.
\end{restatable}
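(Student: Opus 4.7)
The plan is to reduce the Rényi divergence in the definition of PNDP (\cref{def:pndp}) to a Gaussian quadratic form via \cref{lem:gaussian rényi bound}, and then bound the mean-shift of that Gaussian by tracking how the single-node data perturbation at node $a$ propagates through $T$ iterations of gossip and gradient steps in the virtual system of \cref{sec:matrix notations}.

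First I would assemble the privacy view $\pview{v}{D}{T}$ of node $v$ into a single random vector over the $T$ rounds. Using the virtual-system formulation and iterating the update rule \eqref{eq:matrix update rule} together with \cref{ass:gradient gaussian perturbation}, one obtains, by an unrolling argument essentially identical to \cref{lem:unrolled distribution} but applied jointly to all $T$ rounds rather than only the terminal time, that the stacked view is Gaussian with mean $\unnoisedmodelpvirtual{}{T}$ (a deterministic function of the data) and covariance $\gossmatrixtemporal{}{T}\zsumlinearcorrelationtemporal{}{T}\Sigma_{\ynoisetemporal{}{T}}\transpose{(\gossmatrixtemporal{}{T}\zsumlinearcorrelationtemporal{}{T})}$ which is data-independent. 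In particular, the views under $\mathcal{D}$ and $\mathcal{D}'$ share the same covariance, so \cref{lem:gaussian rényi bound} yields
\begin{align*}
    \renyi{\pview{v}{D}{T}}{\pview{v}{D'}{T}}
    = \frac{\alpha}{2} \transpose{\delta}\Sigma^{-1}\delta ,
\end{align*}
where $\delta = \unnoisedmodelpvirtual{}{T}(\mathcal{D}) - \unnoisedmodelpvirtual{}{T}(\mathcal{D}')$ is the mean shift. I would then upper-bound this quadratic form by a sum over diagonal entries, separating one term per observed coordinate $(\hat v,\hat w, t)$, which is what produces the sum-structure of the stated bound, with each denominator reading as the corresponding diagonal entry of $(\gossmatrixtemporal{}{}\zsumlinearcorrelationtemporal{}{})^{(t)}\tilde{\Sigma}_{\tilde{Y}^{(t)}}\transpose{(\gossmatrixtemporal{}{}\zsumlinearcorrelationtemporal{}{})^{(t)}}$.

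Next, I would bound the numerator $\|\delta\|^2$ along each coordinate by a recursive argument on iteration count. Because $\mathcal{D}\sim_a \mathcal{D}'$, the only fresh sensitivity is at node $a$, and \cref{ass:input_sensitivity} bounds the per-step gradient perturbation by $\gamma\Delta$. At each iteration the mean shift is propagated by the linear gossip operator $\mixingmatrix\gossmatrixvirtual{}{}$ and then fed back through a new gradient step, whose $L$-smoothness (\cref{ass:smoothness}) contracts or amplifies the shift by at most a factor $L$ on the corresponding virtual coordinate. Combining the inequality $(a+b)^2 \le 2a^2 + 2b^2$ with the smoothness factor $L$ at each step gives a scalar recursion of the form $\delta_{t+1}^2 \le (2 + 4\gamma^2 L)\,\delta_t^2 + 2\gamma^2\Delta^2$, whose closed form is precisely $\bigl((2+4\gamma^2L)^t - 1\bigr)\,\frac{2\gamma^2\Delta^2}{1+4\gamma^2L - 1}$, so that after multiplying by $\alpha/2$ and absorbing the $1/L$ factor coming from the $L\Sigma$ covariance scaling in \cref{ass:gradient gaussian perturbation} one recovers the prefactor $\frac{2\alpha\gamma^2\Delta^2}{L+4\gamma^2L^2}$.

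The main obstacle is the bookkeeping in the second step: one must show that when the mean shift is pushed through the joint action of $\gossmatrixtemporal{}{}$ and $\zsumlinearcorrelationtemporal{}{}$, the contribution at each coordinate $(\hat v,\hat w,t)$ can be isolated so as to divide by exactly the diagonal covariance entry at $(\hat w,\hat w)$ rather than by the full inverse covariance. This requires a careful decomposition of $\Sigma^{-1}$ through a sum-of-squares inequality of the form $\transpose{\delta}\Sigma^{-1}\delta \le \sum_i \delta_i^2/\Sigma_{ii}$ (valid up to a correction captured by the factor $(2+4\gamma^2L)^t$ above), combined with verifying that the $\zsumlinearcorrelationtemporal{}{}$-action, being locally zero-sum per \cref{lem:noise cancellation}, does not create extra mean-shift contributions beyond those already counted. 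Once this coordinate-wise decomposition is in place and the recursion is solved, summing over $t$ and over the set of virtual observation pairs $(\hat v,\hat w)$ produces the announced bound.
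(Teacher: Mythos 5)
Your numerator analysis is essentially the paper's: the recursion $\delta_{t+1}^2 \le (2+4\lr^2L)\delta_t^2 + 4\lr^2\adjacentdatasetgradientbound^2$ driven by \cref{ass:input_sensitivity} and \cref{ass:smoothness} is exactly \cref{lem:unnoised execution norm bound}, and the use of \cref{ass:gradient gaussian perturbation} to center the noisy execution on an unnoised one is \cref{lem:unrolled distribution}. The genuine gap is in the step you yourself flag as the "main obstacle". You stack the whole privacy view into a single Gaussian vector, apply \cref{lem:gaussian rényi bound} to get $\frac{\alpha}{2}\transpose{\delta}\Sigma^{-1}\delta$, and then propose to pass to the stated bound via an inequality of the form $\transpose{\delta}\Sigma^{-1}\delta \le \sum_i \delta_i^2/\Sigma_{ii}$. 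That inequality is false for correlated Gaussians: for a positive-definite $\Sigma$ one has $(\Sigma^{-1})_{ii} \ge 1/\Sigma_{ii}$, and with strong off-diagonal correlation (take $\Sigma_{12}\to\Sigma_{11}$ and $\delta$ aligned with the small eigenvector) the quadratic form blows up while $\sum_i \delta_i^2/\Sigma_{ii}$ stays bounded. This is precisely the regime of \sys: the noises observed by the attacker are correlated by construction through $\zsumlinearcorrelationtemporal{}{t}$, and a noise injected at round $t$ persists in all later messages through $\mixingmatrix\gossmatrixvirtual{}{}$, so the joint covariance of the view is neither diagonal nor block-diagonal across time and you cannot invoke independence to rescue the coordinate-wise split. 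Nor can the factor $(2+4\lr^2L)^t$ absorb the error: in the theorem that factor comes entirely from the numerator drift you already derived, not from controlling $\Sigma^{-1}$.

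The paper avoids ever inverting the correlated covariance. It first bounds the divergence of the whole view by the sum of divergences of the individual observed messages,
\begin{align*}
    \renyi{\pview{V}{D}{T}}{\pview{V}{\dot{D}}{T}}
    \le \sum_{t=0}^{T-1}\sum_{\hat{v}\in\hat{V}}\sum_{\hat{w}\in\neighborsvirtual{\hat{v}}{t}}
    \renyi{\modelsvirtual{\hat{w}}{t}}{\modelspvirtual{\hat{w}}{t}},
\end{align*}
and only then applies \cref{lem:gaussian rényi bound} to each scalar marginal $\modelsvirtual{\hat{w}}{t}\sim\mathcal{N}\bigl(\unnoisedmodelvirtual{\hat{w}}{t}, L(\Sigma_t)_{\hat{w},\hat{w}}\bigr)$, so only the diagonal entries $(\Sigma_t)_{\hat{w},\hat{w}}$ ever appear and the quadratic form is one-dimensional. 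To repair your argument you would need to either adopt this marginal-then-sum decomposition or prove a legitimate bound on $\transpose{\delta}\Sigma^{-1}\delta$ that exploits the specific structure of $\gossmatrixtemporal{}{}\zsumlinearcorrelationtemporal{}{}$; as written, the coordinate-wise bound on the inverse covariance does not hold and the proof does not go through.
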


In essence, a node's privacy loss increases over time, and the influence of the privacy mechanism is denoted by the denominator: this term accounts for all the noises received by the virtual node $\hat{w}$.
On the other hand, the numerator accounts for how models drift away from each other.

If we consider that all nodes have the same privacy parameter $\privacyparameter{}$, then the denominator becomes akin to the norm of $(\tilde{W}\tilde{M})^{(t)}_{\hat{w}}$, which is similar to~\cite{cyffersMuffliatoPeertoPeerPrivacy2022}.

This result is a double sum over time and the attacker's neighbors, since in our notation $\neighborsvirtual{\hat{v}}{t}$ is a set containing at most one value that translates whether $w$ is in $\neighbors{v}{t}$ or not.

\begin{remark}
    This result naturally extends to colluding nodes if we consider $\hat{V} = \bigcup_{v\in V} \set{n(v-1)+k \mid k\in \nodeset}$ to be the set of colluding nodes.
    We can thus have a similar bound of $\privacyzsum{a}{V}{T}$, for $V\subset \nodeset$ a set of colluding nodes.
\end{remark}

Even if the matrices considered here are of large dimensions, this bound can be computed in practice since their underlying matrices are sparse:
either they are diagonal by block, or some have only one element by line. 
For instance, both $\tilde{M}$ and $\tilde{\Sigma}_{\tilde{Y}^{(t)}}$ are diagonal by block since the noises generated at each iteration are independent.

\begin{remark}
    In practice,~\cref{ass:gradient gaussian perturbation} may not always hold accurately. To capture the ripple effect of this inaccuracy and bound the privacy loss in this scenario, one may define an error term stemming from~\cref{ass:gradient gaussian perturbation}, using Corollary 4 of~\cite{mironovRenyiDifferentialPrivacy2017}, and add the following term to~\cref{thm:zsumSgdPrivacy}:
    \[
    D_{\infty}\left( 
    \modelsvirtual{}{T} 
    \| 
    \mathcal{N}\left(
            \unnoisedmodelvirtual{}{T}
            ,L\gossmatrixvirtual{}{T}\zsumlinearcorrelationtemporal{}{t}\Sigma_{\ynoisetemporal{}{t}}\transpose{(\gossmatrixvirtual{}{T}\zsumlinearcorrelationtemporal{}{t})}
        \right)
    \right)
    \]
\end{remark}

\subsection{\sys-avg privacy analysis}%
\label{subsec:privacy zsum-avg}
We also focus on the privacy of~\cref{alg:ZeroSum} as a pure averaging algorithm. 
This removes gradient from the proof of~\cref{thm:zsumSgdPrivacy}, and thus~\cref{ass:gradient gaussian perturbation} is not needed.
By following the same proof with a simpler update rule, we can derive a more tractable term, 
\begin{restatable}{theorem}{ThmPndpZsumAvg}%
    \label{thm:zsum-avg pndp}
    $T$ iterations of~\cref{alg:ZeroSum} satisfy $(\alpha,\privacyzsum{a}{v}{T})$-\ac{PNDP},
    where $\privacyzsum{a}{v}{T}$ is bounded for any two nodes $a,v\in\nodeset$ by:
    \begin{align*}
        \frac{\alpha\Delta^2}{2}\sum_{t=0}^{T-1}\sum_{\hat{v}\in\hat{V}} \sum_{\hat{w}\in\neighborsvirtual{\hat{v}}{t}}
        \frac{
            \left((\mixingmatrix\gossmatrixvirtual{}{})^T\right)_{\hat{w},\hat{a}}
        }{
            \left(
                (\tilde{W}\zsumlinearcorrelationtemporal{}{})^{(t)} \Sigma_{\tilde{Y}} \transpose{(\tilde{W}\zsumlinearcorrelationtemporal{}{})^{(t)}} 
            \right)_{\tilde{w},\tilde{w}}
        },
    \end{align*}
    where 
    \begin{align*}
        \gossmatrixtemporal{}{T}:= 
        \begin{pNiceArray}{ccc}
            (\mixingmatrix\gossmatrixvirtual{}{})^T, &
            \dots, & 
            \mixingmatrix\gossmatrixvirtual{}{}
        \end{pNiceArray}.
    \end{align*}
\end{restatable}

\begin{remark}%
    \label{rem:full-averaging pndp}
    This theorem generalizes the result of~\cite{cyffersMuffliatoPeertoPeerPrivacy2022}
    by introducing
    the correlation matrix between all the generated noises $\zsumlinearcorrelationtemporal{}{t}$.
    Applied to the algorithm presented in~\cite{cyffersMuffliatoPeertoPeerPrivacy2022}, the correlation matrix $\zsumlinearcorrelationtemporal{}{t}$ in the above expression would instead be the identity matrix.
    Additionally, the numerator is also the same as the one of the original work, as we have $\left((\mixingmatrix\gossmatrixvirtual{}{})^T\right)_{\hat{w},\hat{a}} = (\gossmatrix{}{})_{w,a}^T$ where $w,a$ are the nodes associated to the virtual nodes $\hat{w},\hat{a}$.
\end{remark} 
\section{Evaluation}%
\label{sec:evaluation}

We evaluate \sys{} on two practical learning tasks, image classification (on CIFAR-10) and movie recommendation (on \movielens{}). We compare \sys{}'s performance\footnote{All code used in this section can be found at \url{https://github.com/dimiarbre/ZIP-DL}} to that of two baselines: \ac{D-PSGD}~\cite{lian2017dpsgd}, and Muffliato~\cite{cyffersMuffliatoPeertoPeerPrivacy2022}, a privacy-preserving \ac{DL} algorithm.
The comparison focuses on two aspects: (i) the tradeoff between privacy (measured as the ROC-AUC of two membership inference attacks) and model utility (top-1 test accuracy on CIFAR-10, and test loss on \movielens{}), and (ii) the cost of such privacy in terms of communication overhead.
For the sake of completeness, we also consider \ac{TPR} at low \ac{FPR} rates for one of these attacks, in line with existing literature~\cite{carliniMembershipInferenceAttacks2022}. Those results are reported in~\cref{subsec:TPR at low FPR}.

\subsection{Experimental setup}%
\label{sec: experimental setup}

\paragraph{Communication graph}
Throughout the evaluation, we use \num{100} nodes connected in a 6-regular communication graph.
We assume all nodes are online and available at all times unless stated otherwise.
The experiments with node dropouts are presented in \Cref{subsec:node dropout}.

\paragraph{Baselines}

We compare \sys with two baselines: \ac{D-PSGD}~\cite{lian2017dpsgd}, a decentralized stochastic gradient descent algorithm without privacy guarantees (labeled \emph{No noise} in the figures), and Muffliato, a state-of-the art privacy-preserving \ac{DL} algorithm.
To cover different communication settings, we consider two scenarios:
\textit{(i)} using $1$ averaging round per training iteration, as in typical \dpsgd{}, and 
\textit{(ii)} using $10$ averaging rounds per training iteration, as recommended for Muffliato when applied to our network topology~(Theorem 5 in~\cite{cyffersMuffliatoPeertoPeerPrivacy2022}).
In practice, using $10$ averaging rounds ensures each node obtains an almost global average model.

\paragraph{First Learning Task --- \cifar}
We evaluate \sys{} and the two baselines on the image classification task of \cifar~\cite{krizhevsky2014cifar} using a \gn{}-ResNet18~\cite{heDeepResidualLearning2016}.
We opted for \gn{} layers~\cite{wuGroupNormalization2018} over the traditional Batch Normalization layers due to their superior compatibility with differential privacy mechanisms, especially in decentralized scenarios~\cite{yuNotLetPrivacy2021,nasirigerdehKernelNormalizedConvolutional2023}.
The training set comprises \num{50000} data samples and the test set \num{10000} data samples, spread uniformly between the nodes.
The neural network has \num{11189312} trainable parameters. For utility, we consider the top-1 test accuracy for \cifar{}.

\paragraph{Second Learning Task --- \movielens{}} We consider a recommendation task on the \movielens{} dataset~\cite{harperMovieLensDatasetsHistory2016}.
We use \textit{movielens-small}\footnote{\url{https://files.grouplens.org/datasets/movielens/ml-latest-small-README.html}}, a dataset of  \num{100836} ratings from \num{610} users on \num{9742} movies, where each user has rated at least 20 movies.
Given \movielens is naturally partitioned between users, we allocate users to a given node. We use a matrix factorization model with the \ac{SGD} optimizer. While it is similar to a classification task, it provides three new aspects in our experimental evaluation: 
\begin{enumerate*}[label=\emph{(\roman*)}]
    \item the underlying model is linear,
    \item the data is naturally \niid{},
    \item the model outputs numerical rating estimates (between $1$ and $5$).
\end{enumerate*}
Because model outputs are numerical,
we use the RMSE on predicted ratings (which corresponds to the task's test loss) as the utility metric for \movielens{}.

\paragraph{Noise levels}
Both \sys{} and Muffliato use noise levels of the form $k\sigma$, with $k\in \{2^0,2^1,\ldots,2^8\}$ and $\sigma$ such that $128\sigma=0.225$.
This range of values covers a broad set of behaviors for both \sys{} and Muffliato on both tasks (CIFAR-10 and Movielens).
Some additional, intermediary values of $k$ are also used for relevant plots. 
More precisely, we directly use $k\sigma$ as the standard deviation of noise in Muffliato, while we derive a uniform privacy parameter $\privacyparameter{a}$ from $k\sigma$ for \sys{} using the formula in \cref{rem:s-regular topology}.
Doing so ensures that the standard deviation of pairwise noises $\zsumstdmulti{a}{v}{t}$ is $k\sigma$ in both approaches.

\paragraph{Privacy attacks}

We evaluate the privacy of the algorithms against an \textit{honest-but-curious} attacker described in \Cref{subsec:threat model} using two membership inference attacks (MIA).
We apply \textit{(i)} a threshold-based attack~\cite{shokriMembershipInferenceAttacks2017, carliniMembershipInferenceAttacks2022},
and \textit{(ii)} a more advanced classifier-based attack described in~\cite{zariEfficientPassiveMembership2021a} and inspired by~\cite{nasrComprehensivePrivacyAnalysis2019}.
Both attacks seek to determine whether a victim node used a specific data point to train its local \ac{ML} model.
Taking an example as input, both attacks base their decisions on the example's loss computed by the node's local model, under the assumption that lower losses are indicative of training examples. The effectiveness of these attacks is evaluated using the \ac{AUC} of the \ac{TPR} plotted as a function of the \ac{FPR}. (This curve is known as \emph{Receiver operating characteristic} or ROC, hence the shorthand ROC-AUC.)

The threshold attack reaches its decision by comparing the example's loss obtained from the victim's final model to some fixed threshold.
While simple, this approach establishes a baseline for privacy vulnerability: if such an attack proves successful, it implies that more sophisticated methods are likely to succeed as well~\cite{carliniMembershipInferenceAttacks2022}.

While the threshold attack uses a single model, the classifier attack records multiple models shared by a single victim at different time stamps during the training process. 
For a given data example, the attacker considers the time series of the loss of this example on those models.
The aim of the classifier is then to classify this time series as either a member or non-member of the victim node's local dataset. 
More precisely, the attack uses a \ac{FCN} binary classifier with 2 hidden layers and uses the losses of $26$ models obtained at fixed intervals during the victim's training process (including its first and final model).
The binary classifier is trained on $70\%$ of the victim's node local dataset (positive training examples) and $70\%$ of the test data (negative training examples). Examples are re-weighted so that both classes have the same weight in the training. The attack is then evaluated on the remaining examples (local and test) not used for training. 
Because it requires a training phase with sufficient data, this attack assumes the attacker possesses considerable knowledge about the victim's dataset and is therefore much more aggressive than the threshold attack.

\paragraph{Utility-privacy trade-off and communication cost}
In addition to each algorithm's privacy, we measure the best utility across models during training, computed on each task's test dataset (using top-1 accuracy for CIFAR-10, and RMSE on \movielens{} MovieLens). We further measure the overall communication cost of the entire training process (measured in Teribytes).
The utility and privacy measures are averaged over all nodes.

\begin{figure}[t]
	\centering
	\inputplot{plots/aadv_tradeoff_cifar10_iid_threshold}{0}
	\caption{
		Maximum accuracy reached as a function of both attacks results on \cifar{}.
		Color intensity represents a higher noise level. In all cases, the tradeoff is better for~\sys{}.
	}%
	\label{fig:cifar_tradeoffs}
    \Description{A set of four plots, each depicting two approaches: Muffliato and \sys{} at different noise levels for the last two. In addition, a single point for each point representing No Noise is present to set a baseline. Each plot has the attack ROC-AUC as the x-axis, and the maximum test accuracy as the y-axis. The top plots represent results for 10 averaging steps, while the bottom plots for 1 averaging step. The left is for the threshold attack, while the right is for the classifier attack. In all cases, increasing noise levels make the attack ROC-AUC decrease. However, the accuracy decrease is more prominent for Muflliato, yielding to a better tradeoff for \sys{}. Additionally, for the top plots, no decrease in accuracy can be found for \sys{}, but the decrease in attack ROC-AUC is still present.}
\end{figure}

\subsection{\sys{} privacy-utility tradeoff}%
\label{subsec:tradeoff privacy-accuracy}

\begin{figure}[t]
	\centering
	\inputplot{plots/aadv_tradeoff_movielens_threshold+classifier}{1}
	\caption{
		Minimum test loss as a function of attack results for varying noise levels on \movielens. Color intensity represents a higher noise level. \sys{} consistently shows better utility for equivalent attacker advantage. 
	}%
	\label{fig:movielens_tradeoffs}
    \Description{Two side-by-side plots, each depicting two approaches: Muffliato with 10 averaging steps and \sys{} with 1 averaging step, both at multiple noise levels. In addition, a single point for each plot representing No Noise is present to set a baseline. Each plot has the attack ROC-AUC as the x-axis, and the minimum test loss (RMSE Loss) as the y-axis. The left is for the threshold attack, while the right is for the classifier attack. In all cases, increasing noise levels make the attack ROC-AUC decrease. However, the RMSE increase is more prominent for Muflliato. For instance, on the right plot, the test loss suddenly increases without decreasing the attack ROC-AUC at around 75\% ROC-AUC for Muffliato, whereas it only happens at around 65\% ROC-AUC for \sys{}, yielding to a better tradeoff for \sys{}. On the right plot, there is also a line at around 1.4 test loss, with a 12\% ROC-AUC difference.
    }
\end{figure}

\Cref{fig:cifar_tradeoffs,fig:movielens_tradeoffs} show the privacy-utility trade-off of Muffliato (blue curve) and \sys (red curve) for multiple noise levels (represented by the filling color of the data points). The privacy-utility trade-off of \ac{D-PSGD} (``No noise'', i.e. no privacy protection) is shown for reference as a purple cross in a circle. \Cref{fig:cifar_tradeoffs} charts the results obtained on CIFAR-10, while \Cref{fig:movielens_tradeoffs} plots those obtained with \movielens{}.

\paragraph{CIFAR-10 Results} In \Cref{fig:cifar_tradeoffs}, the average test accuracy across nodes is shown vertically (higher is better) and the attack's success (ROC-AUC) horizontally, with lower (and better) values on the left. A $50\%$ ROC-AUC indicates the attack has been neutralized, as in this case the attacker performs similarly to a random binary classifier. Better results are in the top-left corner (higher utility for a lower attack success). \Cref{fig:cifar_tradeoffs} presents the results obtained for the two attacks in two communication settings: using 10 averaging steps (top row), and using 1 averaging step (bottom row). Results using the (basic) threshold attack are shown on the left, while those with the (stronger) classifier attack are on the right.

The charts show \sys either matches or outperforms Muffliato in all four combinations. \sys's advantage is notable at higher levels of protection when the ROC-AUC measure approaches $50\%$, where \sys provides a better protection for the same utility, or a better utility for the same protection. This is particularly visible for 10 steps averaging (top row), where \sys is able to neutralize the attack with close to no drop in accuracy. For instance, under the Threshold Attack, \sys achieves a mean ROC AUC of $50.07\%$ for an accuracy of $75.56\%$, a drop of only $0.25\%$ percentage point against \emph{No Noise} ($75.81\%$). \sys yields similar results under the Classifier Attack ($50.42\%$ ROC-AUC for the same test accuracy). 
This represents a substantial improvement over Muffliato which collapses at higher noise levels. For instance, under the Threshold Attack, Muffliato reaches a ROC-AUC of $50.13\%$ at an accuracy of only $15.95\%$, representing a drop of $59.86\%$ against \sys (shown as a vertical cyan arrow on the figure.)

\paragraph{\movielens{}} In contrast to \Cref{fig:cifar_tradeoffs}, \Cref{fig:movielens_tradeoffs} captures model utility through average RMSE across nodes (test loss, vertical axis), where lower values are better. \Cref{fig:cifar_tradeoffs} shows the results of the two attacks (Threshold Attack, left, and Classifier Attack, right) with different communication set-ups for the two competitors. While Muffliato uses 10 averaging steps (following the guidelines of~\cite{cyffersMuffliatoPeertoPeerPrivacy2022}), \sys only uses one. This allows Muffliato to converge to an almost exact average model across all 100 nodes between each learning iteration, while forcing \sys to rely on imperfect averages limited to a node's immediate neighborhood. In spite of this disadvantage, \sys clearly outperforms Muffliato on this task. Under the Threshold Attack, \sys reaches a ROC-AUC of $52.84\%$ for a test loss of $1.32$ (second point from the left), a loss increase of only $0.21$ against \emph{No Noise} (ROC-AUC $60.20\%$ for a test loss of $1.11$), while Muffliato's ROC-AUC never gets below $56.535\%$, with a loss that diverges rapidly as noise increases.
On the more powerful Classifier Attack, both \sys and Muffliato grant a lesser protection. In spite of this, \sys's protection advantage over its competitor is even higher, yielding a ROC-AUC of $63.87\%$ at a loss of $1.32\%$, an improvement of close to $12\%$ ROC-AUC points over the protection granted by Muffliato for a close to identical loss (ROC-AUC of $74.90\%$ for a loss of $1.42$, shown as an horizontal cyan arrow on the figure).
For completeness, a figure detailing all averaging steps, as outlined in~\cref{fig:cifar_tradeoffs}, can be found in~\cref{sec:additional experiments}.

\paragraph{Comparison} Comparing the results obtained on the two tasks, we observe clearly distinct behaviors:
\begin{itemize}
	\item \cifar{}: We observe in~\cref{fig:cifar_tradeoffs} that even small perturbations can significantly reduce the attack efficiency. However, a higher noise level simply reduces the model's utility. 
	\item \movielens{}: By contrast, low noise levels have minimal influence on attack performance. But higher noise levels succeed in consistently dampening the attack's efficiency.
\end{itemize}
We conjecture that these different behaviors result from the fact that  \movielens{} considers a metric space in which the values of loss have a direct meaning for the attacker, while loss values are not directly significant in \cifar{}.  This is also the reason why we consider test accuracy on \cifar{} and the test loss on \movielens{}. 
Regardless of these considerations, however, our results show that \sys{} outperforms Muffliato in both cases.

\begin{figure}[t]
	\centering
	\inputplot{plots/aadv_bars_nodyn}{2}
	\caption{
		Best average test accuracy at different noise levels on \cifar. \sys{} is able to consistently reach higher accuracies, even for much higher noise levels.
	}%
	\label{fig:aadvbars_nodyn}
\end{figure}

\paragraph{Impact of the noise level} \Cref{fig:aadvbars_nodyn} compares the best accuracy reached by \sys and Muffliato for various noise levels on \cifar{}.
In contrast to Muffliato, the accuracy of \sys is less sensitive to noise in the region of high test accuracy, \textit{i.e.}, \sys with a noise level of $8\sigma$ achieves similar test accuracy to Muffliato with a much lower noise level of $2\sigma$.
Furthermore, this remains true even when comparing \sys with Muffliato 10-steps (even though Muffliato has a $10\times$ communication cost). 
Interestingly, \sys with 10 averaging steps does not experience any decrease in accuracy for all the noise levels we evaluated, since the noise cancellation can happen without proceeding through gradient descent.
In conclusion, \sys demonstrates better convergence when compared to Muffliato for similar privacy vulnerabilities, without requiring additional averaging steps.

\subsection{Communication overhead}

\begin{figure}[t]
	\centering
	\inputplot{plots/muffliato_perf_v2}{3}
	\caption{
		Muffliato test accuracy with different numbers of averaging rounds for a noise level of $8\sigma$ (left) and $32\sigma$
		(right), compared to \sys (1-round). Even for high noise levels ($32\sigma$), Muffliato 10-steps marginally improves the performances over Muffliato 1-step, and \sys{} manages to beat both with only one averaging round.
	}%
	\label{fig:muffliato_learning}
\end{figure}

While basic DL and \sys limit themselves to a single averaging round per gradient descent step, Muffliato is designed to perform several of them to ensure the convergence of the averaging process.
The exact number of communication steps required depends both on the variance of the models and datasets at the nodes and on the spectral analysis of the communication graph~\cite{cyffersMuffliatoPeertoPeerPrivacy2022}.

In our experimental scenario and with the same distribution assumptions as in Muffliato's original paper~\cite{cyffersMuffliatoPeertoPeerPrivacy2022}, we find that Muffliato's performs best with at least 10 averaging steps.

\Cref{fig:muffliato_learning} shows the evolution of the test accuracy w.r.t. the number of iterations for basic DL, Muffliato (with 1 and 10 averaging rounds), and \sys (with 1 averaging round) for two different noise levels. For both settings, we observe that Muffliato-10 is as accurate or more accurate than Muffliato-1.
As evidenced by \cref{fig:muffliato_learning}, these additional averaging rounds have a heavier impact on the behavior of the test loss for higher noise levels, such as $32\sigma$, but do not necessarily have much of an impact from a convergence point of view for lower noise levels, such as $8\sigma$.

Our baseline~\cite{cyffersMuffliatoPeertoPeerPrivacy2022} also considers Chebyshev polynomials for faster convergence.
However, this only partially reduces the required averaging rounds and does not affect the results of~\cref{subsec:tradeoff privacy-accuracy} that only considers privacy and utility properties.

\begin{figure}[t]
	\centering
	\inputplot{plots/databars}{4}
	\caption{
		Communication cost to reach \num{50}\% accuracy for \cifar.
		Muffliato fails to reach this target for higher noise levels.
	}%
	\label{fig:databars}
\end{figure}

The addition of noise in both \sys{} and Muffliato does not affect only the final utility of the models. In some cases, this noise increases the number of learning iterations required for accuracy to converge, as the learning process must compensate for this noise, yielding more training time and communication overhead while also potentially exposing more information.
We focus on this communication overhead and measure it using the total number of bytes transferred to reach $50\%$ top-1 accuracy for both \sys{} and Muffliato.
\Cref{fig:databars} shows the communication overhead in \SI{}{\tebi\byte} for increasing noise levels on the \cifar{} dataset.
Being sensitive to the noise level, Muffliato does not even converge to an accuracy of $50\%$ for noise levels beyond $16\sigma$.
\sys{}, therefore provides better privacy guarantees while limiting the communication overhead.

Finally, we also observe in~\cref{fig:databars} that performing 10 averaging steps has a significant overhead when considering sizeable models, especially in combination with small local datasets that require frequent communication rounds between gradient descents. Here, running with 10 averaging steps yields an order of magnitude more communication cost to reach similar accuracy.

\begin{figure}[t]
	\centering
	\inputplot{plots/databars_movielens}{5}
	\caption{
		Communication cost to reach \num{1.25}$\times$ the best test loss of unnoised \ac{DL} for \movielens.
		Muffliato fails to reach this target for higher noise levels.
	}%
	\label{fig:databars_movielens}
\end{figure}

All those observations are reflected in~\cref{fig:databars_movielens}, which represents a comparable plot for the \movielens dataset. Unlike the previous experiments, we focus here on the communication cost required to achieve a given target test loss rather than a target test accuracy, due to the nature of the \movielens task. The target test loss is determined by adding $25\%$ to the best unnoised test loss.

\subsection{Other evaluation metrics}%
\label{subsec:TPR at low FPR}

\begin{figure}[ht!]
    \centering
    \inputplot{plots/aadv_tradeoff_classifier_tpratlowfpr}{6}%
    \caption{\ac{TPR} at low \ac{FPR} for the \movielens{} dataset.}%
    \label{fig:movielens_tpr_at_low_fpr}
\end{figure}

For completeness, we report in~\cref{fig:movielens_tpr_at_low_fpr} the privacy/utility tradeoff of \sys and Muffliato using the attacker's \ac{TPR} at a low \ac{FPR} on the \movielens{} dataset. Intuitively, this represents the attacker's ability to identify valid training examples (\ac{TPR}) when seeking to make close to no error (low \ac{FPR}). We consider \ac{FPR} values of both $0.1\%$ and $1\%$, following~\cite{carliniMembershipInferenceAttacks2022}.
For a \ac{FPR} of $0.1\%$, the attack can be thwarted efficiently (down to a \ac{TPR} of about $2.5\%$) with close to no impact on the test loss with both approaches, rendering a fined-grained comparison difficult.
But for a \ac{FPR} of $1\%$, \sys{} provides a better tradeoff between test-loss and attack success, delivering better predictions for low \ac{TPR} values ($\text{TPR}<10\%$, bottom-left corner of the right-handed subfigure).

To further motivate our choice of attacks, we tried implementing a more recent \ac{MIA}~\cite{carliniMembershipInferenceAttacks2022}, but obtained almost worse results than random guesses in our \ac{DL} scenarios. We conjecture this is due to both the \niid{} nature of the data distribution in \ac{DL} and the fact that \ac{DL} usually considers small local training datasets. We believe these render this attack (and most shadow training attacks) impractical, and we view this as a promising avenue for future works. 
\section{Related work}%
\label{sec:related works}

\paragraph{\textbf{Correlated noises}} Correlated noises are a natural choice when seeking to reduce the utility cost of privacy. However, most of the literature focuses on correlation across nodes~\cite{sabaterAccurateScalableVerifiable2022,imtiazCorrelatedNoiseAssistedDecentralized2021,allouahPrivacyPowerCorrelated2024}.
In order to apply such correlation, participating nodes need to rely either on a trusted aggregator, so that the noise can cancel out~\cite{imtiazCorrelatedNoiseAssistedDecentralized2021,sabaterAccurateScalableVerifiable2022}, or
on an agreement between nodes~\cite{sabaterAccurateScalableVerifiable2022,allouahPrivacyPowerCorrelated2024}. We argue that the former is not always achievable, nor desirable, and the latter comes at a cost in terms of communication or  utility~\cite{allouahPrivacyPowerCorrelated2024}.

A recent approach also leveraging correlated noises, DECOR~\cite{allouahPrivacyPowerCorrelated2024}, assumes that the channels between nodes may get compromised, and considers an adversary with access to every message transmitted on the network. 
Under this strong threat model, DECOR leverages shared secrets and introduces a novel privacy criterion, \emph{secret-based local differential privacy} (SecLDP), which is orthogonal to \ac{PNDP} considered in this paper. 
In particular, SecLDP is conditioned on the number of pairwise secrets compromised by the attacker. 
To counter such a strong adversary, DECOR injects a combination of independent and correlated noises that require pairwise coordination between nodes. 
This strategy is overkill in the presence of honest-but-curious adversaries and comes at a cost in terms of convergence and accuracy, a drawback \sys does not exhibit (see \cref{subsec:convergence rate}).

\paragraph{\textbf{Other methods to achieve privacy}}
Other variants of correlated noise, such as \emph{secret sharing}~\cite{shamirHowShareSecret1979} can be used in the context of~\ac{DL}~\cite{mcmahan2017communicationefficient}.
While additive secret sharing does not necessarily involve coordination among nodes or a trusted aggregator, it requires multiple averaging rounds to reconstruct the shared average.
Thus, additional operations, such as gradient descent, cannot be mixed together with the communication process, leading to prohibitive communication costs. 
\sys{}, on the other hand is able to mix together communication and gradient computation (see \cref{tab:comparison}).

Other cryptographic approaches include \emph{secure multiparty computation}~\cite{kanagavelu2022fed} and \emph{secure aggregation}~\cite{bonawitz2017practical}.
In these techniques, nodes agree on masks that conceal local models during the averaging process.
Despite providing exact solutions to model averaging, they impose a significant drawback by requiring nodes to coordinate in order to set up and remove the masking.
In large and dynamic distributed systems, this requirement may prove infeasible, especially in real-world scenarios involving mobile devices.
For this reason, we designed our approach to avoid such coordination.

\paragraph{\textbf{Combining with other privacy mechanisms}}
Because our approach requires no additional communication between nodes, it can also be used in combination with other approaches. For instance, uncorrelated noises could also be added to our approach.
This would allow to have an intermediary approach, with possibly stronger privacy protection. Combining correlated and uncorrelated noises was proved to be possible~\cite{allouahPrivacyPowerCorrelated2024} for other privacy definitions, the main difference with our approach being how the correlation is performed. Importantly, our approach does not necessitate any sort of coordination, making it friendly to combine with other works.
However, this work focuses on fully correlated noises and their impact, and the study of such combinations is left for future work. 
\section{Conclusion}
\label{sec:conclusion}

\Ac{DL} makes a step towards privacy in collaborative learning by preventing raw data sharing. 
However, models shared between nodes still leak private information.
We introduce \sys, which enhances privacy in \ac{DL} by injecting correlated noise into shared models.
\sys does not introduce additional messages or require any sort of coordination across nodes, hence having minimal impact on communication cost while keeping convergence rates on par with the state-of-the-art.
In particular, the noise introduced by \sys has a provably lower impact on the convergence rate of the system than other similar approaches. \sys can thus be used as a basic privacy addition even in high-performance regimes where traditional privacy-preserving mechanisms may be unusable because of utility degradation.
We prove formal privacy guarantees for \sys in terms of \ac{PNDP}, bounding the privacy leakage of a node.
Experimental results confirm \sys's superior privacy-accuracy tradeoff under two paradigms of membership inference attacks with different levels of underlying strengths of the adversary's knowledge. 
\sys performs particularly well on attacks that do not require crossing information across iterations, which are the most studied practical attack scenarios.
Future work will explore broader scenarios beyond the initial assumptions of symmetric gossip matrices and behavior of a noisy gradient, aiming to extend \sys's applicability and robustness guarantees. 

\begin{acks}
    Co-authors affiliated to EPFL have  been funded by the Swiss National Science Foundation, under the project `FRIDAY: Frugal, Privacy-Aware and Practical Decentralized Learning', SNSF proposal No. 10.001.796.
    
    Experiments presented in this paper were carried out using the Grid'5000 testbed, supported by a scientific interest group hosted by Inria and including CNRS, RENATER and several Universities as well as other organizations (see \href{https://www.grid5000.fr}{https://www.grid5000.fr}).

    This work was performed using HPC resources from GENCI-IDRIS (Grant 2024-AD011015352)
\end{acks}

\bibliographystyle{ACM-Reference-Format}
\bibliography{references}

\appendix
\crefalias{section}{appendix}

\begin{table*}[ht]%
    \label{tab:symbol-table}
    \caption{List of the main symbols used in this work.%
    }
    \centering
    \begin{tabular}{|c|l|}
        \hline
        Symbol &  Usage 
        \\\hline $\nodeset$ & Set of all the nodes that participate in the training.
        \\\hline $n$ & Number of nodes in $\nodeset$.
        \\\hline $a,u,v $ & Nodes in $\nodeset$.
        \\\hline $\neighbors{a}{t,s}$ & Neighbors of node $a$ at averaging round $s$, after learning iteration $t$.
        \\\hline $\nodedegree{a}^{(t,s)} $ & Degree of node $a$ at averaging round $s$, after learning iteration $t$.
        \\\hline $\nodedegree{a} $ & Maximum degree of node $a$, over learning iterations and averaging rounds.
        \\\hline $\gossmatrix{}{t,s} $ & Gossip matrix at averaging round $s$, after learning iteration $t$.
        \\\hline $ p$ & Mixing parameter of the gossip matrices (\cref{ass:mixing_matrices}).
        \\\hline $\model{a}{t}$ & Model of node $a$ at learning iteration $t$.
        \\\hline $\avgmodel{t} $ & Average model at learning iteration $t$.
        \\\hline $\model{a}{t+\nicefrac{1}{2}}$ & Model of node $a$ at learning iteration $t$ after the gradient step.
        \\\hline $\avgmodel{t+\nicefrac{1}{2}} $ & Average model at learning iteration $t$ after the gradient step.
        \\\hline $\optimum$ & Optimal model.
        \\\hline $\sampledloss^*$ & Minimum of the global loss function.
        \\\hline $\datadistrib{a} $ & Data distribution of node $a$.
        \\\hline $ \datapoint{a}{t}$ & Data sample drawn from $\datadistrib{a}$.
        \\\hline $\loss{a}$ & Loss function of node $a$.
        \\\hline $\localloss{a} $ & Sampled (or expected) loss of node $a$ (\cref{eqn:DL_obj}).
        \\\hline $\sampledloss$ & Globally sampled loss (\cref{eqn:DL_obj}).
        \\\hline $\mu $ & Convexity constant (\cref{ass:smoothness}).
        \\\hline $L$ & Smoothness constant (\cref{ass:smoothness}).
        \\\hline $\gradientnorm{i}$ & Noise level at the optimum (\cref{ass:bounded gradient noise}).
        \\\hline $\gradientnoise{i}$ & Diversity of the data distribution at the optimum (\cref{ass:bounded gradient noise}).
        \\\hline $\lr $ & Stepsize of the gradient descent.
        \\\hline $ \ynoise{a}{v}{t}$ & Intermediate noise generated by node $a$ destined to $v$ at learning iteration $t$.
        \\\hline $ \zsumnoise{a}{v}{t}$ & \sys-averaging noise from node $a$ to node $v$ at learning iteration $t$.
        \\\hline $ \ynoisevar{a}$ & Variance of $\ynoise{a}{v}{t}$.
        \\\hline $ \zsumvarmulti{a}{v}{t}$ & Variance of $\zsumnoise{a}{v}{t}$.
        \\\hline $ \Delta $ & Adjacent datasets bound (\cref{ass:input_sensitivity}).
        \\\hline $\privacyzsumsingleround{a}{v}{T}$ & Privacy bound from node $a$ to node $v$ at timestamp T (\cref{def:pndp}). 
        \\\hline $ \modelsvirtual{}{t} $ & Virtual models vector at time $t$. 
        \\\hline $ \unnoisedmodelvirtual{}{t}$ & Unnoised virtual execution (with the same graphs and batches, but no noise) at time $t$. 
        \\\hline $\mixingmatrix$ & (Virtual) Mixing matrix (\cref{eq:mixingmatrix})
        \\\hline $\transpose{A}$ & Transpose of some matrix $A$.
        \\\hline
    \end{tabular}
\end{table*} 

\section{Additional experiments details}\label{sec:additional experiments}

\subsection{\cifar: \sys 1---step vs Muffliato 10---steps}
\begin{figure}[ht!]
    \vskip 0.2in
    \centering
    \inputplot{plots/aadv_tradeoff_cifar10_iid_10steps_threshold}{7}%
    \caption{The privacy-utility tradeoff of \sys{} compared to Muffliato 10-steps, on the \cifar dataset. Even though \sys reach a similar tradeoff to Muffliato 10-steps, \sys requires fewer communications.}%
    \label{fig:cifar_10steps_vs_1steps_tradeoff}
\end{figure}
As was done in~\cref{subsec:tradeoff privacy-accuracy} for \movielens{}, we also directly compare \sys{} to Muffliato with 10 averaging steps on \cifar{} in~\cref{fig:cifar_10steps_vs_1steps_tradeoff}. 
Looking at threshold attack, \sys{} offers a better tradeoff. On the other hand, the tradeoffs of both approaches are very similar for the classifier attack. But closing this gap that is present in~\cref{fig:cifar_tradeoffs} comes at the cost of $10$ averaging steps, thus inducing a significant communication overhead. Thus, \sys{} can achieve similar privacy properties compared to Muffliato 10-steps but reduces the communication overhead by a factor of $10$.  

\subsection{\movielens: \sys 1---step vs Muffliato 10---steps}
\begin{figure}[ht!]
    \centering
    \inputplot{plots/aadv_tradeoff_movielens_allattacks_allsteps}{8}%
    \caption{The privacy-utility tradeoff of \sys{} compared to Muffliato on \movielens, for multiple averaging steps.}%
    \label{fig:movielens_allattacks_allsteps}
\end{figure}

We also display the full tradeoffs for \movielens in~\cref{fig:movielens_allattacks_allsteps}, mimicking what was done for \cifar{} in~\cref{fig:cifar_tradeoffs}.
We observe a similar tendency: our approach systematically offers a better tradeoff.
Moreover, the tradeoff offered by Muffliato 1--steps is significantly worse compared to Muffliato 10--steps.
This further motivates our choice for~\cref{fig:movielens_tradeoffs} to directly compare \sys{} with 1 averaging step versus Muffliato with 10 averaging steps. 
\subsection{Clipping}%
\label{subsec:clipping}

\begin{figure}[ht!]
    \centering
    \inputplot{plots/movielens_clipping_convergence}{9}%
    \caption{Best test loss of \sys for different clipping parameters, on the \movielens dataset.}%
    \label{fig:movielens_clipping}
\end{figure}

In practice, \cref{ass:input_sensitivity} can be enforced through gradient clipping, a standard approach in \ac{DP}-\ac{SGD}~\cite{allouahPrivacyPowerCorrelated2024, abadiDeepLearningDifferential2016}. However, gradient clipping introduces additional constraints on theoretical convergence, and in some scenarios, it may even prevent convergence entirely~\cite{zhangUnderstandingClippingFederated2022}.

To assess the impact of clipping on the convergence guarantees of \sys{}, we evaluate its effect on convergence in~\cref{fig:movielens_clipping}. Our experiments focus on the \movielens{} dataset with two clipping parameters: $\Delta = 1$ and $\Delta = 0.5$.

\cref{fig:movielens_clipping} shows that the convergence of \sys{} is only marginally affected by the clipping parameter, even when the noise variance approaches the gradient bound. This illustrates that \cref{ass:input_sensitivity} can be effectively enforced in practice through gradient clipping without significantly compromising convergence.

\subsection{Experiments with varying number of nodes}

\begin{figure}[ht!]
    \centering
    \inputplot{plots/aadv_tradeoff_movielens_classifier_32_64_nodes}{10}%
    \caption{Varying number of nodes for the \movielens{} dataset.}%
    \label{fig:movielens_multiple_nodescount}
\end{figure}

For completeness, we also explore varying the number of nodes partaking in \ac{DL}, which was fixed to $100$ in the main paper.
We vary this count to respectively $32$ and $64$ nodes in~\cref{fig:movielens_multiple_nodescount}.

We observe the tradeoffs keep the same tendency as the ones of~\cref{fig:movielens_tradeoffs}.
Moreover, the order of magnitude of this tradeoff is consistent, even if the number of nodes and the degree of the graph was changed.

\subsection{Node dropout}%
\label{subsec:node dropout}

\begin{figure}[ht!]
    \centering
    \inputplot{plots/dropout}{11}%
    \caption{Test loss for a fixed communication budget with varying dropout rates for the \movielens{} dataset.}%
    \label{fig:movielens_dropout}
\end{figure}

Since \sys{} relies on correlated noise sent in different directions through the network, it is interesting to evaluate how node dropout impacts \sys{}'s performances. Such a scenario is not computationally expensive: since each node behaves independently, a node dropout does not introduce any need for restarting a round. This is one of the benefits of our approach compared to secret sharing since we add noise and not masks.
However, node dropout will naturally impact any \ac{DL} approach. In our case, it may change~\cref{lem:noise cancellation}, since the propagated noise will not be exactly zero. We evaluate in this section how node dropout affects \sys{}.

We call node dropout when a node skips a communication round. Such a node does not compute any gradient and does not receive or send any message. However, it may come back online later in the training.
To do so, we simulate four levels of dropout: $10\%$, $25\%$, $50\%$ and $75\%$. We also add a dropout correlation of $10\%$, making the dropped out nodes more likely to remain dropped out.

\cref{fig:movielens_dropout} reports the result, by considering the test loss for two noise levels and those for dropout rates. 
We observe that dropout makes higher noise level deteriorate more in test loss. 
However, even for high levels of dropout and noise, the degradation remains marginal as long as the dropout rate is below $25\%$. 

This decrease in utility for high levels of dropout is to be expected, as this is outside the considerations of our theoretical convergence guarantee. In particular, node dropout makes~\cref{lem:noise cancellation} not hold, which was pivotal in our proof. However, we conjecture that our convergence proof could be adapted to capture node dropouts, at the cost of a term in the form of $\nicefrac{\sigma^2}{T}$. This would mean our convergence mostly matches the existing literature~\cite{cyffersMuffliatoPeertoPeerPrivacy2022,allouahPrivacyPowerCorrelated2024} in terms of noise impact. However, how the dropout rate will affect the noise term in $\nicefrac{1}{T}$ remains a future work, as our approach is initially designed for networks with a low amount of dropout to leverage the noise cancellation.  

\subsection{Empirical evaluation of \cref{ass:gradient gaussian perturbation}}%
\label{subsec:gaussian assumption experimentation}

\begin{figure}
    \centering
    \inputplot{plots/hist_sigma.tex}{12}
    \caption{
        Distribution of the noisy gradient for two parameters on the \movielens{} dataset, with low noise level $\sigma$
    }%
    \label{fig:noisy_gradient_lownoise}
\end{figure}

\begin{figure}
    \centering
    \inputplot{plots/hist_128sigma.tex}{13}
    \caption{
        Distribution of the noisy gradient for two parameters on the \movielens{} dataset, with high noise level $128\sigma$
    }%
    \label{fig:noisy_gradient_highnoise}
\end{figure}

We empirically examine some simple scenarios to provide motivation for and, in turn, justify~\cref{ass:gradient gaussian perturbation}.

To achieve this, we utilize a centralized \movielens{} task with the same parameters as those described in~\cref{sec: experimental setup}. We consider two cases: a fully trained model (after $100$ iterations) and a partially trained model (after $10$ iterations). For each case, we examine two data samples of size $1000$ each. In the first sample, we compute the true gradient and subsequently generate a noisy gradient by perturbing this true gradient. In the second sample, we first add noise to the model parameters and then compute the gradient. We evaluate the differences between the two normalized distributions by calculating the Kolmogorov-Smirnov test statistic for all parameters, given that the exact Lipschitz constant $L$ is unknown. 

For clarity, we present two histograms illustrating the distribution in the scenario where we first add noise and then compute the gradient: one for the best-case and the other for the worst-case Kolmogorov-Smirnov test outcomes. To optimize computational efficiency, we randomly select $10,000$ parameters for calculating the aforementioned metrics.

\cref{fig:noisy_gradient_lownoise} shows the results for a low noise level ($\sigma$), while~\cref{fig:noisy_gradient_highnoise} illustrates the outcomes for a higher noise level ($128\sigma$).
More specifically,~\cref{fig:noisy_gradient_highnoise,fig:noisy_gradient_lownoise}, subplots (a), presents the case with a partially trained model, whereas~\cref{fig:noisy_gradient_highnoise,fig:noisy_gradient_lownoise}, subplots (b), represents gradients computed on a noise around the fully trained model.
Across all scenarios, a consistent trend is observed for all parameters, indicating that the parameters generally follow a normal distribution.

\subsection{Additional attacks details}

In this section, we provide further details to the \ac{MIA} workflow for both the threshold and the classifier attack. We also specify what information the attacker has access to.

\paragraph{Attacker observations during training}
During training, an attacker stores models received from its neighbors at different iterations, while denoting whose model is being saved. It is important to note those are noisy models. For \cifar, the attacker stores one every $100$ iterations, whereas it is one every $50$ iteration for \movielens.

\paragraph{Threshold attack}
The attacker computes the losses generated by the model on both the victim's local training set and the global test set. The attacker evaluates the ROC-\ac{AUC} for each saved model.
This yields an attack result for each logged model, meaning we can also observe tendencies across iterations.

\paragraph{Classifier attack}
After training, the attacker groups all the models received that were sent by a target victim node $v$. For a given data point $x$, the attacker computes the loss of $x$ through all logged models of $v$. This creates a time series of losses for this data point $x$.
We can label those time series considering whether $x\in\datadistrib{v}$ or not.
The attacker then trains a classifier to discriminate between time series, using a train-test split between both the testing set and the victim's local training set. $70\%$ of the local training dataset and the testing set are used to train the classifier, and the remaining $30\%$ of both sets are used for evaluation. Reweighting is performed to account for the unbalance in the class distributions. 
\section{Privacy proof}\label{sec:zsumsgd-pndp proof}
In this section, we provide the necessary steps to prove \cref{thm:zsumSgdPrivacy}.
Most necessary assumptions are detailed in~\cref{subsec:privacy assumptions and definitions,subsec:assumptions convergence proof}, but 
\Cref{subsec:privacy proof assumptions} details common technical assumptions in the field, or express intermediary results necessary for the main result.
\simplifiedproof{}{Then, \Cref{subsec:privacy proof main theorem} proves the main privacy theorem.}%
Finally, \cref{subsec:zsumsgd-pndp proof} contains details about \cref{subsec:privacy zsum-avg}.

\subsection{Assumptions and lemmas}%
\label{subsec:privacy proof assumptions}

We start by proving the equivalent system matrix formulation.
\lemTemporalMatrixNotation*
\begin{proof} (\cref{lem:matrix notation consistency})
    We proceed by induction over $t\in\mathbb{N}$, using~\eqref{eq:matrix update rule} and unrolling the matrix multiplication.
    The initialization is done by definition.
    Now, assume that $\modelsvirtual{ni+k}{t} = \modelmatrix{i}{t}$ for all $k$. 

    First, we observe that we have $\modelsvirtual{ni+k}{t+\nicefrac{1}{2}} = \modelmatrix{i}{t+\nicefrac{1}{2}}$ by definition.    
    Second, we have, using~\eqref{eq:matrix update rule}:
    
    \begin{align*}
        \modelsvirtual{ni+k}{t+1}
        =& \left(\mixingmatrix\gossmatrixvirtual{}{t}(\modelsvirtual{}{t+\nicefrac{1}{2}} + \zsumnoisevirtual{}{t})\right)_{ni+k}
        \\=& \sum_{\hat{j}=0}^{n^2}\mixingmatrix_{ni+k,\hat{j}}\left(\gossmatrixvirtual{}{t}(\modelsvirtual{}{t+\nicefrac{1}{2}} + \zsumnoisevirtual{}{t})\right)_{\hat{j}}
    \end{align*}

    We can remove indexes in the virtual domain by exploiting the following properties of the matrices:
    \begin{itemize}
        \sloppy
        \item $\mixingmatrix_{n(i-1)+k,\hat{j}} \neq 0 \iff \hat{j}\in[\![n(i-1)+1, ni]\!]$ and $\mixingmatrix_{n(i-1)+k,\hat{j}} = 1$ in this case in such case, which simplifies the sum by removing $\mixingmatrix$.
        \item $\gossmatrixvirtual{j+n(i-1),\hat{u}}{t} \neq 0 \iff \hat{u} = n(j-1)+i$, in which case $\gossmatrixvirtual{j+n(i-1),\hat{u}}{t} = \gossmatrix{i,j}{t}$, which simplifies the sum further, by removing indexes and rewriting in terms of $\gossmatrix{}{t}$.
    \end{itemize}
    Thus, we get:
    \begin{align*}
        \modelsvirtual{ni+k}{t+1}
        =& \sum_{\hat{j}=n(i-1)+1}^{ni}\left(\gossmatrixvirtual{}{t}(\modelsvirtual{}{t+\nicefrac{1}{2}} + \zsumnoisevirtual{}{t})\right)_{\hat{j}}
        \\=& \sum_{j=1}^{n}\left(\gossmatrixvirtual{}{t}(\modelsvirtual{}{t+\nicefrac{1}{2}} + \zsumnoisevirtual{}{t})\right)_{n(i-1)+j}
        \\=& \sum_{j=1}^{n}\sum_{\hat{u}=0}^{n^2}\gossmatrixvirtual{n(i-1)+j,u}{t}\left(\modelsvirtual{}{t+\nicefrac{1}{2}} + \zsumnoisevirtual{}{t}\right)_{\hat{u}}
        \\=& \sum_{j=1}^{n}\gossmatrix{i,j}{t}\left(\modelsvirtual{}{t+\nicefrac{1}{2}} + \zsumnoisevirtual{}{t}\right)_{n(j-1) + i}
        \\=& \sum_{j=1}^{n}\gossmatrix{i,j}{t}\left(\modelmatrix{j}{t+\nicefrac{1}{2}} + \zsumnoise{j}{i}{t}\right)
    \end{align*}
    Where we used the induction hypothesis. 
    Now, we use the observation that $\modelsvirtual{ni+k}{t+\nicefrac{1}{2}} = \modelmatrix{i}{t+\nicefrac{1}{2}}$ to conclude our induction:
    \begin{align*}
        \modelsvirtual{ni+k}{t+1}
        =& \sum_{j=1}^{n}\gossmatrix{i,j}{t}\left(\modelmatrix{j}{t+\nicefrac{1}{2}} + \zsumnoise{j}{i}{t}\right)
        \\=& \modelmatrix{i}{t+1}.\qedhere
    \end{align*}
\end{proof}

\LemUnrolledDitribution*
Here, $\Sigma_Z^{(t)}$ represents correlated noises that will cancel out, 
for $t$ big enough we have ${(\mixingmatrix\gossmatrixvirtual{}{})}^t \Sigma_Z \transpose{(\mixingmatrix\gossmatrixvirtual{}{})}^t = 0$.
Thus, once $T$ is big enough, the variance $\Sigma_Z^{(t)}$ will become constant.
\begin{proof} (\cref{lem:unrolled distribution})
    We proceed by induction on $T$ for the expected value, and note $\Sigma_{T} = \sum_{t=1}^{T} {(1-\lr L)}^t {(\mixingmatrix\gossmatrixvirtual{}{})}^t \Sigma_Z^{(t)} \transpose{(\mixingmatrix\gossmatrixvirtual{}{})}^t$. 
    We have the following two update rules: 
    \begin{align*}
        \modelsvirtual{}{T+1} &= \mixingmatrix \gossmatrixvirtual{}{T}\left(\modelsvirtual{}{T} - \lr \nabla \loss{}(\modelsvirtual{}{T},\datapoint{}{T}) + \zsumnoisevirtual{}{t}\right)
        \\
        \unnoisedmodelvirtual{}{T+1} &= \mixingmatrix \gossmatrixvirtual{}{T}\left(\unnoisedmodelvirtual{}{T} - \lr \nabla \loss{}(\unnoisedmodelvirtual{}{T},\datapoint{}{T})\right).
    \end{align*}

    First, we can show by another induction that this is a linear combination of Gaussian random variables. 

    Then, let us look at the expected value for $\modelsvirtual{}{T+1}$: if we assume that the expected value of $\modelsvirtual{}{T}$ is $ \unnoisedmodelvirtual{}{T}$, 
    \cref{ass:gradient gaussian perturbation} guarantees that the expected value of $\modelsvirtual{}{T+1}$ is $ \unnoisedmodelvirtual{}{T+1}$.

    Finally, using~\cref{ass:gradient gaussian perturbation}, we have:

    \singlecolumnproof{
    \begin{align*}    
        \modelsvirtual{}{T+1} 
        \sim
        \mathcal{N}\left(
            \unnoisedmodelvirtual{}{T+1},
            L\Sigma_{T+1}
        \right)
    \end{align*}

    With the following update rule: 
    }{%
    \begin{align*}    
        \modelsvirtual{}{T+1} 
        \sim
        \mathcal{N}\left(
            \unnoisedmodelvirtual{}{T+1},
            L(1 - \lr L)\mixingmatrix\gossmatrixvirtual{}{T} \Sigma_{T} \transpose{(\mixingmatrix\gossmatrixvirtual{}{T}) + L(\mixingmatrix\gossmatrixvirtual{}{T})  \Sigma_Z\transpose{(\mixingmatrix\gossmatrixvirtual{}{T})}}
            \right)
    \end{align*}
    
    Thus, we obtain the update rule: 
    }%
    \begin{align*}
        L\Sigma_{T+1} &= (1 - \lr L)\mixingmatrix\gossmatrixvirtual{}{T} (L\Sigma_{T}) \transpose{(\mixingmatrix\gossmatrixvirtual{}{T})} 
        \\ &+ L(\mixingmatrix\gossmatrixvirtual{}{T})  \Sigma_Z\transpose{(\mixingmatrix\gossmatrixvirtual{}{T})}.
    \end{align*}
    
    This yields the following:
    \begin{align*}
        L\Sigma_{T} 
        =& L\sum_{t=1}^{T} {(1-\lr L)}^t {(\mixingmatrix\gossmatrixvirtual{}{})}^t \Sigma_Z^{(t)} \transpose{(\mixingmatrix\gossmatrixvirtual{}{})}^t
        \\=& L\gossmatrixtemporal{}{T}\Sigma_Z^{(t)}\transpose{\gossmatrixtemporal{}{T}}. \qedhere
    \end{align*}
    
\end{proof}

To prove \ac{PNDP}, we will need a bound between two adjacent inputs is derived using the following lemma: 
\begin{restatable}{lemma}{unnoisedexecnormbound}\label{lem:unnoised execution norm bound}
    Consider two unnoised executions. Then,
    \begin{align*}
        \norminf{\unnoisedmodelvirtual{}{t}- \unnoisedmodelpvirtual{}{t}}^2
        \leq& 
        \frac{4\lr^2\adjacentdatasetgradientbound^2}{1 + 4\lr^2L} ({(2 + 4\lr^2L)}^t -1).
    \end{align*}
\end{restatable}
This lemma bounds the maximal difference between local models of two adjacent unnoised executions. One limitation of this lemma is that it bounds over a maximum. This is because a gradient term must be isolated from the recursive term in the proof. To show this, \cref{subsec:privacy zsum-avg} focuses on the case where only averaging is performed, and no gradient descent. In this scenario, the equivalent of the above lemma is tighter, and we derive a generalization of previous results to our case of correlated noises.

\simplifiedproof{
\begin{proofsketch} (\cref{lem:unnoised execution norm bound})
    We use that $\norminf{\mixingmatrix\gossmatrixvirtual{}{}} = 1$, 
    and express $\norminf{\unnoisedmodelvirtual{}{t}- \unnoisedmodelpvirtual{}{t}}^2$ recursively using the triangle inequality and the sub-multiplicativity of the norm.
    We first get: 
    \begin{align*}
        &\norminf{\unnoisedmodelvirtual{}{t}- \unnoisedmodelpvirtual{}{t}}^2
        \leq
            2  \norminf{\unnoisedmodelvirtual{}{t-1} - \unnoisedmodelpvirtual{}{t-1}}^2
        \\  &+ 2\lr^2\underbrace{\norminf{\nabla\loss{}(\unnoisedmodelvirtual{}{t-1},\datapoint{}{t-1}) - \nabla\loss{}(\unnoisedmodelpvirtual{}{t-1},\datapointp{}{t-1})}^2}_{:=C^{(T)}_{1}}.
    \end{align*}
    Then we consider an intermediate term $\nabla\loss{}(\unnoisedmodelvirtual{}{t-1},\datapointp{}{t-1})$, and use the triangle inequality again. 
    The first term is bounded by input sensitivity~(\cref{ass:input_sensitivity}), and the second by smoothness of the loss function~(\cref{ass:smoothness}).
    This yields: 
    \begin{align*}
        C^{(T)}_{1} 
        \leq
        2 \adjacentdatasetgradientbound^2
        + 2 L\norminf{\unnoisedmodelvirtual{}{t-1} - \unnoisedmodelpvirtual{}{t-1}}^2
    \end{align*}
    Plugging this back into the equation and solving the recursion, we obtain the desired result.
\end{proofsketch}
}{%
\begin{proof} (\cref{lem:unnoised execution norm bound})
We know that $\norminf{\mixingmatrix\gossmatrixvirtual{}{}} = 1$.
    \singlecolumnproof{
        \begin{align*}
            \norminf{\unnoisedmodelvirtual{}{t}- \unnoisedmodelpvirtual{}{t}}^2
            \leq&
            \norminf{
                \mixingmatrix\gossmatrixvirtual{}{t}\left(
                \unnoisedmodelvirtual{}{t-\nicefrac{1}{2}}- \unnoisedmodelpvirtual{}{t-\nicefrac{1}{2}}
                \right)
            }^2
            \\\leq&
                2  \norminf{\unnoisedmodelvirtual{}{t-1} - \unnoisedmodelpvirtual{}{t-1}}^2
                + 2\lr^2C^{(t)}_{1},
        \end{align*}
        With $C^{(t)}_{1}:=\norminf{\nabla\loss{}(\unnoisedmodelvirtual{}{t-1},\datapoint{}{t-1}) - \nabla\loss{}(\unnoisedmodelpvirtual{}{t-1},\datapointp{}{t-1})}^2$.

    }{
    \begin{align*}
        \norminf{\unnoisedmodelvirtual{}{t}- \unnoisedmodelpvirtual{}{t}}^2
        \leq&
        \norminf{
            \mixingmatrix\gossmatrixvirtual{}{t}\left(
            \unnoisedmodelvirtual{}{t-\nicefrac{1}{2}}- \unnoisedmodelpvirtual{}{t-\nicefrac{1}{2}}
            \right)
        }^2
        \\\leq&
        \norminf{
            \unnoisedmodelvirtual{}{t-1} - \unnoisedmodelpvirtual{}{t-1}
            - \lr (\nabla\loss{}(\unnoisedmodelvirtual{}{t-1},\datapoint{}{t-1}) - \nabla\loss{}(\unnoisedmodelpvirtual{}{t-1},\datapointp{}{t-1} )  )
        }^2
        \\\leq&
            2  \norminf{\unnoisedmodelvirtual{}{t-1} - \unnoisedmodelpvirtual{}{t-1}}^2
            + 2\lr^2\underbrace{\norminf{\nabla\loss{}(\unnoisedmodelvirtual{}{t-1},\datapoint{}{t-1}) - \nabla\loss{}(\unnoisedmodelpvirtual{}{t-1},\datapointp{}{t-1})}^2}_{:=C^{(t)}_{1}}.
    \end{align*}
    }
    We focus on the left term, and notice that:
    \singlecolumnproof{
        \begin{align*}
            C^{(t)}_{1} 
            &\leq
            2\norminf{\nabla\loss{}(\unnoisedmodelvirtual{}{t-1},\datapoint{}{t-1}) - \nabla\loss{}(\unnoisedmodelvirtual{}{t-1},\datapointp{}{t-1})}^2 
            \\&+ 2 \norminf{\nabla\loss{}(\unnoisedmodelvirtual{}{t-1},\datapointp{}{t-1}) - \nabla\loss{}(\unnoisedmodelpvirtual{}{t-1},\datapointp{}{t-1})}^2
            \\&\stackrel{\eqref{eq:input_sensitivity_gradient},\eqref{eq:smoothness}}{\leq}
            2 \adjacentdatasetgradientbound^2
            + 2 L\norminf{\unnoisedmodelvirtual{}{t-1} - \unnoisedmodelpvirtual{}{t-1}}^2
        \end{align*}
    }{
    \begin{align*}
        C^{(t)}_{1} 
        &= 
        \norminf{\nabla\loss{}(\unnoisedmodelvirtual{}{t-1},\datapoint{}{t-1}) - \nabla\loss{}(\unnoisedmodelvirtual{}{t-1},\datapointp{}{t-1}) + \nabla\loss{}(\unnoisedmodelvirtual{}{t-1},\datapointp{}{t-1}) - \nabla\loss{}(\unnoisedmodelpvirtual{}{t-1},\datapointp{}{t-1})}^2
        \\&\leq
        2\norminf{\nabla\loss{}(\unnoisedmodelvirtual{}{t-1},\datapoint{}{t-1}) - \nabla\loss{}(\unnoisedmodelvirtual{}{t-1},\datapointp{}{t-1})}^2 
        + 2 \norminf{\nabla\loss{}(\unnoisedmodelvirtual{}{t-1},\datapointp{}{t-1}) - \nabla\loss{}(\unnoisedmodelpvirtual{}{t-1},\datapointp{}{t-1})}^2
        \\&\stackrel{\eqref{eq:input_sensitivity_gradient},\eqref{eq:smoothness}}{\leq}
        2 \adjacentdatasetgradientbound^2
        + 2 L\norminf{\unnoisedmodelvirtual{}{t-1} - \unnoisedmodelpvirtual{}{t-1}}^2
    \end{align*}
    }

    Thus, we get:
    \begin{align*}
        \norminf{\unnoisedmodelvirtual{}{t}- \unnoisedmodelpvirtual{}{t}}^2
        \leq&
        (2 + 4\lr^2L)  \norminf{\unnoisedmodelvirtual{}{t-1} - \unnoisedmodelpvirtual{}{t-1}}^2
        + 4\lr^2\adjacentdatasetgradientbound^2  
    \end{align*}

    Unrolling the recursion, we obtain:

    \begin{align*}
        \norminf{\unnoisedmodelvirtual{}{t}- \unnoisedmodelpvirtual{}{t}}^2
        \leq&
        \frac{4\lr^2\adjacentdatasetgradientbound^2}{1 + 4\lr^2L} ({(2 + 4\lr^2L)}^t -1). \qedhere
    \end{align*}
\end{proof}
}

\subsection{Proof of the main theorem}\label{subsec:privacy proof main theorem}
In this section, we remind and provide a full proof of~\cref{thm:zsumSgdPrivacy}.
\zsumSgdPrivacy*

\simplifiedproof{\todo{THERE NEEDS TO BE A PROOF HERE IF THEOREM ABOVE IS INCLUDED, SIMPLIFIED PROOF IS IN THE MAIN PAPER.}
}{%
\begin{proof}

    We want to bound the privacy loss that emerges from the view of nodes $V$. 
    To this end, we will use the matrix notations defined in~\cref{sec:matrix notations}, with a virtual network.

    For simplicity of notation, we assume that the communication matrix is fixed through time.
    The proof generalizes to arbitrary communication matrix at time $t$ at the expense of product notations.
    We obtain the following update rule for a given averaging round $t$:
    \begin{align}
        \modelsvirtual{}{t+1} = \mixingmatrix \gossmatrixvirtual{}{}\left(\modelsvirtual{}{t} - \lr \nabla \loss{}(\modelsvirtual{}{t},\datapoint{}{t}) + \zsumnoisevirtual{}{t}\right)
    \end{align}

    We now want to focus on two distinct executions on datasets $\datapoint{}{t}\sim_u \datapointp{}{t}$.
    The dot notation will correspond to the execution of the algorithm on an adjacent dataset.

    If we now consider some set of nodes $V\subseteq\nodeset$, we denote $\hat{V}\subseteq\nodesetvirtual$ the set of corresponding virtual nodes.
    \singlecolumnproof{We name  $P_V^{T}$ the privacy loss:
    \[P_V^{(T)} := \renyi{\pview{\hat{V}}{D}{T}}{\pview{\hat{V}}{\dot{D}}{T}}.\]
    We want to bound:
    \begin{align}
        \nonumber
        P_V^{(T)}
        &= \renyi{\pview{\hat{V}}{D}{T}}{\pview{\hat{V}}{\dot{D}}{T}}
        \\&\leq\sum_{t=0}^{T-1}\sum_{\hat{v}\in\hat{V}} \sum_{\hat{w}\in\neighborsvirtual{\hat{v}}{t}}
        \renyi{\modelsvirtual{\hat{w}}{t}}{\modelspvirtual{\hat{w}}{t}}
        \label{eq:globalprivacysplit}
    \end{align}
    }{
        We want to bound:
        \begin{align}
            \nonumber
            \renyi{\pview{V}{D}{T}}{\pview{V}{\dot{D}}{T}} 
            &= \renyi{\pview{\hat{V}}{D}{T}}{\pview{\hat{V}}{\dot{D}}{T}}
            \\&\leq\sum_{t=0}^{T-1}\sum_{\hat{v}\in\hat{V}} \sum_{\hat{w}\in\neighborsvirtual{\hat{v}}{t}}
            \renyi{\modelsvirtual{\hat{w}}{t}}{\modelspvirtual{\hat{w}}{t}}
            \label{eq:globalprivacysplit}
        \end{align}
    }%

    Our main focus is thus to bound $\renyi{\modelsvirtual{\hat{w}}{t}}{\modelspvirtual{\hat{w}}{t}}$.
    To this end, we want to apply~\cref{lem:unrolled distribution} to both $\modelsvirtual{\hat{w}}{t}$ and $\modelspvirtual{\hat{w}}{t}$.
    One key remark is that both their distributions are centered on slightly altered trajectories, corresponding to the two adjacent datasets.
    Thus, we apply~\cref{lem:unrolled distribution}, and obtain:
    \begin{align*}
        \modelsvirtual{\hat{w}}{t}\sim \mathcal{N}(\unnoisedmodelvirtual{\hat{w}}{t},L(\Sigma_{T})_{\hat{w},\hat{w}})
        ,&&
        \modelspvirtual{\hat{w}}{t}\sim \mathcal{N}(\unnoisedmodelpvirtual{\hat{w}}{t},L(\Sigma_{T})_{\hat{w},\hat{w}}),
    \end{align*}
    with $\Sigma_{T} = \sum_{t=1}^{T} (1-\lr L)^t (\mixingmatrix\gossmatrixvirtual{}{})^t \Sigma_Z \transpose{(\mixingmatrix\gossmatrixvirtual{}{})}^t$.

    One last thing we may want to do is factorize the noise expression:
    We now consider the matrix of all the noises $\tilde{Z}^{(T)}\in\mathbb{R}^{Tn^2}$, where $\tilde{Z}[tn^2 + \hat{w}]:=\zsumnoisevirtual{\hat{w}}{t}$ for $0\leq \hat{w} <n^2$.
    We can express the term by considering the temporal matrix notations of~\cref{sec:matrix notations}. 
    This leads to:
    \begin{align}
        \label{eq:unrolled models variance}
        \Sigma_T = \gossmatrixtemporal{}{T}\tilde{M} \Sigma_{\tilde{Y}} \transpose{(\gossmatrixtemporal{}{T}\tilde{M})}
    \end{align}

    Considering~\eqref{eq:parrallel executions distribution},\eqref{eq:unrolled models variance} along with \cref{lem:gaussian rényi bound},
    we obtain:
    \begin{align*}
        \renyi{\modelsvirtual{\hat{w}}{t}}{\modelspvirtual{\hat{w}}{t}} 
        \leq \frac{\alpha}{2L} 
            \frac{
                \norm{\unnoisedmodelvirtual{\hat{w}}{t}- \unnoisedmodelpvirtual{\hat{w}}{t}}^2
            }{
                \left(\gossmatrixtemporal{}{t}\tilde{M} \Sigma_{\tilde{Y}} \transpose{(\gossmatrixtemporal{}{t}\tilde{M})}\right)_{\tilde{w},\tilde{w}}
            }
    \end{align*}

    Finally, we need to bound the difference between the two unnoised executions $\norm{\unnoisedmodelvirtual{\hat{w}}{t}- \unnoisedmodelpvirtual{\hat{w}}{t}}^2$ 
    using~\cref{lem:unnoised execution norm bound}.

    Putting it all together in~\eqref{eq:globalprivacysplit}, we can bound:
    \singlecolumnproof{
        \begin{align}
            P_V^{(T)}
            \leq& 
            \frac{2\alpha\lr^2\adjacentdatasetgradientbound^2}{L + 4\lr^2L^2}\sum_{t=0}^{T-1}\sum_{\hat{v}\in\hat{V}} \sum_{\hat{w}\in\neighborsvirtual{\hat{v}}{t}}
            \frac{{(2 + 4\lr^2L)}^t -1
            }{
            {\left(
                \gossmatrixtemporal{}{t}\tilde{M}\Sigma_{\tilde{Y}}\transpose{(\gossmatrixtemporal{}{t}\tilde{M})} 
            \right)}_{\tilde{w},\tilde{w}}
            }.
        \end{align}
    }{
        \begin{align}
            \renyi{\pview{V}{D}{T}}{\pview{V}{\dot{D}}{T}} 
            \leq& 
            \frac{2\alpha\lr^2\adjacentdatasetgradientbound^2}{L + 4\lr^2L^2}\sum_{t=0}^{T-1}\sum_{\hat{v}\in\hat{V}} \sum_{\hat{w}\in\neighborsvirtual{\hat{v}}{t}}
            \frac{{(2 + 4\lr^2L)}^t -1
            }{
            {\left(
                \gossmatrixtemporal{}{t}\tilde{M}\Sigma_{\tilde{Y}}\transpose{(\gossmatrixtemporal{}{t}\tilde{M})} 
            \right)}_{\tilde{w},\tilde{w}}
            }.
        \end{align}
    }
\end{proof}
}

\subsection{Proof of the averaging algorithm}%
\label{subsec:zsumsgd-pndp proof}
We prove~\cref{subsec:privacy zsum-avg}.

\ThmPndpZsumAvg*
\begin{proofsketch} (\cref{thm:zsum-avg pndp})
We can follow the same proof concept for the averaging algorithm presented in~\cref{alg:ZeroSum}. 
In this case, the notion of adjacent dataset is slightly different, as it concerns the original data itself $\modelmatrix{}{0}$.
We will obtain a simpler update rule:
\begin{align*}
    \modelsvirtual{}{T+1} = \mixingmatrix\gossmatrixvirtual{}{T}\left(\modelsvirtual{}{T} + \zsumnoisevirtual{}{T}\right).
\end{align*}

Unrolling the model updates, and following a similar reasoning, we obtain that:
\begin{align*}
    \modelsvirtual{}{T+1} &\sim \mathcal{N}({(\mixingmatrix\gossmatrixvirtual{}{})}^T\modelsvirtual{}{0}, \tilde{W}^{T}\tilde{M}\Sigma_Y\transpose{(\tilde{W}^{T}\tilde{M})})
    \\\text{where } \gossmatrixtemporal{}{T}&:= 
    \begin{pNiceArray}{ccc}
        (\mixingmatrix\gossmatrixvirtual{}{})^T,
        & \dots, 
        & \mixingmatrix\gossmatrixvirtual{}{}
    \end{pNiceArray}
    \in \mathbb{R}^{n^2\times Tn^2}
\end{align*}
Then, using the same decomposition and~\cref{lem:gaussian rényi bound}, we observe the sensitivity is:
\begin{align*}
    \norm{\left((\mixingmatrix\gossmatrixvirtual{}{})^T\left(\modelsvirtual{}{0} - \modelspvirtual{}{0}\right)\right)_{\hat{w}}}^2
    \leq \left((\mixingmatrix\gossmatrixvirtual{}{})^T\right)_{\hat{w},\hat{u}}\Delta^2,
\end{align*} 
with $\Delta$ the bound on two adjacent datasets, since $\modelsvirtual{}{0}$ and $\modelspvirtual{}{0}$ are only different in component $u$.
We can derive the desired result from this.
\end{proofsketch} 

\section{Proofs of \sys main properties}
\label{sec:zsum small properties proofs}
This section contains proofs to~\cref{sec:zsum algorithm and properties}.
\simplifiedproof{%
}{%
\noiseCancellation*
\begin{proof}
    Using the notation in \cref{alg:ZeroSum}, and since the matrix is symmetric, we have for a fixed node $a$: 
    \begin{align*}
        \sum_{v\in \neighbors{a}{}} \gossmatrix{a,v}{} \zsumnoise{a}{v}{}
        &= \sum_{v\in \neighbors{a}{}} \gossmatrix{a,v}{} [\ynoise{a}{v}{} -  \frac{1}{\nodedegree{a}\gossmatrix{a,v}{}}\sum_{j\in\neighbors{a}{}}\gossmatrix{a,j}{} \ynoise{a}{j}{}]
        \\&= \sum_{v\in \neighbors{a}{}} \gossmatrix{a,v}{}\ynoise{a}{v}{} - \sum_{v\in \neighbors{a}{}} \frac{1}{\nodedegree{a}}(\sum_{j\in\neighbors{a}{}} \gossmatrix{a,j}{}\ynoise{a}{j}{})
        \\&= \sum_{v\in \neighbors{a}{}} \gossmatrix{a,v}{}\ynoise{a}{v}{} - \sum_{j\in\neighbors{a}{}} \gossmatrix{a,j}{}\ynoise{a}{j}{}
        \\&= 0.
    \end{align*}
\end{proof}
}

\simplifiedproof{%
\begin{proofsketch}(\cref{lem:conservation average model}):
    We use the definition of average model, and expand the model update:
    \begin{align}
        \nonumber
        \avgmodel{t+1} 
        =& 
        \frac{1}{n}\sum_{a=1}^n \sum_{v\in \neighbors{a}{}}\gossmatrix{a,v}{t} \model{v}{t+\nicefrac{1}{2}}  
        \\&+ \frac{1}{n}\sum_{a=1}^n \sum_{v\in \neighbors{a}{}}\gossmatrix{a,v}{t} \zsumnoise{v}{a}{t}
        \label{sketcheq:noise_propagation}
    \end{align}

    For the first term:
    \begin{align*}
        \frac{1}{n}\sum_{a=1}^n \sum_{v\in \neighbors{a}{}}\gossmatrix{a,v}{t} \model{v}{t+\nicefrac{1}{2}} 
        &= \frac{1}{n}\sum_{a=1}^n \gossmatrix{a}{t}\model{}{t+\nicefrac{1}{2}}
        \\ &= \frac{1}{n} \transpose{\1}\model{}{t+\nicefrac{1}{2}}
        \\ &= \avgmodel{t+\nicefrac{1}{2}}
    \end{align*}
    Where we used the properties of the mixing matrix. 
    Focusing on the second term, we obtain:
    \begin{align*}
        \frac{1}{n}\sum_{a=1}^n \sum_{v\in \neighbors{a}{}}\gossmatrix{a,v}{t} \zsumnoise{v}{a}{t}
        &= \frac{1}{n}\sum_{a=1}^n \sum_{v = 1}^n \gossmatrix{a,v}{t} \zsumnoise{v}{a}{t}
        \\&= \frac{1}{n}\sum_{v=1}^n \sum_{a = 1}^n \gossmatrix{a,v}{t} \zsumnoise{v}{a}{t}
        \\&=0.
    \end{align*}
    Plugging this into~\eqref{sketcheq:noise_propagation} yields the desired result:
    \begin{align*}
        \avgmodel{t+1} = \frac{1}{n}\sum_{a=1}^n \model{a}{t+1} = \avgmodel{t+\nicefrac{1}{2}}
    \end{align*}
\end{proofsketch}
}{%
\conservationAvgModel*

\begin{proof}
    
    \begin{align}
        \nonumber
        \avgmodel{t+1} &= 
        \frac{1}{n}\sum_{a=1}^n \model{a}{t+1}
        = \frac{1}{n}\sum_{a=1}^n \sum_{v\in \neighbors{a}{}}\gossmatrix{a,v}{t} (\model{v}{t+\nicefrac{1}{2}}  + \zsumnoise{v}{a}{t})
        \\&=\frac{1}{n}\sum_{a=1}^n \sum_{v\in \neighbors{a}{}}\gossmatrix{a,v}{t} \model{v}{t+\nicefrac{1}{2}}  + \frac{1}{n}\sum_{a=1}^n \sum_{v\in \neighbors{a}{}}\gossmatrix{a,v}{t} \zsumnoise{v}{a}{t}
        \label{eq:noise_propagation}
    \end{align}

    For the first term:
    \begin{align*}
        \frac{1}{n}\sum_{a=1}^n \sum_{v\in \neighbors{a}{}}\gossmatrix{a,v}{t} \model{v}{t+\nicefrac{1}{2}} 
        &= \frac{1}{n}\sum_{a=1}^n \gossmatrix{a}{t}\model{}{t+\nicefrac{1}{2}}
        \\ &= \frac{1}{n} \transpose{\1}\model{}{t+\nicefrac{1}{2}}
        \\ &= \avgmodel{t+\nicefrac{1}{2}}
    \end{align*}
    Where we used the properties of the mixing matrix. 

    Focusing on the second term in~\eqref{eq:noise_propagation}, we obtain:
    \begin{align*}
        \frac{1}{n}\sum_{a=1}^n \sum_{v\in \neighbors{a}{}}\gossmatrix{a,v}{t} \zsumnoise{v}{a}{t}
        &= \frac{1}{n}\sum_{a=1}^n \sum_{v = 1}^n \gossmatrix{a,v}{t} \zsumnoise{v}{a}{t}
        \\&= \frac{1}{n}\sum_{v=1}^n \sum_{a = 1}^n \gossmatrix{a,v}{t} \zsumnoise{v}{a}{t}
        \\&=0.
    \end{align*}
    Plugging this into~\eqref{eq:noise_propagation} yields the desired result:
    \begin{align*}
        \avgmodel{t+1} = \frac{1}{n}\sum_{a=1}^n \model{a}{t+1} = \avgmodel{t+\nicefrac{1}{2}}
    \end{align*}
\end{proof}
}

\simplifiedproof{
\begin{proofsketch} (\cref{lem:corrected zsum variance})
    First, looking at the definition of $\zsumnoise{a}{v}{}$, we obtain that:
    \begin{align}
        \nonumber\zsumnoise{a}{v}{} 
        &= \ynoise{a}{v}{} -  \frac{1}{\nodedegree{a} \gossmatrix{a,v}{}}\sum_{j\in\neighbors{a}{}}\gossmatrix{a,j}{} \ynoise{a}{j}{}
        \\&= \frac{\nodedegree{a} - 1 }{\nodedegree{a}}\ynoise{a}{v}{} -  \frac{1}{\nodedegree{a}\gossmatrix{a,v}{}}\sum_{\substack{j\in\neighbors{a}{}\\j\neq v}}\gossmatrix{a,j}{} \ynoise{a}{j}{}
        \label{sketcheq:independent noise separation}
    \end{align}
    Thus, $\zsumnoise{a}{v}{}$ is a linear combination of independent Gaussian noises. This means that $\zsumnoise{a}{v}{}$ also follows a Gaussian distribution.
    Since the mean of all $\ynoise{a}{v}{}$ is $0$, so is the mean of $\zsumnoise{a}{v}{}$.

    To obtain the desired result, we look at the variance. Using~\eqref{sketcheq:independent noise separation} and variance properties on independant random variables, we obtain:   
    \begin{align*}
        \Var(\zsumnoise{a}{v}{}) 
        &= \Var(\frac{\nodedegree{a} - 1 }{\nodedegree{a}}\ynoise{a}{v}{} -  \frac{1}{\nodedegree{a}\gossmatrix{a,v}{}}\sum_{\substack{j\in\neighbors{a}{}\\j\neq v}}\gossmatrix{a,j}{} \ynoise{a}{j}{})
        \\&= \left(\frac{(\nodedegree{a} - 1) ^2}{\nodedegree{a}^2} +\frac{\sum_{\substack{j\in\neighbors{a}{}\\j\neq v}}(\gossmatrix{a,j}{})^2 }{(\nodedegree{a}\gossmatrix{a,v}{})^2}\right)\lr^2\ynoisevar{a}
    \end{align*}   
\end{proofsketch}
}{%
\correctedVarianceZsum*
\begin{proof}
    First, looking at the definition of $\zsumnoise{a}{v}{}$, we obtain that:
    \begin{align}
        \nonumber\zsumnoise{a}{v}{} 
        &= \ynoise{a}{v}{} -  \frac{1}{\nodedegree{a} \gossmatrix{a,v}{}}\sum_{j\in\neighbors{a}{}}\gossmatrix{a,j}{} \ynoise{a}{j}{}
        \\&= \frac{\nodedegree{a} - 1 }{\nodedegree{a}}\ynoise{a}{v}{} -  \frac{1}{\nodedegree{a}\gossmatrix{a,v}{}}\sum_{\substack{j\in\neighbors{a}{}\\j\neq v}}\gossmatrix{a,j}{} \ynoise{a}{j}{}
        \label{eq:independent noise separation}
    \end{align}
    Thus, $\zsumnoise{a}{v}{}$ is a linear combination of independent Gaussian noises. This means that $\zsumnoise{a}{v}{}$ also follows a Gaussian distribution.
    Since the mean of all $\ynoise{a}{v}{}$ is $0$, so is the mean of $\zsumnoise{a}{v}{}$.
    
    To obtain the desired result, we only need to look at the variance. Using~\eqref{eq:independent noise separation}, we obtain:   
    \begin{align*}
        \Var(\zsumnoise{a}{v}{}) 
        &= \Var(\frac{\nodedegree{a} - 1 }{\nodedegree{a}}\ynoise{a}{v}{} -  \frac{1}{\nodedegree{a}\gossmatrix{a,v}{}}\sum_{\substack{j\in\neighbors{a}{}\\j\neq v}}\gossmatrix{a,j}{} \ynoise{a}{j}{})
        \\&= (\frac{\nodedegree{a} - 1 }{\nodedegree{a}})^2\Var(\ynoise{a}{v}{})   + (\frac{1}{\nodedegree{a}\gossmatrix{a,v}{}})^2  \Var\left(\sum_{\substack{j\in\neighbors{a}{}\\j\neq v}}\gossmatrix{a,j}{} \ynoise{a}{j}{}\right)
        \\&= (\frac{\nodedegree{a} - 1 }{\nodedegree{a}})^2\lr^2\ynoisevar{a} + (\frac{1}{\nodedegree{a}\gossmatrix{a,v}{}})^2  \sum_{\substack{j\in\neighbors{a}{}\\j\neq v}}(\gossmatrix{a,j}{})^2\lr^2 \ynoisevar{a}
        \\&= \left(\frac{(\nodedegree{a} - 1) ^2}{\nodedegree{a}^2} +\frac{\sum_{\substack{j\in\neighbors{a}{}\\j\neq v}}(\gossmatrix{a,j}{})^2 }{(\nodedegree{a}\gossmatrix{a,v}{})^2}\right)\lr^2\ynoisevar{a}
    \end{align*}
\end{proof}
}

\section{Convergence rate of \sys}%
\label{sec:convergence proofs}

\subsection{Useful inequalities}
\begin{lemma}\label{lem:norm_of_sum_reduction}
  For any set of $n$ vectors $(a_i)_{i=1}^{n}, a_i\in \modelspace$:
  \begin{align*}
    \norm{\sum_{i=1}^{n}a_i}^2
    \leq n \sum_{i=1}^{n}\norm{a_i}^2
  \end{align*}
\end{lemma}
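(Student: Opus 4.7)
The plan is to prove this standard inequality using the convexity of the squared norm (equivalently, Jensen's inequality), which gives the cleanest derivation. The map $x \mapsto \norm{x}^2$ is convex on $\modelspace$, so for any $n$ vectors $a_1, \ldots, a_n$ and the uniform weights $1/n$, we have
\begin{align*}
\norm{\frac{1}{n}\sum_{i=1}^{n} a_i}^2 \leq \frac{1}{n}\sum_{i=1}^{n} \norm{a_i}^2.
\end{align*}
Multiplying both sides by $n^2$ and using homogeneity of the squared norm on the left-hand side yields the claim immediately.

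If one prefers to avoid invoking convexity as a black box, an equivalent route goes through Cauchy--Schwarz. First, by the triangle inequality we have $\norm{\sum_{i=1}^{n} a_i} \leq \sum_{i=1}^{n} \norm{a_i}$. Then, applying the Cauchy--Schwarz inequality in $\mathbb{R}^n$ to the vectors $(\norm{a_1}, \ldots, \norm{a_n})$ and $(1, \ldots, 1)$ gives
\begin{align*}
\sum_{i=1}^{n} \norm{a_i} \leq \sqrt{n} \cdot \sqrt{\sum_{i=1}^{n} \norm{a_i}^2}.
\end{align*}
Squaring the composition of these two inequalities yields the desired bound.

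A third, fully elementary alternative expands the inner product directly: $\norm{\sum_i a_i}^2 = \sum_{i,j} \langle a_i, a_j\rangle$, then bounds each cross term by $\langle a_i, a_j\rangle \leq \tfrac{1}{2}(\norm{a_i}^2 + \norm{a_j}^2)$, which comes from the nonnegativity of $\norm{a_i - a_j}^2$. Summing over all $n^2$ pairs produces the factor of $n$ on the right-hand side. There is no real obstacle here; the only stylistic choice is which of the three essentially equivalent routes to present. I would favor the one-line Jensen/convexity argument for conciseness, since the lemma is invoked purely as a routine bookkeeping tool in the convergence proofs of Section~\ref{sec:convergence rate}.
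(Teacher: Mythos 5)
Your proof is correct: all three routes (Jensen applied to the convex map $x\mapsto\norm{x}^2$, triangle inequality followed by Cauchy--Schwarz, or direct expansion of $\norm{\sum_i a_i}^2=\sum_{i,j}\iprod{a_i}{a_j}$ with the cross-term bound $\iprod{a_i}{a_j}\leq\tfrac12(\norm{a_i}^2+\norm{a_j}^2)$) validly yield the stated inequality. The paper itself states this lemma without proof as a standard auxiliary fact, so there is no divergence to speak of; your one-line convexity argument is exactly the kind of justification one would supply.
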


\begin{lemma}\label{lem:norm_doublesum_beta_majoration}
  For any vectors $\mathbf{a},\mathbf{b}\in\modelspace$, for any $\beta>0$, we have:
  \begin{align*}
    \norm{\mathbf{a}+\mathbf{b}}^2\leq (1+\beta)\norm{\mathbf{a}}^2 + (1 + \beta^{-1})\norm{\mathbf{b}}
  \end{align*}

\end{lemma}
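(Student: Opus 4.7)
The plan is to reduce this to the standard Young's inequality by expanding the squared norm of the sum. First I would write
\[
\norm{\mathbf{a}+\mathbf{b}}^2 = \norm{\mathbf{a}}^2 + 2\langle \mathbf{a},\mathbf{b}\rangle + \norm{\mathbf{b}}^2,
\]
so that the whole task reduces to controlling the cross term $2\langle \mathbf{a},\mathbf{b}\rangle$ by a combination of $\norm{\mathbf{a}}^2$ and $\norm{\mathbf{b}}^2$ with the prescribed coefficients $\beta$ and $\beta^{-1}$.

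For the cross term I would invoke the weighted arithmetic–geometric mean inequality (equivalently, the peeled-off Young's inequality): for any $\beta>0$,
\[
2\langle \mathbf{a},\mathbf{b}\rangle \leq 2\norm{\mathbf{a}}\norm{\mathbf{b}} \leq \beta\norm{\mathbf{a}}^2 + \beta^{-1}\norm{\mathbf{b}}^2,
\]
where the first step is Cauchy--Schwarz and the second comes from expanding $\bigl(\sqrt{\beta}\,\norm{\mathbf{a}} - \tfrac{1}{\sqrt{\beta}}\norm{\mathbf{b}}\bigr)^2 \geq 0$. Substituting this bound back into the expansion of $\norm{\mathbf{a}+\mathbf{b}}^2$ and grouping terms by which vector they involve yields directly $(1+\beta)\norm{\mathbf{a}}^2 + (1+\beta^{-1})\norm{\mathbf{b}}^2$, which matches the statement (noting that the right-hand side of the lemma as written appears to have a minor typo, missing a square on $\norm{\mathbf{b}}$).

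There is essentially no obstacle here: the argument is a two-line application of Cauchy--Schwarz followed by the elementary inequality $2xy \leq \beta x^2 + \beta^{-1} y^2$ for $x,y \geq 0$ and $\beta > 0$. The only thing worth checking is that equality holds when $\mathbf{b} = \beta\mathbf{a}$, confirming that the coefficients $(1+\beta)$ and $(1+\beta^{-1})$ cannot be improved uniformly in $\beta$. No assumptions beyond $\beta>0$ and $\mathbf{a},\mathbf{b}$ lying in an inner-product space are needed, so the lemma applies verbatim to $\modelspace = \mathbb{R}^{\modelsize}$ as used elsewhere in the paper.
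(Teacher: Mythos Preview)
Your proof is correct; this is the standard two-line argument via Cauchy--Schwarz and Young's inequality, and you correctly flag the missing square on $\norm{\mathbf{b}}$ in the stated right-hand side. The paper itself does not supply a proof for this lemma (it is listed under ``Useful inequalities'' and treated as well known), so there is nothing to compare against.
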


\subsection{Convergence rate results}\label{subsec:main convergence proof}

\simplifiedproof{}{\zsumConvergenceRate*}
\begin{proof}(\cref{thm:convergence_rate})
  We used a similar situation to~\cite{koloskovaUnifiedTheoryDecentralized2020} with $\tau = 1$ and a fixed communication matrix sampling distribution.
  The proof follows the same structure as in their paper.
  Our algorithm only induces some changes in some of the intermediary lemmas that need to be adapted to obtain the main result.

  To this end, we restate~\cref{prop:mini_batch variance,lem:descent lemma,lem:rec consensus distance} in our setting.
  We can then solve the main equation in the following manner:

  \begin{itemize}
    \item We bound the distance of the averaged model to the optimum~\cref{lem:descent lemma}.
          It is the case $r_t = \expect{\norm{\avgmodel{t} - \optimum}^2}$, $e_t = \sampledloss(\avgmodel{t}) - \sampledloss(\optimum), a =\frac{\mu}{2}, b= 1, c = \frac{\gradientnoiseavg}{n}$ and $B = 3L$
    \item We also bound the consensus distance with a recursive bound using~\cref{lem:rec consensus distance}. The next step is to determine the precise constants to continue the proof.
  \end{itemize}

  The equation of the consensus distance (\cref{lem:rec consensus distance}) is of the following form:

  \simplifiedproof{
    \begin{align*}
      \consdistance{t}
      \leq & (1+\beta)(1-\frac{7p}{16})\consdistance{t-1}
      +  (1+\beta)D\lr^2 e_{t-1}
      \\& + \lr^2(1+\beta)A
      \\&+ \lr^2(1 + \beta^{-1})\frac{\modelsize}{n}\sum_{i=1}^{n} \nodedegree{i} \sum_{v= 1}^{n} \frac{(\nodedegree{v} - 1) ^2}{\nodedegree{v}}\ynoisevar{v}
    \end{align*}
  }{%
    \begin{align*}
      \consdistance{t}
      \leq & (1+\beta)(1-\frac{7p}{16})\consdistance{t-1}
      +  (1+\beta)D\lr^2 e_{t-1}
      \\ &+ \left( (1+\beta)A
      + (1 + \beta^{-1})\frac{\modelsize}{n}\sum_{i=1}^{n} \nodedegree{i} \sum_{v= 1}^{n} \frac{{(\nodedegree{v} - 1)}^2}{\nodedegree{v}}\ynoisevar{v}\right)\lr^2
    \end{align*}
  }
  with $e_{t} = \localloss{}(\avgmodel{t}) - \localloss{}(\optimum)$, $D = \frac{36L}{p}$ and  $A = \gradientnoiseavg+ \frac{18}{p}\gradientnormavg$

  Because of the $1+\beta$ factor, we cannot directly apply the recursion-solving Lemma to our scenario (Lemma 12 in~\cite{koloskovaUnifiedTheoryDecentralized2020}).
  We can however modify our current equation to match the beginning of their proof of this Lemma.
  This is mostly possible because we are in the case $\tau = 1$, meaning that we require a slightly stronger property on the matrices' distribution.

  We can now rewrite the previous equation by setting $\beta = \frac{3p}{16-7p}$ (rq: we only require $\beta>0$, which is satisfied since $0\leq p \leq 1$),
  \begin{align*}
    (1+\beta)
    = & \frac{16-7p+3p}{16-7p} = \frac{16-4p}{16-7p}
  \end{align*}
  and
  \begin{align*}
    (1+\beta)(1-\frac{7p}{16})
    = & \frac{16-4p}{16-7p} \frac{16-7p}{16}
    = \frac{16-4p}{16}
    = 1 - \frac{p}{4}
  \end{align*}
  Putting these inside the main equation, and setting
  \begin{align*}
    A' & =\left((1+\beta)A + (1 + \beta^{-1})\frac{\modelsize}{n}\sum_{i=1}^{n} \nodedegree{i} \sum_{v= 1}^{n} \frac{(\nodedegree{v} - 1) ^2}{\nodedegree{v}}\ynoisevar{v}\right)\lr
    \\D' &= \frac{1}{2}(1+\beta)D = \frac{16-4p}{2(16-7p)}\frac{36L}{p}
  \end{align*}
  we obtain:
  \begin{align*}
    \consdistance{t}
    \leq (1 - \frac{p}{4})\consdistance{t-1}
    + 2D'\lr^2 e_{t-1}
    +  2A' \lr^2
  \end{align*}
  This is exactly the term obtained in~\cite{koloskovaUnifiedTheoryDecentralized2020}'s Lemma 12 after unrolling the different terms, which is only needed when $\tau>1$.
  Thus, in our case, we can fall back to their proof using this approach. We just need to ensure Lemma 12's hypothesis are verified:
  \begin{itemize}
    \item $0<p\leq 1$
    \item $\tau = 1 \geq 1$
    \item $A',D' \geq 0$
    \item $\set{\lr^2}_{t\leq0}$ is a $\frac{8}{p}$-slow decreasing sequence since it is a constant.
    \item $\set{w_t:= {(1-a\lr)}^{-(t+1)}}$ is a $\frac{16}{p}$-slow increasing sequence of weights.
  \end{itemize}

  Thus, we can have the same reasoning as the proof of Lemma 12 in~\cite{koloskovaUnifiedTheoryDecentralized2020}, and obtain the lemma's result with the following equation:

  \begin{align}
    B\sum_{t=0}^{T}w_t \consdistance{t} \leq \frac{b}{2} \sum_{t=0}^{T}w_t e_t + 64A'B\lr^2\sum_{t=0}^{T}w_t
  \end{align}
  for some constant E and stepsize $\lr\leq \frac{1}{16}\sqrt{\frac{pb}{D'B}}$

  From this point on, we can follow the exact ending of the proof, the only difference are our new constants $A'$ and $D'$. 
  We thus obtain:
    \begin{align*}
      \frac{1}{2W_T}\sum_{t=0}^{T}b{w}_{t}{e}_{t}
      \leq & \frac{1}{W_T}\sum_{t=0}^{T}\left(\frac{(1-a\lr)w_t}{\lr}r_t - \frac{w_t}{\lr}r_{t+1}\right)
      \\&+ \frac{c}{W_T}\sum_{t=0}^{T}w_t\lr + \frac{64BA'}{W_T}\sum_{t=0}^{T}w_t\lr^2
    \end{align*}
  
  (with $W_T= \sum_{t=0}^{T}w_t$).

  Finally, we use Lemma 13 of~\cite{koloskovaUnifiedTheoryDecentralized2020} to obtain the final result, since we verify the following hypothesis:
  $a,b>0, c,A',B\geq 0$

  Thus, we obtain that for a well-chosen $\lr$:
  \[\frac{1}{2W_T}\sum_{t=0}^{T}be_tw_t + ar_{T+1} \leq \mathcal{O}\left(r_0d\text{exp}\left[-\frac{a(T+1)}{d}\right] + \frac{c}{aT} + \frac{BA'}{a^2T^2}\right).\]
  Plugging in the values yields the result for~\cref{thm:convergence_rate}.

\end{proof}

From the previous result, we also prove the convergence rate to an arbitrary $\rho$ accuracy:

\zsumConvergenceRateEpsilon*
\begin{proof}
For \cref{alg:ZeroSum-sgd} to reach the target accuracy $\rho$, we need to have:
\begin{align}
 & \frac{1}{2W_T}\sum_{t=0}^{T}w_t \left(\expect{\sampledloss(\avgmodel{t})} - \sampledloss^* \right) + \frac{\mu}{2} r_{T+1}\leq \rho\label{eq:accuracy_convergence_target}
\end{align}
However, from \cref{thm:convergence_rate}, we know that
  \begin{align}
     & \frac{1}{2W_T}\sum_{t=0}^{T}w_t \left(\expect{\sampledloss(\avgmodel{t})}- \sampledloss^* \right) + \frac{\mu}{2} r_{T+1}\nonumber                        \\
     & \leq\kappa\left(\frac{r_0L}{p}\exp\left[-\frac{\mu p(T+1)}{192\sqrt{3}L}\right] + \frac{\gradientnoiseavg}{n\mu T} + \frac{LA'}{\mu^2T^2}\right)\nonumber \\
     & \text{for some constant $\kappa>0$.}\nonumber
  \end{align}
Thus, in order to satisfy \eqref{eq:accuracy_convergence_target}, it suffices to simultaneously have:
\begin{align}
       & \kappa \frac{r_0L}{p}\exp\left[-\frac{\mu p(T+1)}{192\sqrt{3}L}\right] \leq \frac{\rho}{3}\nonumber                  \\
  \iff & \exp\left[\frac{\mu p(T+1)}{192\sqrt{3}L}\right]\geq \frac{3\kappa r_0L}{\rho p}\nonumber                            \\
  \iff & T\geq \frac{192\sqrt{3}L}{\mu p}\ln{\left[\frac{3\kappa r_0L}{\rho p}\right]}-1,\label{eq:accuracy_convergence_suff1}
\end{align}

\begin{align}
   & \kappa\frac{\gradientnoiseavg}{n\mu T}\leq \frac{\rho}{3}
  \iff T \geq \frac{3\kappa \gradientnoiseavg}{n\mu \rho },\label{eq:accuracy_convergence_suff2}
\end{align}
and 
\begin{align}
   & \kappa\frac{LA'}{\mu^2T^2}\leq \frac{\rho}{3}
  \iff T\geq \sqrt{\frac{3\kappa LA'}{\rho\mu^2}}.\label{eq:accuracy_convergence_suff3}
\end{align}

  Therefore, in order to simultaneously satisfy the inequalities in \eqref{eq:accuracy_convergence_suff1},\eqref{eq:accuracy_convergence_suff2}, and \eqref{eq:accuracy_convergence_suff3}, it suffices to have
  \begin{align*}
             & T\geq \frac{192\sqrt{3}L}{\mu p}\ln{\left[\frac{3\kappa r_0L}{\rho p}\right]}-1+ \frac{3\kappa \gradientnoiseavg}{n\mu \rho }+\sqrt{\frac{\kappa LA'}{3\mu^2}}\nonumber \\
    \implies & T> \frac{192\sqrt{3}L}{\mu p}\ln{\left[\frac{3\kappa r_0L}{\rho p}\right]}+ \frac{3\kappa \gradientnoiseavg}{n\mu \rho }+\sqrt{\frac{3\kappa LA'}{\rho\mu^2}}\qedhere
  \end{align*}
\end{proof}
\subsection{Intermediary lemmas proofs}
We state and prove the necessary lemmas for the convergence proof of~\cref{subsec:main convergence proof}.

\begin{restatable}{proposition}{propMiniBatchVariance}\label{prop:mini_batch variance}
    \emph{Mini-batch variance} (Proposition 5 in~\cite{koloskovaUnifiedTheoryDecentralized2020})
    Assume that $\loss{i}$ is $L$-smooth (\cref{ass:smoothness}) with bounded noise at the optimum (\cref{ass:bounded gradient noise}). 
    Then, for any $i\in[\![1,n]\!]$, we have:
    \begin{align*}
        \noindent\condexpect{\datapoint{1}{},\dots,\datapoint{n}{}}{\norm{\frac{1}{n} \sum_{i=1}^{n}(\nabla \sampledloss(\model{i}{}) - \nabla \loss{i}(\model{i}{}, \datapoint{i}{}))}^2}
        \\\leq \frac{3L^2}{n}\sum_{i=1}^{n}\norm{\model{i}{} - \avgmodel{}}^2 + 6L(\localloss{}{}(\avgmodel{}) - \sampledloss{}(\optimum)) + 3\gradientnoiseavg.
    \end{align*}
\end{restatable}

\begin{proof} (\cref{prop:mini_batch variance})
  Nothing changes in this proof compared to the original work, since only the gradient and the loss functions are needed, and averaging rounds are not considered.
\end{proof}

\begin{restatable}{lemma}{lemDescent}\label{lem:descent lemma}
    \emph{Descent lemma for convex cases}. (Lemma 8 of~\cite{koloskovaUnifiedTheoryDecentralized2020})
    Under~\cref{ass:smoothness,ass:mu-convexity,ass:bounded gradient noise,ass:expected consensus rate},
    with stepsize $\lr\leq\frac{1}{12L}$ we have:
    \singlecolumnproof{
    \begin{align*}
        \condexpect{\datapoint{1}{t},\dots,\datapoint{n}{t}}{\norm{\avgmodel{t+1} - \optimum}^2}
        \leq& (1 - \frac{\lr \mu}{2})\norm{\avgmodel{t} - \optimum}^2
        \\&+ \frac{\lr^2\gradientnoiseavg}{n} 
        \\&- \lr(\localloss{}(\avgmodel{t}) - \localloss{}(\optimum)) 
        \\&+ \gamma\frac{3L}{n}\sum_{i=1}^{n}\norm{\avgmodel{t} - \model{i}{t}}^2.
    \end{align*}
    }{%
    \begin{align*}
        \condexpect{\datapoint{1}{t},\dots,\datapoint{n}{t}}{\norm{\avgmodel{t+1} - \optimum}^2}
        \leq& (1 - \frac{\lr \mu}{2})\norm{\avgmodel{t} - \optimum}^2
        \\&+ \frac{\lr^2\gradientnoiseavg}{n} 
        - \lr(\localloss{}(\avgmodel{t}) - \localloss{}(\optimum)) 
        \\&+ \gamma\frac{3L}{n}\sum_{i=1}^{n}\norm{\avgmodel{t} - \model{i}{t}}^2.
    \end{align*}
    }
\end{restatable}%
\begin{proof} (\cref{lem:descent lemma})
  Because of \sys's properties (in particular~\cref{lem:conservation average model}), this property holds almost immediately from Lemma 8 of~\cite{koloskovaUnifiedTheoryDecentralized2020}.
  Using~\cref{lem:conservation average model}, we have:
  \begin{align*}
    \norm{\avgmodel{t+1} - \optimum}^2
    = & \norm{\avgmodel{t+\nicefrac{1}{2}} - \optimum}^2
    \\=& \norm{\avgmodel{t} - \frac{\lr}{n}\sum_{i=1}^{n}\nabla\loss{i}(\model{i}{t},\datapoint{i}{t}) - \optimum}^2
  \end{align*}
  This corresponds to the first line of Lemma 8, so following the proof will yield the same result. More generally, this property would not hold as it stands for a method that only cancels the noise in expectation: because we consider a norm here, this will lead to an additional term equal to the variance of the residual noise on the network, e.g. the variance of the sum of all the noises. If the noises are not correlated, this is an estimator of the original distribution, yielding an additional term. In our case, this term is exactly zero.

\end{proof}

\lemConsensusDistance*
This lemma has an additional last term compared to state-of-the-art \ac{DL}~\cite{koloskovaUnifiedTheoryDecentralized2020}.
It stems from the presence of noise, that shifts local models away from the true average.
\begin{proof} (\cref{lem:rec consensus distance})
  \begin{align*}
    n\consdistance{t}
    = & \sum_{i=1}^{n}\condexpect{t}{\norm{\model{i}{t} - \avgmodel{t}}^2}
    \\= & \sum_{i=1}^{n}\condexpect{t}{\norm{(\model{i}{t} - \avgmodel{t-1}) - (\avgmodel{t} - \avgmodel{t-1})}^2}
    \\\leq& \sum_{i=1}^{n}\condexpect{t}{\norm{(\model{i}{t} - \avgmodel{t-1})}^2}
  \end{align*}
  Where we used that $\sum_{i=1}^{n}\norm{a_i-\bar{a}}^2 \leq \sum_{i=1}^{n}\norm{a_i}^2$.
  Unrolling the model update:
  \begin{align*}
    \model{i}{t}
    = & \sum_{v\in \neighbors{i}{t-1}}\gossmatrix{i,v}{t-1} (\model{v}{t-\nicefrac{1}{2}}  + \zsumnoise{v}{i}{t-1})
    \\= & \sum_{v\in \neighbors{i}{t-1}}\gossmatrix{i,v}{t-1} ((\model{v}{t-1} - \lr \nabla \loss{v}(\model{v}{t-1},\datapoint{v}{t-1}))
    + \zsumnoise{v}{i}{t-1})
    \\= & \sum_{v\in \neighbors{i}{t-1}}(\gossmatrix{i,v}{t-1}(\model{v}{t-1}))
    + \sum_{v\in \neighbors{i}{t-1}}(\gossmatrix{i,v}{t-1} \zsumnoise{v}{i}{t-1})
    \\ & -\sum_{v\in \neighbors{i}{t-1}}(\gossmatrix{i,v}{t-1} \lr \nabla\loss{v}(\model{v}{t-1},\datapoint{v}{t-1}))
  \end{align*}

  This yields, after expanding the recursion and using \cref{lem:norm_doublesum_beta_majoration}, for any $\beta>0$:
  \singlecolumnproof{%
    \begin{align*}
      n\consdistance{t}
      \leq & (1+\beta)\underbrace{\sum_{i=1}^{n}\condexpect{t}{\norm{\sum_{v\in \neighbors{i}{t-1}}T_3}^2}}_{:= T_1}
      \\ &+(1+\beta^{-1})\underbrace{\sum_{i=1}^{n}\condexpect{t}{\norm{\sum_{v\in \neighbors{i}{t-1}}(\gossmatrix{i,v}{t-1} \zsumnoise{v}{i}{t-1})}^2}}_{:= T_2}
    \end{align*}
    where we have
    \begin{align*}
      T_3 := \gossmatrix{i,v}{t-1}\left(\model{v}{t-1} - \lr \nabla\loss{v}(\model{v}{t-1},\datapoint{v}{t-1})\right) - \avgmodel{t-1}
    \end{align*}
  }{%
    \begin{align*}
      n\consdistance{t}
      \leq & \sum_{i=1}^{n}\condexpect{t}{\norm{\sum_{v\in \neighbors{i}{t-1}}(\gossmatrix{i,v}{t-1}\model{v}{t-1})- \avgmodel{t-1} - \sum_{v\in \neighbors{i}{t-1}}(\gossmatrix{i,v}{t-1} \lr \nabla\loss{v}(\model{v}{t-1},\datapoint{v}{t-1}))  + \sum_{v\in \neighbors{i}{t-1}}(\gossmatrix{i,v}{t-1} \zsumnoise{v}{i}{t-1})}^2}
      \\\leq & (1+\beta)\underbrace{\sum_{i=1}^{n}\condexpect{t}{\norm{\sum_{v\in \neighbors{i}{t-1}}(\gossmatrix{i,v}{t-1}\model{v}{t-1}) - \avgmodel{t-1}- \sum_{v\in \neighbors{i}{t-1}}(\gossmatrix{i,v}{t-1} \lr \nabla\loss{v}(\model{v}{t-1},\datapoint{v}{t-1})) }^2}}_{:= T_1}
      \\ &+(1+\beta^{-1})\underbrace{\sum_{i=1}^{n}\condexpect{t}{\norm{\sum_{v\in \neighbors{i}{t-1}}(\gossmatrix{i,v}{t-1} \zsumnoise{v}{i}{t-1})}^2}}_{:= T_2}
    \end{align*}
  }

  Looking at the second term, and using~\cref{lem:norm_of_sum_reduction}:
  \simplifiedproof{%
    \begin{align*}
      T_2
      \leq & \sum_{i=1}^{n} \nodedegree{i} \sum_{v\in \neighbors{i}{t-1}} \condexpect{t}{\norm{\gossmatrix{i,v}{t-1} \zsumnoise{v}{i}{t-1}}^2}
      \\\leq & \sum_{i=1}^{n} \nodedegree{i} \sum_{v\in \neighbors{i}{t-1}} \condexpect{t}{(\gossmatrix{i,v}{t-1})^2\norm{\zsumnoise{v}{i}{t-1}}^2}
      \\\leq & \sum_{i=1}^{n} \nodedegree{i} \sum_{v\in \neighbors{i}{t-1}} \condexpect{t, i\in\neighbors{v}{t-1}}{(\gossmatrix{i,v}{t-1})^2\norm{\zsumnoise{v}{i}{t-1}}^2}
    \end{align*}%
  }{%
    \begin{align*}
      T_2
      \leq & \sum_{i=1}^{n} \nodedegree{i} \sum_{v\in \neighbors{i}{t-1}} \condexpect{t}{\norm{\gossmatrix{i,v}{t-1} \zsumnoise{v}{i}{t-1}}^2}
      \\\leq & \sum_{i=1}^{n} \nodedegree{i} \sum_{v\in \neighbors{i}{t-1}} \condexpect{t}{(\gossmatrix{i,v}{t-1})^2\norm{\zsumnoise{v}{i}{t-1}}^2}
      \\\leq & \sum_{i=1}^{n} \nodedegree{i} \sum_{v\in \neighbors{i}{t-1}} \condexpect{t, i\in\neighbors{v}{t-1}}{(\gossmatrix{i,v}{t-1})^2\norm{\zsumnoise{v}{i}{t-1}}^2}
      \\\leq& \sum_{i=1}^{n} \nodedegree{i} \sum_{v\in \neighbors{i}{t-1}} \condexpect{t, i\in\neighbors{v}{t-1}}{(\gossmatrix{i,v}{t-1})^2\condexpect{\gossmatrix{}{t-1}}{\norm{\zsumnoise{v}{i}{t-1}}^2}}
    \end{align*}%
  }
  Using~\cref{lem:corrected zsum variance} for a fixed gossip matrix, and leveraging $\gossmatrix{i,v}{t} = \gossmatrix{v,i}{t}$ since we assume symmetric matrices, we obtain:

  \singlecolumnproof{%
    \begin{align*}
      T_2
      \leq & \sum_{i=1}^{n} \nodedegree{i} \sum_{v=1}^{n} \condexpect{t, i\in\neighbors{v}{t-1}}{(\gossmatrix{v,i}{t-1})^2\modelsize\zsumvarmulti{v}{i}{t-1}}
      \\\leq & \modelsize\lr^2\sum_{i=1}^{n} \nodedegree{i} \sum_{v=1}^{n} \left(\frac{(\nodedegree{v} - 1) ^2}{\nodedegree{v}^2} +\frac{\nodedegree{v} - 1 }{\nodedegree{v}^2}\right)\ynoisevar{v}
      \\\leq&\modelsize\lr^2\sum_{i=1}^{n} \nodedegree{i} \sum_{v=1}^{n} \left(\frac{(\nodedegree{v} - 1) ^2}{\nodedegree{v}}\right)\ynoisevar{v}
    \end{align*}}{%
    \begin{align*}
      T_2
      \leq & \sum_{i=1}^{n} \nodedegree{i} \sum_{v=1}^{n} \condexpect{t, i\in\neighbors{v}{t-1}}{(\gossmatrix{v,i}{t-1})^2\modelsize\zsumvarmulti{v}{i}{t-1}}
      \\\leq & \modelsize\sum_{i=1}^{n} \nodedegree{i} \sum_{v=1}^{n} \condexpect{t, i\in\neighbors{v}{t-1}}{(\gossmatrix{i,v}{t-1})^2\left(\frac{(\nodedegree{v} - 1) ^2}{\nodedegree{v}^2} +\frac{\sum_{j\in\neighbors{v}{t}, j\neq v}(\gossmatrix{v,j}{t})^2 }{(\nodedegree{v}\gossmatrix{v,i}{t})^2}\right)\lr^2\ynoisevar{v}}
      \\\leq & \modelsize\lr^2\sum_{i=1}^{n} \nodedegree{i} \sum_{v=1}^{n} \condexpect{t, i\in\neighbors{v}{t-1}}{\left(\frac{(\nodedegree{v} - 1) ^2(\gossmatrix{i,v}{t-1})^2}{\nodedegree{v}^2} +\frac{\sum_{j\in\neighbors{v}{t}, j\neq v}(\gossmatrix{j,v}{t})^2 }{\nodedegree{v}^2}\right)\ynoisevar{v}}
      \\\leq & \modelsize\lr^2\sum_{i=1}^{n} \nodedegree{i} \sum_{v=1}^{n} \left(\frac{(\nodedegree{v} - 1) ^2}{\nodedegree{v}^2} +\frac{\nodedegree{v} - 1 }{\nodedegree{v}^2}\right)\ynoisevar{v}
      \\\leq&\modelsize\lr^2\sum_{i=1}^{n} \nodedegree{i} \sum_{v=1}^{n} \left(\frac{(\nodedegree{v} - 1) ^2}{\nodedegree{v}}\right)\ynoisevar{v}
    \end{align*}
  }
  Where we used that $(\gossmatrix{i,v}{})^2\leq 1 $ for all $i,v\in\nodeset$.

  \simplifiedproof{%
    We obtain that $T_1$ is equal to the expectation $\condexpect{t}{}$ of:
    \begin{align*}
      \norm{\gossmatrix{}{t-1}\left(\model{}{t-1} -  \lr \nabla\loss{}(\model{}{t-1},\datapoint{}{t-1})\right) - \avgmodel{t-1}}_F^2
    \end{align*}%
  }{%
    For $T_1$, we obtain that:
    \begin{align*}
      T_1
      = & \condexpect{t}{\norm{\gossmatrix{}{t-1}\left(\model{}{t-1} -  \lr \nabla\loss{}(\model{}{t-1},\datapoint{}{t-1})\right) - \avgmodel{t-1}}_F^2}
    \end{align*}%
  }
  This is the exact notation from~\cite{koloskovaUnifiedTheoryDecentralized2020}, in the proof of the corresponding Lemma (Lemma 9), with the notation $\tau = 1$ (our matrix notation are transposed to theirs).
  By following the same steps, we obtain:
  \singlecolumnproof{
    \begin{align*}
      T_1
      \leq & n(1-\frac{p}{2})\consdistance{t-1}
      + n\frac{p}{16} \consdistance{t-1}
      + n(\gradientnoiseavg+ \frac{18}{p}\gradientnormavg) \lr^2
      \\+ & n\frac{36L}{p}\lr^2 (\localloss{}(\avgmodel{t-1}) - \localloss{}(\optimum))
    \end{align*}
  }{%
    \begin{align*}
      T_1
      \leq & n\left((1-\frac{p}{2})\consdistance{t-1}
      + \frac{p}{16} \consdistance{t-1}
      + \frac{36L}{p}\lr^2 (\localloss{}(\avgmodel{t-1}) - \localloss{}(\optimum))
      + (\gradientnoiseavg+ \frac{18}{p}\gradientnormavg) \lr^2\right)
    \end{align*}
  }

  \simplifiedproof{Plugging $T_1$ and $T_2$ back into the original term, we obtain the desired result.
  }{%
  \singlecolumnproof{
    Plugging $T_1$ and $T_2$ back into the original term, we obtain:
    \begin{align*}
      \consdistance{t}
      \leq & (1+\beta)\left((1-\frac{7p}{16})\consdistance{t-1}
      + \frac{36L}{p}\lr^2 (\localloss{}(\avgmodel{t-1}) - \localloss{}(\optimum))\right)
      \\ & + (1+\beta)\left((\gradientnoiseavg+ \frac{18}{p}\gradientnormavg) \lr^2\right)
      \\ & + (1 + \beta^{-1})\frac{\modelsize\lr^2}{n} \sum_{i=1}^{n} \nodedegree{i} \sum_{v=1}^{n} \left(\frac{{(\nodedegree{v} - 1)}^2}{\nodedegree{v}}\right)\ynoisevar{v}
      \\\leq & (1+\beta)(1-\frac{7p}{16})\consdistance{t-1}
      +  (1+\beta)\frac{36L}{p}\lr^2 (\localloss{}(\avgmodel{t-1}) - \localloss{}(\optimum))
      \\ & +\lr^2 (1+\beta)(\gradientnoiseavg+ \frac{18}{p}\gradientnormavg)
      \\ & + \lr^2 \left((1 + \beta^{-1})\frac{\modelsize}{n} \sum_{i=1}^{n} \nodedegree{i} \sum_{v=1}^{n} \left(\frac{{(\nodedegree{v} - 1)}^2}{\nodedegree{v}}\ynoisevar{v}\right)\right)
    \end{align*}
    For any $\beta>0$, which is the desired result.
  }{
    Plugging $T_1$ and $T_2$ back into the original term, we obtain:
    \begin{align*}
      \consdistance{t}
      \leq & (1+\beta)\left((1-\frac{7p}{16})\consdistance{t-1}
      + \frac{36L}{p}\lr^2 (\localloss{}(\avgmodel{t-1}) - \localloss{}(\optimum))
      + (\gradientnoiseavg+ \frac{18}{p}\gradientnormavg) \lr^2\right)
      \\ & + (1 + \beta^{-1})\frac{\modelsize\lr^2}{n} \sum_{i=1}^{n} \nodedegree{i} \sum_{v=1}^{n} \left(\frac{{(\nodedegree{v} - 1)}^2}{\nodedegree{v}}\right)\ynoisevar{v}
      \\\leq & (1+\beta)(1-\frac{7p}{16})\consdistance{t-1}
      +  (1+\beta)\frac{36L}{p}\lr^2 (\localloss{}(\avgmodel{t-1}) - \localloss{}(\optimum))
      \\ & + \left((1+\beta)(\gradientnoiseavg+ \frac{18}{p}\gradientnormavg)
      + (1 + \beta^{-1})\frac{\modelsize}{n} \sum_{i=1}^{n} \nodedegree{i} \sum_{v=1}^{n} \left(\frac{{(\nodedegree{v} - 1)}^2}{\nodedegree{v}}\ynoisevar{v}\right)\right)\lr^2
    \end{align*}
    For any $\beta>0$, which is the desired result.
  }
  }
\end{proof}
 
\end{document}